\documentclass{article}
\usepackage[preprint]{tmlr}

\usepackage[utf8]{inputenc}
\usepackage[T1]{fontenc}
\usepackage{amsmath,amssymb,amsthm,amsfonts}
\usepackage{mathtools}
\usepackage{booktabs}
\usepackage{graphicx}
\usepackage{xurl}
\usepackage{hyperref}
\usepackage{xcolor}
\usepackage[expansion=false]{microtype}
\usepackage{subcaption}
\usepackage{algorithm}
\usepackage{algpseudocode}
\usepackage{enumitem}
\usepackage{placeins}

\hypersetup{colorlinks=true, linkcolor=blue!65!black,
            citecolor=blue!65!black, urlcolor=blue!65!black}

\newtheorem{theorem}{Theorem}
\newtheorem{lemma}[theorem]{Lemma}
\newtheorem{proposition}[theorem]{Proposition}
\newtheorem{corollary}[theorem]{Corollary}
\theoremstyle{definition}
\newtheorem{definition}[theorem]{Definition}
\newtheorem{remark}[theorem]{Remark}
\newtheorem{problem}[theorem]{Problem}
\newtheorem{conjecture}[theorem]{Conjecture}

\newcommand{\Otilde}{\widetilde{O}}
\newcommand{\Omegatilde}{\widetilde{\Omega}}
\newcommand{\Thetatilde}{\widetilde{\Theta}}
\newcommand{\R}{\mathbb{R}}
\newcommand{\E}{\mathbb{E}}
\newcommand{\norm}[1]{\left\lVert #1 \right\rVert}
\newcommand{\nucnorm}[1]{\left\lVert #1 \right\rVert_*}
\newcommand{\FrecAS}{F_{\mathrm{rec}}^{\mathrm{AS}}}
\newcommand{\ecomp}{\varepsilon_{\mathrm{comp}}}
\newcommand{\ecorr}{\varepsilon_{\mathrm{corr,max}}}
\newcommand{\code}[1]{\texttt{\small #1}}
\newcommand{\Rgeom}{R_{\mathrm{geom}}}
\newcommand{\Rreadout}{R_{\mathrm{readout}}}

\newcommand{\DurEsixMin}{29.9}
\newcommand{\DurTotalMin}{343.8}
\newcommand{\EnvPython}{3.12.10}
\newcommand{\EnvNumpy}{2.4.4}
\newcommand{\EnvScipy}{1.17.1}
\newcommand{\EnvTorch}{2.13.0+cpu}
\newcommand{\EnvCpuCount}{14}
\newcommand{\EoneMeasurements}{720}
\newcommand{\EoneViolations}{0}
\newcommand{\EoneConfigs}{16}
\newcommand{\EoneTrials}{15}
\newcommand{\EoneTiedRatioMin}{1.063}
\newcommand{\EoneTiedRatioMax}{1.999}
\newcommand{\EoneTopD}{128}
\newcommand{\EoneTopF}{2\,048}
\newcommand{\EoneTopTiedRatio}{1.0662}
\newcommand{\EoneTopPinvRatio}{1.0010}
\newcommand{\EoneGaussianRatioMax}{3.4e+08}
\newcommand{\EoneTightFrameDev}{4e-16}
\newcommand{\EoneTightFrameConfigs}{16}
\newcommand{\EtwoTrials}{200}
\newcommand{\EtwoThetaCount}{3}
\newcommand{\EtwoSixFourSNF}{1}
\newcommand{\EtwoTwoFiveSixSNF}{2}
\newcommand{\EtwoTwoFiveSixLinRms}{0.0884}
\newcommand{\EtwoTwoFiveSixLinRmsFloor}{0.0882}
\newcommand{\EtwoTwoFiveSixLinRmsOverRoot}{1.41}
\newcommand{\EtwoThetaZeroPFourSNF}{1}
\newcommand{\EtwoThetaZeroPFiveSNF}{2}
\newcommand{\EtwoThetaZeroPSixSNF}{4}
\newcommand{\EtwoThetaDmax}{256}
\newcommand{\EthreeConfigs}{24}
\newcommand{\EthreeViolations}{0}
\newcommand{\EthreeRatioMin}{0.9735}
\newcommand{\EthreeRatioMax}{1.0088}
\newcommand{\EthreeMcMaxRelDiff}{0.0013}
\newcommand{\EthreeMcTrials}{20\,000}
\newcommand{\EthreeUniformOverFloorMin}{1.1395}
\newcommand{\EthreeUniformOverFloorMax}{2.2165}
\newcommand{\EfourRuns}{168}
\newcommand{\EfourSteps}{20\,000}
\newcommand{\EfourConfigs}{12}
\newcommand{\EfourFreeRuns}{60}
\newcommand{\EfourFreeRatioMean}{1.0006}
\newcommand{\EfourFreeRatioMax}{1.0176}
\newcommand{\EfourFreeBelowFloor}{0}
\newcommand{\EfourTiedRuns}{36}
\newcommand{\EfourTiedRatioMean}{1.0000}
\newcommand{\EfourTiedRatioMax}{1.0000}
\newcommand{\EfourTiedBelowFloor}{0}
\newcommand{\EfourSoftmaxRuns}{36}
\newcommand{\EfourSoftmaxRatioMean}{1.5050}
\newcommand{\EfourSoftmaxRatioMax}{3.5230}
\newcommand{\EfourSoftmaxBelowFloor}{0}
\newcommand{\EfourTiedFrameResidualMean}{0.0013}
\newcommand{\EfourTiedFrameResidualMax}{0.0016}
\newcommand{\EfourSoftmaxRatioMaxMean}{3.125}
\newcommand{\EfourUncalRawRatioMean}{6.61e-09}
\newcommand{\EfourUncalGainMin}{0}
\newcommand{\EfourUncalRuns}{36}
\newcommand{\EfourUncalDegenerate}{25}
\newcommand{\EfourUncalBelowFloor}{0}
\newcommand{\EfiveD}{50}
\newcommand{\EfiveF}{100}
\newcommand{\EfiveSeeds}{5}
\newcommand{\EfiveSteps}{50\,000}
\newcommand{\EfiveEvalTrials}{400}
\newcommand{\EfivePrimaryP}{0.010}
\newcommand{\EfiveWelchFloor}{0.01010}
\newcommand{\EfiveLfourRatioLsMean}{1.0216}
\newcommand{\EfiveLfourRatioLsMin}{1.0206}
\newcommand{\EfiveLfourRatioLsMax}{1.0223}
\newcommand{\EfiveLfourRatioPinvMean}{1.0006}
\newcommand{\EfiveLfourRatioPinvMin}{1.0005}
\newcommand{\EfiveLfourRatioPinvMax}{1.0006}
\newcommand{\EfiveLfourRatioWoutMean}{1.0260}
\newcommand{\EfiveLfourSmodelMax}{3}
\newcommand{\EfiveLfourNegPre}{48.6}
\newcommand{\EfiveLfourAgreeLinear}{21.8}
\newcommand{\EfiveLfourWoutPinvDist}{0.726}
\newcommand{\EfiveLfourMse}{1.19e-03}
\newcommand{\EfiveLtwoRatioLsMean}{28.8135}
\newcommand{\EfiveLtwoNegPre}{4.6}
\newcommand{\EfiveLtwoAgreeLinear}{99.1}
\newcommand{\EfiveLtwoWoutPinvDist}{0.011}
\newcommand{\EfiveLtwoMse}{8.30e-04}
\newcommand{\EfiveRandRatioLsMean}{1.0550}
\newcommand{\EfiveRandRatioPinvMean}{1.0170}
\newcommand{\EfiveLfourPmodelThree}{1.000}
\newcommand{\EfiveLfourRmsThree}{0.1733}
\newcommand{\EfiveLfourFloorThree}{0.1714}
\newcommand{\EfiveLfourPmodelFour}{0.844}
\newcommand{\EfiveLfourPlinFour}{0.985}
\newcommand{\EfiveLfourPmodelFive}{0.745}
\newcommand{\EfiveLfourPlinFive}{0.294}
\newcommand{\EfiveLfourPmodelSix}{0.426}
\newcommand{\EfiveLfourPlinSix}{0.136}
\newcommand{\EfiveLfourSlinBestMax}{4}
\newcommand{\EfiveLfourSlinAheadSecondary}{3}
\newcommand{\EfiveLtwoPmodelOne}{0.491}
\newcommand{\EfiveRandPlinThree}{0.698}
\newcommand{\EfiveSecondaryP}{0.020}
\newcommand{\EfiveLfourSecRatioLsMean}{1.0217}
\newcommand{\EfiveLfourSecAgreeLinear}{22.3}
\newcommand{\EfiveLtwoSecRatioLsMean}{45.1973}
\newcommand{\EfiveLtwoSecAgreeLinear}{99.7}
\newcommand{\EfiveMwPLfourLtwo}{0.0079}
\newcommand{\EsixDmax}{1\,024}
\newcommand{\EsixFmax}{1\,048\,576}
\newcommand{\EsixOneTwoEightSNF}{1}
\newcommand{\EsixOneZeroTwoFourSNF}{8}
\newcommand{\EsixOneZeroTwoFourTrials}{50}
\newcommand{\EsixOneZeroTwoFourDurationMin}{21.3}
\newcommand{\EsixFitPoints}{4}
\newcommand{\EsixInterpOneTwoEight}{1.18}
\newcommand{\EsixInterpRsqLog}{0.9882}
\newcommand{\EsixInterpRsqLin}{0.9636}
\newcommand{\EsixInterpRsqLogSq}{0.9630}
\newcommand{\EsixInterpC}{0.0563}
\newcommand{\EsixGrowthGrid}{8.00}
\newcommand{\EsixGrowthInterp}{6.89}
\newcommand{\EsixGrowthPredicted}{5.60}
\newcommand{\EsixTopBestWilsonLow}{0.929}
\newcommand{\EsixEnergyTrialsTop}{10}
\newcommand{\GoneLfourRatio}{1.0006}
\newcommand{\GoneLfourRgeom}{1.0006}
\newcommand{\GoneLfourRreadoutPinv}{1.000000}
\newcommand{\GoneLfourRreadoutWout}{1.0254}
\newcommand{\GoneLfourLeverageMin}{0.4801}
\newcommand{\GoneLfourLeverageMax}{0.5192}
\newcommand{\GoneLfourLeverageCv}{0.0169}
\newcommand{\GoneLtwoRatio}{19.0052}
\newcommand{\GoneLtwoRgeom}{19.0052}
\newcommand{\GoneLtwoRreadoutPinv}{1.000000}
\newcommand{\GoneLtwoRreadoutWout}{1.0053}
\newcommand{\GoneLtwoLeverageMin}{0.0092}
\newcommand{\GoneLtwoLeverageMax}{0.9942}
\newcommand{\GoneLtwoLeverageCv}{0.8673}
\newcommand{\GoneRandRatio}{1.0170}
\newcommand{\GoneRandRgeom}{1.0170}
\newcommand{\GoneRandRreadoutPinv}{1.000000}
\newcommand{\GoneRandRreadoutWout}{1.0000}
\newcommand{\GoneRandLeverageMin}{0.3927}
\newcommand{\GoneRandLeverageMax}{0.6263}
\newcommand{\GoneRandLeverageCv}{0.0918}
\newcommand{\GoneLtwoRgeomDenseP}{29.1032}
\newcommand{\SALfourFiftyRgeom}{1.0006}
\newcommand{\SALtwoFiftyRgeom}{18.5635}
\newcommand{\SARandFiftyRgeom}{1.0187}
\newcommand{\SALfourHundredRgeom}{1.0003}
\newcommand{\SALtwoHundredRgeom}{24.9782}
\newcommand{\SARandHundredRgeom}{1.0099}
\newcommand{\SALfourTwoHundredRgeom}{1.0002}
\newcommand{\SALtwoTwoHundredRgeom}{29.8734}
\newcommand{\SARandTwoHundredRgeom}{1.0049}
\newcommand{\SALfourFiftyRreadout}{1.0263}
\newcommand{\PrimLfourHundredPostDiff}{0.35}
\newcommand{\PrimLfourTwoHundredPostDiff}{0.35}
\newcommand{\PrimModels}{120}
\newcommand{\PrimTies}{103}
\newcommand{\PrimNetWins}{15}
\newcommand{\PrimProbeWins}{2}
\newcommand{\PrimLtwoJointFailures}{60}
\newcommand{\PrimLfourModels}{60}
\newcommand{\PrimLfourTies}{43}
\newcommand{\EsevenSeeds}{20}
\newcommand{\AucFiftyDiff}{0.0053}
\newcommand{\AucFiftyCiLow}{0.0038}
\newcommand{\AucFiftyCiHigh}{0.0068}
\newcommand{\AucFiftyNetWins}{19}
\newcommand{\AucHundredDiff}{0.0210}
\newcommand{\AucHundredCiLow}{0.0193}
\newcommand{\AucHundredCiHigh}{0.0226}
\newcommand{\AucHundredNetWins}{20}
\newcommand{\AucTwoHundredDiff}{0.0379}
\newcommand{\AucTwoHundredCiLow}{0.0365}
\newcommand{\AucTwoHundredCiHigh}{0.0394}
\newcommand{\AucTwoHundredNetWins}{20}
\newcommand{\AucFourHundredDiff}{0.0699}
\newcommand{\AucFourHundredCiLow}{0.0670}
\newcommand{\AucFourHundredCiHigh}{0.0726}
\newcommand{\AucFourHundredNetWins}{10}
\newcommand{\PrimRandFiftyDiff}{-1.00}
\newcommand{\PrimRandFiftyAucDiff}{-0.2122}
\newcommand{\PrimRandHundredDiff}{-2.00}
\newcommand{\PrimRandTwoHundredDiff}{-2.30}
\newcommand{\PrimRandTwoHundredProbeWins}{20}
\newcommand{\PrimRandFourHundredDiff}{-2.60}
\newcommand{\PrimRandFourHundredProbeWins}{10}
\newcommand{\PrimRandFourHundredAucDiff}{-0.1782}
\newcommand{\SbSeeds}{10}
\newcommand{\SbLoadHigh}{4}
\newcommand{\SbLoadMatched}{2}
\newcommand{\SbLfourHundredDiff}{0.70}
\newcommand{\SbLfourHundredCiLow}{0.40}
\newcommand{\SbLfourHundredCiHigh}{1.00}
\newcommand{\SbLfourHundredNetWins}{7}
\newcommand{\SbLfourTwoHundredDiff}{1.00}
\newcommand{\SbLfourTwoHundredCiLow}{0.70}
\newcommand{\SbLfourTwoHundredCiHigh}{1.30}
\newcommand{\SbLfourTwoHundredNetWins}{9}
\newcommand{\SbLoadLfourFiftyDiff}{0.00}
\newcommand{\SbLoadLfourTwoHundredDiff}{0.40}
\newcommand{\SbRandHundredDiff}{-1.80}
\newcommand{\SbRandTwoHundredDiff}{-2.00}
\newcommand{\SbLtwoJointFailures}{40}
\newcommand{\SbLtwoCells}{4}
\newcommand{\SbKappaLfourHundred}{4.3072}
\newcommand{\SbKappaLfourTwoHundred}{5.6212}
\newcommand{\SbKappaLoadLtwoTwoHundred}{1.0031}
\newcommand{\SbMatchPoneKappa}{8.0352}
\newcommand{\SbMatchPoneKappaLo}{7.9971}
\newcommand{\SbMatchPoneKappaHi}{8.0787}
\newcommand{\SbMatchPtwoKappa}{7.6479}
\newcommand{\SbMatchPtwoKappaLo}{7.5971}
\newcommand{\SbMatchPtwoKappaHi}{7.7063}
\newcommand{\SbMatchFiftyPoneKappa}{4.4873}
\newcommand{\SbMatchFiftyPtwoKappa}{4.5074}
\newcommand{\PrimLfourFiftyPreDiff}{-0.10}
\newcommand{\PrimLfourFiftyPreCiLow}{-0.25}
\newcommand{\PrimLfourFiftyPreCiHigh}{0.00}
\newcommand{\PrimLfourFiftyPreTies}{18}
\newcommand{\PrimRandFiftyPreDiff}{-2.45}
\newcommand{\PrimLfourHundredPreDiff}{-0.25}
\newcommand{\PrimRandHundredPreDiff}{-4.05}
\newcommand{\PrimLfourTwoHundredPreDiff}{-0.25}
\newcommand{\PrimRandTwoHundredPreDiff}{-6.55}
\newcommand{\PrimLfourFourHundredPreDiff}{0.60}
\newcommand{\PrimLfourFourHundredPreNetWins}{7}
\newcommand{\PrimRandFourHundredPreDiff}{-10.90}
\newcommand{\PrimRandFourHundredPreProbeWins}{10}
\newcommand{\FullLfourFiftyKappaMin}{4.4873}
\newcommand{\FullLtwoFiftyKappaMin}{1.0000}
\newcommand{\FullRandFiftyKappaMin}{3.1612}
\newcommand{\FullLfourHundredKappaMin}{6.0155}
\newcommand{\FullLtwoHundredKappaMin}{1.0016}
\newcommand{\FullRandHundredKappaMin}{4.4289}
\newcommand{\FullLfourTwoHundredKappaMin}{8.0352}
\newcommand{\FullLtwoTwoHundredKappaMin}{1.0146}
\newcommand{\FullRandTwoHundredKappaMin}{6.2377}
\newcommand{\FullLfourTwoHundredSubsetOver}{0.0206}
\newcommand{\FullLfourTwoHundredArgminFound}{6}
\newcommand{\FullLfourTwoHundredArgminRankMed}{38}
\newcommand{\ReluGapSmax}{12}
\newcommand{\ReluGapDraws}{1\,500}
\newcommand{\ReluGapFeatures}{30}
\newcommand{\ReluLooseMin}{0.00}
\newcommand{\ReluLooseMax}{7.93}
\newcommand{\ReluLfourFiftyGap}{0.87}
\newcommand{\ReluLfourFiftyCiLow}{0.63}
\newcommand{\ReluLfourFiftyCiHigh}{1.10}
\newcommand{\ReluLfourFiftyPositive}{22}
\newcommand{\ReluRandFiftyGap}{2.03}
\newcommand{\ReluRandFiftyCiLow}{1.77}
\newcommand{\ReluRandFiftyCiHigh}{2.27}
\newcommand{\ReluRandFiftyPositive}{29}
\newcommand{\ReluRandHundredGap}{4.33}
\newcommand{\ReluRandHundredCiLow}{4.07}
\newcommand{\ReluRandHundredCiHigh}{4.57}
\newcommand{\ReluRandHundredPositive}{30}
\newcommand{\ReluCensoredCells}{3}
\newcommand{\KdThreeMaxLevMin}{0.4885}
\newcommand{\KdThreeModels}{180}
\newcommand{\FrontierLpSecondsTwoHundred}{0.32}
\newcommand{\CeilLfourFiftyValGap}{0.0039}
\newcommand{\CeilLfourFiftyValCiLow}{0.0025}
\newcommand{\CeilLfourFiftyValCiHigh}{0.0055}
\newcommand{\CeilLfourHundredValGap}{0.0222}
\newcommand{\CeilLfourHundredValCiLow}{0.0195}
\newcommand{\CeilLfourHundredValCiHigh}{0.0249}
\newcommand{\CeilLfourTwoHundredValGap}{0.0368}
\newcommand{\CeilLfourTwoHundredValCiLow}{0.0351}
\newcommand{\CeilLfourTwoHundredValCiHigh}{0.0386}
\newcommand{\CeilLfourFourHundredValGap}{0.0715}
\newcommand{\CeilLfourFourHundredValCiLow}{0.0677}
\newcommand{\CeilLfourFourHundredValCiHigh}{0.0747}
\newcommand{\CeilSbLfourHundredValGap}{0.0261}
\newcommand{\CeilSbLfourTwoHundredValGap}{0.0520}
\newcommand{\CeilRandFiftyValGap}{-0.2180}
\newcommand{\CeilRandTwoHundredValGap}{-0.1829}
\newcommand{\CeilRandFourHundredValGap}{-0.1813}
\newcommand{\CeilLfourCells}{6}
\newcommand{\CeilLfourModels}{80}
\newcommand{\CeilLfourMissed}{80}
\newcommand{\CeilRandModels}{90}
\newcommand{\CeilRandMissed}{0}
\newcommand{\CeilReproMaxDelta}{0.00}
\newcommand{\CeilReproCells}{22}
\newcommand{\EeightTrainedFiftyRgeom}{1.0006}
\newcommand{\EeightTrainedFiftyKappa}{4.468}
\newcommand{\EeightTrainedFiftyRreadout}{1.0258}
\newcommand{\EeightFrozenFiftyRgeom}{1.0427}
\newcommand{\EeightFrozenFiftyKappa}{2.880}
\newcommand{\EeightFrozenFiftyRreadout}{1.8978}
\newcommand{\EeightRandomFiftyRgeom}{1.0201}
\newcommand{\EeightRandomFiftyKappa}{3.106}
\newcommand{\EeightRandomFiftyRreadout}{1.0000}
\newcommand{\EeightTrainedHundredRgeom}{1.0003}
\newcommand{\EeightTrainedHundredKappa}{6.014}
\newcommand{\EeightTrainedHundredRreadout}{1.0228}
\newcommand{\EeightFrozenHundredRgeom}{1.0209}
\newcommand{\EeightFrozenHundredKappa}{4.122}
\newcommand{\EeightFrozenHundredRreadout}{1.9217}
\newcommand{\EeightRandomHundredRgeom}{1.0100}
\newcommand{\EeightRandomHundredKappa}{4.373}
\newcommand{\EeightRandomHundredRreadout}{1.0000}
\newcommand{\EeightTrainedTwoHundredRgeom}{1.0002}
\newcommand{\EeightTrainedTwoHundredKappa}{8.060}
\newcommand{\EeightTrainedTwoHundredRreadout}{1.0185}
\newcommand{\EeightFrozenTwoHundredRgeom}{1.0097}
\newcommand{\EeightFrozenTwoHundredKappa}{5.874}
\newcommand{\EeightFrozenTwoHundredRreadout}{1.9201}
\newcommand{\EeightRandomTwoHundredRgeom}{1.0048}
\newcommand{\EeightRandomTwoHundredKappa}{6.272}
\newcommand{\EeightRandomTwoHundredRreadout}{1.0000}
\newcommand{\EeightSeeds}{10}
\newcommand{\ScSeeds}{10}
\newcommand{\EsevenSteps}{50\,000}
\newcommand{\ScLfourKappa}{10.5098}
\newcommand{\ScLfourRgeom}{1.0001}
\newcommand{\ScLtwoKappa}{1.0653}
\newcommand{\ScLtwoRgeom}{32.3131}
\newcommand{\ScRandKappa}{8.9069}
\newcommand{\ScRandRgeom}{1.0025}
\newcommand{\ScRegistered}{10.79}
\newcommand{\ScMeasured}{10.510}
\newcommand{\ScRegisteredErrorPct}{-2.60}
\newcommand{\ScFitExponent}{0.4202}
\newcommand{\ScPredicted}{10.759}
\newcommand{\ScPredErrorPct}{-2.31}
\newcommand{\MarginFiftyGap}{1.3260}
\newcommand{\MarginFiftyRatio}{1.4195}
\newcommand{\MarginHundredGap}{1.5866}
\newcommand{\MarginHundredRatio}{1.3582}
\newcommand{\MarginTwoHundredGap}{1.7974}
\newcommand{\MarginTwoHundredRatio}{1.2882}
\newcommand{\MarginFourHundredGap}{1.6029}
\newcommand{\MarginFourHundredRatio}{1.1800}
\newcommand{\MarginDrop}{0.1940}
\newcommand{\MarginDropCiLow}{0.0903}
\newcommand{\MarginDropCiHigh}{0.2818}
\newcommand{\ScExpTrained}{0.4101}
\newcommand{\ScExpRandom}{0.4977}
\newcommand{\ScCrossing}{3\,039}
\newcommand{\SubsetFiftyOver}{0.0025}
\newcommand{\SubsetFiftyRank}{5}
\newcommand{\SubsetHundredOver}{0.0118}
\newcommand{\SubsetHundredRank}{23}
\newcommand{\SubsetTwoHundredOver}{0.0206}
\newcommand{\SubsetTwoHundredRank}{38}
\newcommand{\SubsetFourHundredOver}{0.0398}
\newcommand{\SubsetFourHundredRank}{131}
\newcommand{\SubsetFourHundredFound}{2}
\newcommand{\SaeQwenDeepestRgeom}{1.0178}
\newcommand{\SaeLfmDeepestRgeom}{1.0240}
\newcommand{\SaeLfmClosestRgeom}{1.0083}
\newcommand{\SaeSmolDeepestRgeom}{1.0319}
\newcommand{\SaeSmolClosestRgeom}{1.0105}
\newcommand{\SaeMlpCount}{16}
\newcommand{\SaeDictRgeomMin}{1.0055}
\newcommand{\SaeDictRgeomMax}{1.2018}
\newcommand{\SaeDictCvMin}{0.090}
\newcommand{\SaeDictCvMax}{0.469}
\newcommand{\SaeCtlRgeomMin}{1.0000}
\newcommand{\SaeCtlRgeomMax}{1.0005}
\newcommand{\SaeCtlCvMin}{0.005}
\newcommand{\SaeCtlCvMax}{0.020}
\newcommand{\SaeRandInitRgeomMin}{1.0038}
\newcommand{\SaeRandInitRgeomMax}{1.0065}
\newcommand{\SaeRandInitCvMin}{0.060}
\newcommand{\SaeRandInitCvMax}{0.078}
\newcommand{\SaeDicts}{40}
\newcommand{\SaeFailEarlier}{40}
\newcommand{\SaeFailGapMin}{4}
\newcommand{\SaeFailGapMax}{40}
\newcommand{\SaeFailTrainedMin}{13}
\newcommand{\SaeFailTrainedMax}{45}
\newcommand{\SaeSaeCount}{24}
\newcommand{\SaeControls}{7}
\newcommand{\SaePairedLayers}{10}
\newcommand{\SaeDMin}{576}
\newcommand{\SaeDMax}{4\,096}
\newcommand{\SaeLoadMin}{4.5}
\newcommand{\SaeLoadMax}{85.3}
\newcommand{\SaeDictsAboveControl}{40}
\newcommand{\SaeLoadOneRgeom}{1.0674}
\newcommand{\SaeLoadOneCv}{0.276}
\newcommand{\SaeLoadOneLoad}{5.3}
\newcommand{\SaeLoadTwoRgeom}{1.0749}
\newcommand{\SaeLoadTwoCv}{0.404}
\newcommand{\SaeLoadTwoLoad}{42.7}
\newcommand{\SaeLoadThreeRgeom}{1.0734}
\newcommand{\SaeLoadThreeCv}{0.469}
\newcommand{\SaeLoadThreeLoad}{85.3}
\newcommand{\SaeIdentityGap}{2e-16}
\newcommand{\SaeIdentityRows}{57}
\newcommand{\ScPrimDiff}{2.00}
\newcommand{\ScPrimNetWins}{10}
\newcommand{\ScPrimCiLow}{1.50}
\newcommand{\ScPrimCiHigh}{2.50}
\newcommand{\ScPrimTestMaxDiff}{2.00}
\newcommand{\EeightTrainedFiftyTaskGain}{6.59}
\newcommand{\EeightTrainedHundredTaskGain}{8.40}
\newcommand{\EeightTrainedTwoHundredTaskGain}{9.71}
\newcommand{\EeightFrozenFiftyTaskGain}{4.39}
\newcommand{\EeightFrozenHundredTaskGain}{4.70}
\newcommand{\EeightFrozenTwoHundredTaskGain}{4.50}
\newcommand{\EeightFrozenTwoHundredTaskWins}{10}
\newcommand{\EeightRandomTwoHundredTaskGain}{1.00}

\title{Near-Floor Geometry Is Generic: Leverage Dispersion\\in Trained Overcomplete Codes}

\author{%
  \name Hector Borobia \email hecboar@doctor.upv.es \\
  \addr VRAIN, Universitat Polit\`ecnica de Val\`encia, Spain
  \AND
  \name Elies Segu\'i-Mas \email esegui@upvnet.upv.es \\
  \addr Universitat Polit\`ecnica de Val\`encia, Spain
  \AND
  \name Guillermina Tormo-Carb\'o \email gtormo@upvnet.upv.es \\
  \addr Universitat Polit\`ecnica de Val\`encia, Spain%
}

\def\month{08}
\def\year{2026}
\def\openreview{\url{https://openreview.net/forum?id=XXXXXXXXXX}}

\begin{document}
\maketitle

\begin{abstract}
An overcomplete code packs $F$ features into $d<F$ dimensions, so linear readout of one
feature picks up cross-talk from the others, bounded below by the rank--trace floor
$W(F,d)=(F-d)/(d(F-1))$. Proximity to this floor is read as evidence of an efficient arrangement. It
is not: an i.i.d.\ code of the same shape attains it at every shape and load we measure
(\SaeCtlRgeomMin--\SaeCtlRgeomMax{} over \SaeControls{} matched controls), and the reason is exact.
For the gain-calibrated pseudoinverse the attainment ratio is
$\frac{d}{F(F-d)}(\sum_i h_i^{-1}-F)$ with $\sum_i h_i=d$, so it equals one precisely when the
leverage $h_i$ is equalised across features: distance from the floor \emph{is} leverage dispersion.

On \SaeDicts{} released decoder matrices --- \SaeSaeCount{} sparse-autoencoder decoders plus
\SaeMlpCount{} MLP down-projections of a hybrid model --- all sit \emph{above} their matched control
(\SaeDictRgeomMin--\SaeDictRgeomMax), and all reach guaranteed affine failure \SaeFailGapMin{} to
\SaeFailGapMax{} sparsity levels earlier: where a random code of the same shape is still safe, a
released dictionary already has a feature no affine rule recovers. Autoencoders trained by the same
recipe on a \emph{randomly initialised} copy of the same model return to the floor
(\SaeRandInitRgeomMin--\SaeRandInitRgeomMax) in all \SaePairedLayers{} paired layers, so the excess
reflects what the model learned, not the act of fitting an autoencoder. Under controlled conditions
the loss decides whether an interface exists at all: $L^2$ reaches
$R_{\mathrm{geom}}=\SALtwoFiftyRgeom$ to \ScLtwoRgeom{} with its frontier collapsed to $1$, against
$\SALfourTwoHundredRgeom$ for $L^4$, and that frontier follows a rate we registered before the widest
run (\ScRegistered{} predicted, \ScMeasured{} measured).

Two limits are stated rather than implied: frontier statements are worst cases, not typical-case
claims; and because the probe class contains the network's own decoder, our decoder comparison
bounds supervised fitting rather than affine expressiveness.

\end{abstract}

\section{Introduction}
\label{sec:intro}

An overcomplete code places $F$ feature directions in $d<F$ dimensions. Such codes are the object of
study in work on superposition and computation in superposition, and they are what a sparse
autoencoder returns when it decomposes a language model's activations into a dictionary of feature
directions. Whatever else one wants from such a code, a great deal of downstream practice reads
features out of it \emph{linearly}: linear probes, the decoder of the autoencoder itself, steering
vectors, and any diagnostic built from inner products with feature directions.

Linear readout of an overcomplete code cannot be exact. With $F>d$ the directions cannot be
orthogonal, so recovering one feature's coefficient picks up contributions from the others, and the
average squared size of that cross-talk is bounded below by a quantity that depends only on the two
counts,
\[
  W(F,d)\;=\;\frac{F-d}{d(F-1)} .
\]
Reporting how close a code comes to this rank--trace floor is therefore a natural way to ask whether
it is arranged well for linear readout, and ratios of this kind have been read as evidence that
training discovers efficient geometry.

\paragraph{What a reader should take from this paper.} Proximity to the floor is not evidence of
anything a code learned, because it is what an unstructured code gives for free; the quantity that
distance from the floor actually measures is the \emph{dispersion of leverage} across features; and
measured this way, no released sparse-autoencoder dictionary we examined is near the floor --- every
one carries more cross-talk under the best linear readout it admits than a random code of its own
shape does, while an
autoencoder trained by the same recipe on a randomly initialised model is not. The practical
consequence is that the excess is a property of the represented content rather than of the fitting
procedure, and that it moves with depth but not with dictionary width.

\subsection{Why proximity to the floor says less than it appears to}

Our starting point is an identity, not an experiment. For a code $\Phi\in\R^{d\times F}$ read out by
its gain-calibrated pseudoinverse, the ratio of achieved cross-talk to the floor is
\[
  \Rgeom(\Phi)\;=\;\frac{d}{F(F-d)}\left(\sum_{i=1}^{F}\frac{1}{h_i}-F\right),
  \qquad h_i=\phi_i^{\top}(\Phi\Phi^{\top})^{-1}\phi_i ,
\]
and the leverage scores obey $\sum_i h_i=d$ exactly. Cauchy--Schwarz then gives
$\sum_i h_i^{-1}\ge F^2/d$ with equality precisely when every $h_i$ equals $d/F$, so
$\Rgeom\ge1$ with equality exactly when leverage is equalised across features
(Proposition~\ref{prop:leverage}). Distance from the floor is not a separate property that a code
might or might not have alongside its leverage profile: \emph{it is} that profile's dispersion, in a
specific harmonic sense that we make precise and then use.

Two consequences follow immediately and are worth stating before any measurement. First, a code whose
leverage is nearly uniform sits at the floor whether or not it encodes anything, which is why an
i.i.d.\ Gaussian code attains it. Second, the ratio is driven by $\sum_i h_i^{-1}$, a harmonic
quantity, so the spread of leverage can grow substantially without the ratio moving --- a
dissociation we observe and would otherwise have had to report as an anomaly.

\subsection{What we measure, and on what}

We evaluate the diagnostic on two kinds of object, and the division of labour between them is
deliberate.

\emph{Released dictionaries} (Section~\ref{sec:dictionaries}) test whether any of this transfers
beyond a controlled setting. We take the decoder matrices of \SaeDicts{} published sparse
autoencoders --- five model families and three independent training recipes, with $d$ from \SaeDMin{}
to \SaeDMax{} and feature loads from \SaeLoadMin$\times$ to \SaeLoadMax$\times$ --- and compute
$\Rgeom$, the leverage distribution, and the failure threshold from the decoder alone. No model
weights, no activations, no training, and no linear programme are involved, so the whole section
costs seconds per dictionary and can be re-run by a reader.

\emph{Controlled campaigns} (Section~\ref{sec:campaigns}) supply what released artefacts cannot: an
intervention. We train 400 small networks in which the loss, the width, the feature load and the
training sparsity are set by us, so that the effect of each on the resulting code can be measured
rather than inferred, and we retain every trained model so that later analyses need no retraining.

\subsection{Contributions}

\begin{enumerate}[leftmargin=*,itemsep=3pt]

\item \textbf{Distance from the rank--trace floor is leverage dispersion, exactly.} The identity is
elementary and we claim no novelty for the mathematics; what is new is that it does predictive work.
It explains why unstructured codes attain the floor, why trained dictionaries do not, and why
dispersion can grow while the ratio stays flat. We verify it against the implementation on
\SaeIdentityRows{} measured codes, with a maximum deviation of \SaeIdentityGap{}.

\item \textbf{Near-floor geometry is generic, and we measure the baseline rather than assert it.}
\SaeControls{} i.i.d.\ controls, matched in shape and operating sparsity to each dictionary, land at
$\Rgeom=\SaeCtlRgeomMin$ to \SaeCtlRgeomMax{} across loads from \SaeLoadMin$\times$ to
\SaeLoadMax$\times$. Any claim that a learned code is close to optimal for linear readout needs this
comparison, and it is not present in the literature we build on.

\item \textbf{No released dictionary we measured is at the floor.} All \SaeDictsAboveControl{} of
\SaeDicts{} sit above their own matched control, at $\Rgeom=\SaeDictRgeomMin$ to
\SaeDictRgeomMax{}, with leverage dispersion \SaeDictCvMin{} to \SaeDictCvMax{} against the
controls' \SaeCtlCvMin{} to \SaeCtlCvMax{}. Whatever these dictionaries optimise, it is not the
evenness of their own leverage.

\item \textbf{The same holds in sparsity units, which is the form a practitioner can use.} The
diagnostic also returns the sparsity at which some feature is guaranteed to be unrecoverable by any
affine rule. All \SaeFailEarlier{} dictionaries reach it \SaeFailGapMin{} to \SaeFailGapMax{} levels
earlier than the i.i.d.\ code of their own shape and operating sparsity: where a random code is still
safe, a released dictionary already has a feature no affine rule recovers. This is a worst-case
statement about one feature rather than an average over all of them, so its agreement with the ratio
is not automatic --- though both are functionals of the same leverage distribution, and neither is an
external endpoint (Section~\ref{sec:discussion}).

\item \textbf{A matched randomly-initialised control locates the cause.} Autoencoders trained by an
identical recipe on a randomly initialised copy of the same model return to the floor
($\Rgeom=\SaeRandInitRgeomMin$ to \SaeRandInitRgeomMax) in all \SaePairedLayers{} paired layers. The
excess therefore reflects what the model learned, not the act of fitting a sparse autoencoder --- a
distinction an i.i.d.\ control cannot draw, because it does not share the training procedure.

\item \textbf{The loss decides whether a linear interface exists at all.} Across four widths, two
feature loads and two training sparsities, $L^2$ training reaches $\Rgeom=\SALtwoFiftyRgeom$ to
\ScLtwoRgeom{} with its affine recovery frontier collapsed to $1$, while $L^4$ holds
$\SALfourTwoHundredRgeom$ with a frontier that grows from \FullLfourFiftyKappaMin{} to
\ScMeasured{}. This is the largest effect we report and the one that replicates most widely.

\item \textbf{The frontier follows a rate, tested out of sample, and the advantage it buys stops
growing.} Fitted on three widths the frontier implied \ScRegistered{} at $d=400$; we registered that
number in the run configuration before the width was trained and measured \ScMeasured{}. Over the
same four widths the advantage over an untrained code narrows from \MarginFiftyRatio{} to
\MarginFourHundredRatio{}, which we report as a limitation on the preceding claim rather than
alongside it.

\item \textbf{An artefact that can be checked.} The weights of all 400 trained networks are
released, along with the SHA-256 of every third-party dictionary analysed, a decision log recording
what was registered before each campaign, and a register of nine defects found during this work
together with what each invalidated. Section~\ref{sec:limitations} states which readings we
withdrew and why.

\end{enumerate}

\subsection{Scope}

Three limits apply throughout and we state them here rather than at the end. Every statement about
the affine recovery frontier is a worst case over feature amplitudes: it says what no affine rule can
guarantee, not what fails in typical use. The comparison between a network's own decoder and a fitted
affine probe (Section~\ref{sec:probes}) measures what supervised fitting recovers rather than what an
affine map can express, because the probe class contains the network's decoder --- and we verify
that the class membership is not hypothetical: the selection misses the network's own map, on the
very validation objective it maximises, in every $L^4$ model above the tie width. We report the
comparison for completeness and build nothing on it. And the controlled campaigns use one task family at widths up
to $d=400$, so they establish the mechanism, not its magnitude in a deployed model --- which is
precisely why the dictionary measurements are the other half of the paper.

\section{Problem setting and notation}
\label{sec:setting}

We write $\Otilde(\cdot),\Omegatilde(\cdot),\Thetatilde(\cdot)$ to hide factors polynomial in
$\log d$. Table~\ref{tab:notation} fixes the notation used throughout.

\begin{table}[t]
\centering
\caption{Notation.}
\label{tab:notation}
\small
\begin{tabular}{ll}
\toprule
symbol & meaning \\
\midrule
$d$ & activation width (number of neurons) \\
$F$ & number of features, $F>d$ \\
$\Phi\in\R^{d\times F}$ & code; column $\phi_i$ is the direction of feature $i$ \\
$G\in\R^{F\times d}$ & linear readout \\
$M=G\Phi\in\R^{F\times F}$ & readout interface, $\operatorname{rank}(M)\le d$ \\
$D=\operatorname{diag}(M_{11},\dots,M_{FF})$ & diagonal gains; $D^{-1}M$ is the calibrated interface \\
$A=M-I_F$ & cross-talk matrix (zero diagonal after calibration) \\
$b\in\{0,1\}^F$ & sparse Boolean state; $S=\operatorname{supp}(b)$, $s=|S|$ \\
$\mu=\max_{i\ne j}|\langle\phi_i,\phi_j\rangle|$ & coherence of a unit-norm code \\
$W(F,d)=\frac{F-d}{d(F-1)}$ & mean-square Welch-type floor \\
$\widehat W$ & measured mean squared off-diagonal cross-talk \\
$\theta$ & decoding threshold (default $1/2$) \\
$p_{\mathrm{rec}}(s)$ & probability of exact threshold recovery at sparsity $s$ \\
$s_{95}$ & largest $s$ such that $p_{\mathrm{rec}}\ge0.95$ for \emph{every} tested $s'\le s$ \\
$\Sigma=\Phi\Phi^{\top}$ & frame operator (invertible iff $\Phi$ has full row rank) \\
$h_i=\phi_i^{\top}\Sigma^{-1}\phi_i$ & leverage of feature $i$; $\sum_ih_i=d$ \\
$W_{\star}(\Phi)$ & the code's own floor: least mean-square cross-talk it admits \\
$R_{\mathrm{geom}},R_{\mathrm{readout}}$ & $W_{\star}(\Phi)/W(F,d)$ and $W(G,\Phi)/W_{\star}(\Phi)$ \\
$\Phi_{-i}$ & $\Phi$ with column $i$ deleted \\
$\alpha\in(0,1]$ & smallest detectable active amplitude \\
$\kappa_i(\alpha)$ & collision frontier of feature $i$; separable at $s$ iff $\kappa_i>s$ \\
$\delta_i(s),\gamma_i(s)$ & distance between the two hulls, and the margin $\delta_i(s)/2$ \\
$m,m'$ & feature counts \emph{inside imported statements}; $m'$ plays the role of $F$ \\
\bottomrule
\end{tabular}
\end{table}

Two decoding questions are compared throughout. \emph{Analog reconstruction} asks that the
scores $z=Mb$ approximate $b$ in a norm; \emph{support recovery} asks only that
$\mathbf 1\{z_i\ge\theta\}=b_i$ for every $i$. The first is a statement about magnitudes and is
never exactly satisfiable when $F>d$; the second is a statement about a decision boundary and
can be satisfied exactly --- by a thresholded linear score as much as by anything else, which
is why that rule is measured here too rather than treating support recovery as the preserve of
nonlinearity. Everything below is an elaboration of that asymmetry.


\section{The linear-readout floor}
\label{sec:floor}

\begin{theorem}[Unit-diagonal cross-Gramian Welch floor]
\label{thm:floor}
Let $F\ge2$, let $\Phi\in\R^{d\times F}$ and $G\in\R^{F\times d}$, and set $M=G\Phi$. Suppose
$M_{ii}=1$ for all $i\in[F]$. Then
\begin{equation}
  \sum_{i\neq j}|M_{ij}|^2\;\ge\;\frac{F(F-d)}{d}.
  \label{eq:floorsum}
\end{equation}
Consequently, if $F>d$,
\begin{equation}
  \frac{1}{F(F-1)}\sum_{i\neq j}|M_{ij}|^2\;\ge\;W(F,d):=\frac{F-d}{d(F-1)},
  \label{eq:floor}
\end{equation}
and hence $\max_{i\neq j}|M_{ij}|\ge\sqrt{W(F,d)}$. In particular, if $F/d\to\infty$ then
$\max_{i\neq j}|M_{ij}|\ge(1-o(1))\,d^{-1/2}$.
\end{theorem}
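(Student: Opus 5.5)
The plan is the standard rank--trace argument, applied to $M$ directly rather than to a Gram matrix. Since $M=G\Phi$ factors through $\R^d$, we have $\operatorname{rank}(M)\le d$. The unit-diagonal hypothesis gives $\operatorname{tr}(M)=F$. The quantity we must bound below is the off-diagonal part of the Frobenius norm,
\[
  \sum_{i\ne j}|M_{ij}|^2=\|M\|_F^2-\sum_i M_{ii}^2=\|M\|_F^2-F .
\]
So it suffices to show $\|M\|_F^2\ge F^2/d$.

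This is the one step needing care, because $M$ need not be symmetric or positive semidefinite, so we cannot argue through eigenvalues. I will use singular values and the nuclear norm instead. Let $\sigma_1,\dots,\sigma_r$, with $r\le d$, be the nonzero singular values of $M$. Von Neumann's trace inequality, or equivalently the duality $|\operatorname{tr}(M)|=|\langle M,I\rangle|\le\nucnorm{M}\,\|I\|_{\mathrm{op}}$, gives $F=\operatorname{tr}(M)\le\nucnorm{M}=\sum_k\sigma_k$. Cauchy--Schwarz over the at most $d$ nonzero singular values then gives
\[
  F^2\le\Bigl(\sum_k\sigma_k\Bigr)^2\le r\sum_k\sigma_k^2\le d\,\|M\|_F^2 .
\]
Hence $\sum_{i\ne j}|M_{ij}|^2\ge F^2/d-F=F(F-d)/d$, which is \eqref{eq:floorsum}. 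This holds for all $F\ge2$; when $F\le d$ the bound is trivially true.

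The remaining claims follow quickly. Dividing \eqref{eq:floorsum} by the $F(F-1)$ off-diagonal entries gives \eqref{eq:floor}. The maximum of $|M_{ij}|^2$ over $i\ne j$ is at least its mean, so $\max_{i\ne j}|M_{ij}|\ge\sqrt{W(F,d)}$. For the asymptotic statement, write
\[
  W(F,d)=\frac1d\cdot\frac{1-d/F}{1-1/F}.
\]
If $F/d\to\infty$ then $d/F\to0$ and $1/F\to0$, so $W(F,d)=(1-o(1))/d$. Taking square roots gives $(1-o(1))d^{-1/2}$.

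I expect no real obstacle. The only point to watch is using the nuclear norm, not the spectrum, because $M$ is not assumed Hermitian. I would also allow complex entries in passing, since the statement writes $|M_{ij}|^2$; the argument is unchanged, using $|\operatorname{tr}M|$.
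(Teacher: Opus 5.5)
Your proof is correct and is essentially the paper's argument: $\operatorname{rank}(M)\le d$, $\operatorname{tr}(M)=F$, the chain $|\operatorname{tr}(M)|\le\nucnorm{M}\le\sqrt d\,\norm{M}_F$ taken through singular values rather than eigenvalues, and the expansion $\norm{M}_F^2=F+\sum_{i\neq j}|M_{ij}|^2$. The paper's appendix likewise stresses that $\operatorname{tr}(M)=\sum_k\sigma_k$ is not used, since $M$ is not assumed positive semidefinite, and your averaging, max-versus-mean and asymptotic steps match its derivation.
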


\begin{proof}
Since $M=G\Phi$ with $G\in\R^{F\times d}$ and $\Phi\in\R^{d\times F}$ we have
$\operatorname{rank}(M)\le d$, and the unit diagonal gives $\operatorname{tr}(M)=F$. For any
matrix of rank at most $d$, $|\operatorname{tr}(M)|\le\nucnorm{M}\le\sqrt d\,\norm{M}_F$
(Appendix~\ref{app:nuclear}), so $F^2\le d\,\norm{M}_F^2$. Expanding
$\norm{M}_F^2=F+\sum_{i\neq j}|M_{ij}|^2$ gives~\eqref{eq:floorsum}, and dividing by $F(F-1)$
gives~\eqref{eq:floor}. For the maximum, note that
$\max_{i\neq j}|M_{ij}|^2\ge\frac{1}{F(F-1)}\sum_{i\neq j}|M_{ij}|^2$, and take square roots.
\end{proof}

The argument is the standard rank--trace mechanism, cf.~\cite{datta2012geometry}; we give it
in full only for self-containedness. The next corollary is what the diagnostic actually uses,
because it removes the normalisation hypothesis.

\begin{corollary}[Gain-normalised form]
\label{cor:calibfree}
Let $F>d$ and let $M=G\Phi$ satisfy $\operatorname{rank}(M)\le d$ and $M_{ii}\neq0$ for all
$i$. Then
\[
  \max_{i\neq j}\frac{|M_{ij}|}{|M_{ii}|}\;\ge\;\sqrt{\frac{F-d}{d(F-1)}}.
\]
\end{corollary}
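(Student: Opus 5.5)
The plan is to reduce to Theorem~\ref{thm:floor} by rescaling rows. Set $D=\operatorname{diag}(M_{11},\dots,M_{FF})$, which is invertible because every $M_{ii}\neq0$. Then form the calibrated interface $\widetilde M=D^{-1}M$. This equals $(D^{-1}G)\Phi$, so it is again of the form $\widetilde G\Phi$ with $\widetilde G=D^{-1}G\in\R^{F\times d}$. Left multiplication by an invertible matrix does not raise rank, so $\operatorname{rank}(\widetilde M)\le d$. Its entries are $\widetilde M_{ij}=M_{ij}/M_{ii}$, so in particular $\widetilde M_{ii}=1$ for every $i$.

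Next I apply Theorem~\ref{thm:floor} to the pair $(\widetilde G,\Phi)$. The hypothesis $F>d$ together with $d\ge1$ gives $F\ge2$, as the theorem requires. The theorem yields $\max_{i\neq j}|\widetilde M_{ij}|\ge\sqrt{W(F,d)}$, and substituting $|\widetilde M_{ij}|=|M_{ij}|/|M_{ii}|$ gives exactly the claimed inequality.

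A small point needs attention. The corollary assumes only that $\operatorname{rank}(M)\le d$, not that $M$ literally factors through $\R^d$. If $M$ is given abstractly, I would instead invoke the rank--trace step from the proof of Theorem~\ref{thm:floor} directly. That step uses only $\operatorname{rank}\le d$ and the unit diagonal: $F=\operatorname{tr}(\widetilde M)\le\nucnorm{\widetilde M}\le\sqrt d\,\norm{\widetilde M}_F$. From there, averaging the off-diagonal squares and bounding the maximum by the average finishes the argument as before.

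I expect no real obstacle. The only care needed is checking that the row rescaling preserves the rank bound and sends the diagonal to ones.
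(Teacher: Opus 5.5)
Your proposal is correct and follows the paper's proof: rescale by $D^{-1}$ with $D=\operatorname{diag}(M_{11},\dots,M_{FF})$, note that $D^{-1}M$ has rank at most $d$, unit diagonal and entries $M_{ij}/M_{ii}$, and apply Theorem~\ref{thm:floor}. The checks you add, that $F\ge2$ and that only the rank bound is needed rather than the factorisation, are harmless extra care. The paper's one-line proof leaves them implicit, and any rank-$\le d$ matrix factors as $G\Phi$ through $\R^d$ anyway.
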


\begin{proof}
Apply Theorem~\ref{thm:floor} to $D^{-1}M$ with $D=\operatorname{diag}(M_{11},\dots,M_{FF})$:
this matrix has rank at most $d$, unit diagonal, and entries $M_{ij}/M_{ii}$.
\end{proof}

\begin{remark}[Why the gain normalisation is essential]
\label{rem:calib}
Without a diagonal normalisation the absolute off-diagonal entries of $G\Phi$ can be made
arbitrarily small by shrinking $G$, and the floor appears to fail. A concrete instance: take
$\Phi$ to be the three unit vectors at $120^{\circ}$ in $\R^2$ --- an equiangular tight frame
--- and $G=D\Phi^{\top}$ with $D=\operatorname{diag}(1/10,1/20,1/20)$. Then $G\Phi$ has
diagonal $(1/10,1/20,1/20)$ and maximum off-diagonal entry $1/20$, ten times \emph{below}
$\sqrt{(F-d)/(d(F-1))}=1/2$. There is no contradiction: this interface reads out its own
coordinates with gains far below one, and after recalibrating each row to unit gain the
maximum off-diagonal entry becomes exactly $1/2$ --- the floor, attained with equality.
Corollary~\ref{cor:calibfree} isolates the scale-invariant content: what is bounded below is
the interference-to-gain ratio. This example is verified in the test suite, and
Section~\ref{sec:res-e4} shows that a gradient optimiser left free of the constraint exploits
exactly this degree of freedom.
\end{remark}

\begin{proposition}[Equality within the tied family]
\label{prop:tight}
Let $\Phi\in\R^{d\times F}$ have unit-norm columns and let $G=\Phi^{\top}$, so that
$M=\Phi^{\top}\Phi$ has unit diagonal. Then $\sum_{i\neq j}|M_{ij}|^2\ge F(F-d)/d$, with
equality \emph{if and only if} $\Phi\Phi^{\top}=(F/d)I_d$, that is, iff $\Phi$ is a unit-norm
tight frame.
\end{proposition}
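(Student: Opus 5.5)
The plan is to express the off-diagonal energy through the frame operator $\Sigma=\Phi\Phi^{\top}$ and then apply Cauchy--Schwarz to its eigenvalues, where the equality case is transparent. The inequality itself is already Theorem~\ref{thm:floor}, applied with $G=\Phi^{\top}$ since unit-norm columns give $M_{ii}=\norm{\phi_i}^2=1$. What remains is the characterisation of equality, and for that I will not retrace the nuclear-norm chain. A direct computation is cleaner.

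First, $\norm{M}_F^2=\operatorname{tr}(\Phi^{\top}\Phi\Phi^{\top}\Phi)=\operatorname{tr}(\Sigma^2)=\sum_{k=1}^d\lambda_k^2$, where $\lambda_1,\dots,\lambda_d\ge0$ are the eigenvalues of $\Sigma$. Also $\operatorname{tr}(\Sigma)=\operatorname{tr}(\Phi^{\top}\Phi)=F$ by the unit-norm hypothesis. Hence
\[
  \sum_{i\neq j}|M_{ij}|^2=\sum_{k=1}^d\lambda_k^2-F .
\]
Cauchy--Schwarz on the $d$ eigenvalues gives $\sum_k\lambda_k^2\ge(\sum_k\lambda_k)^2/d=F^2/d$. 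This yields $\sum_{i\neq j}|M_{ij}|^2\ge F^2/d-F=F(F-d)/d$, recovering the bound independently of Theorem~\ref{thm:floor}.

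Equality in Cauchy--Schwarz holds iff all $\lambda_k$ are equal, i.e.\ $\lambda_k=F/d$. Since $\Sigma$ is symmetric, hence diagonalisable, this is equivalent to $\Sigma=(F/d)I_d$. Conversely, if $\Phi\Phi^{\top}=(F/d)I_d$ then $\operatorname{tr}(\Sigma^2)=F^2/d$ directly, and equality follows. Combined with unit-norm columns, this is by definition a unit-norm tight frame.

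The only point needing care is that the eigenvalue count is $d$, not $\operatorname{rank}\Sigma$. If $\Phi$ is rank-deficient, some $\lambda_k=0$, the Cauchy--Schwarz inequality over all $d$ entries is then strict, and such $\Phi$ correctly fall outside the equality case. There is no genuine obstacle beyond this; the main step is recognising $\norm{M}_F^2=\operatorname{tr}(\Sigma^2)$, which converts the statement into an eigenvalue-spread inequality.
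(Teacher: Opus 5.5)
Your proof is correct and follows the paper's own argument: rewrite $\norm{M}_F^2=\operatorname{tr}(\Sigma^2)$ by cyclicity, use $\operatorname{tr}\Sigma=F$, and apply Cauchy--Schwarz to the $d$ eigenvalues of $\Sigma$, whose equality case is $\Sigma=(F/d)I_d$. Your remark that the count runs over all $d$ eigenvalues, not $\operatorname{rank}\Sigma$, is right: it is exactly what excludes rank-deficient $\Phi$ from the equality case.
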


\begin{proof}
The diagonal entries are $\norm{\phi_i}^2=1$. By cyclicity,
$\norm{M}_F^2=\operatorname{tr}((\Phi\Phi^{\top})^2)=\norm{\Sigma}_F^2$ with
$\Sigma=\Phi\Phi^{\top}\succeq0$ and $\operatorname{tr}\Sigma=F$. Cauchy--Schwarz on the
eigenvalues of $\Sigma$ gives $\norm{\Sigma}_F^2\ge(\operatorname{tr}\Sigma)^2/d=F^2/d$, with
equality iff all $d$ eigenvalues coincide, i.e.\ $\Sigma=(F/d)I_d$. Subtracting the diagonal
contribution $F$ gives the claim.
\end{proof}

Within this family --- unit-norm columns read out by their own transpose --- equality therefore
characterises tight frames, consistent
with~\cite{waldron2003generalized,datta2012geometry}. We stress the scope: it is a statement
about the tied Gram matrix, not about every rank-$d$ unit-diagonal interface. Other readouts
attain the same floor under different conditions, and the next proposition identifies the
condition for the pseudoinverse. Whether learned codes find such configurations is an empirical
question, answered in Section~\ref{sec:res-e4}.

\begin{proposition}[The pseudoinverse ratio is a leverage statistic]
\label{prop:leverage}
Let $\Phi\in\R^{d\times F}$ have full row rank, let $G=\Phi^{+}$, and write
$P=\Phi^{+}\Phi$ for the orthogonal projector onto the row space, with leverage scores
$h_i=P_{ii}\in(0,1]$. Then the gain-normalised interface $\widetilde M=D^{-1}P$ satisfies
\[
  \sum_{i\neq j}\widetilde M_{ij}^{2}\;=\;\sum_{i=1}^{F}\frac1{h_i}-F ,
  \qquad\text{so}\qquad
  r=\frac{d}{F(F-d)}\left(\sum_{i=1}^{F}\frac1{h_i}-F\right).
\]
Since $\sum_i h_i=\operatorname{tr}P=d$, Cauchy--Schwarz gives $\sum_i h_i^{-1}\ge F^2/d$ with
equality iff $h_i=d/F$ for every $i$. Hence $r\ge1$, and $r=1$ exactly when leverage is
equalised across features.
\end{proposition}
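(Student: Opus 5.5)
The proof is a direct computation resting on two facts about the projector $P=\Phi^{+}\Phi$: it is symmetric and idempotent. Full row rank gives $\Phi^{+}=\Phi^{\top}\Sigma^{-1}$ with $\Sigma=\Phi\Phi^{\top}$, so $P=\Phi^{\top}\Sigma^{-1}\Phi$. This makes $P$ symmetric, idempotent, and of rank $d$, and its diagonal entries are $P_{ii}=\phi_i^{\top}\Sigma^{-1}\phi_i=h_i$.

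We first check that each $h_i$ lies in $(0,1]$. Since $P$ is an orthogonal projector, $h_i=\|Pe_i\|^2\le\|e_i\|^2=1$. For positivity, we assume $\phi_i\neq0$, which is needed for the gain normalisation $D^{-1}$ to exist. Then $\Sigma^{-1}\succ0$ gives $h_i>0$. It also follows that $D=\operatorname{diag}(h_1,\dots,h_F)$ is invertible.

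Next we compute the off-diagonal mass row by row. Idempotence and symmetry give
\[
(P^2)_{ii}=\sum_j P_{ij}P_{ji}=\sum_j P_{ij}^2=P_{ii}=h_i,
\]
so $\sum_{j\neq i}P_{ij}^2=h_i-h_i^2$. The row-normalised entries are $\widetilde M_{ij}=P_{ij}/h_i$. Dividing by $h_i^2$ gives
\[
\sum_{j\neq i}\widetilde M_{ij}^2=\frac{h_i-h_i^2}{h_i^2}=\frac1{h_i}-1 .
\]
Summing over $i$ yields $\sum_{i\ne j}\widetilde M_{ij}^2=\sum_i h_i^{-1}-F$.

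The ratio $r$ is this mean squared off-diagonal cross-talk divided by $W(F,d)$. Dividing by $F(F-1)$ and then by $W(F,d)=(F-d)/(d(F-1))$ gives $r=\frac{d}{F(F-d)}\bigl(\sum_i h_i^{-1}-F\bigr)$, and the factor $F-1$ cancels.

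For the inequality, $\sum_i h_i=\operatorname{tr}P=\operatorname{rank}P=d$. Cauchy--Schwarz gives
\[
F^2=\Bigl(\sum_i\sqrt{h_i}\cdot h_i^{-1/2}\Bigr)^2\le\Bigl(\sum_i h_i\Bigr)\Bigl(\sum_i h_i^{-1}\Bigr)=d\sum_i h_i^{-1},
\]
with equality iff $h_i$ is constant, and hence iff $h_i=d/F$ for every $i$. Substituting $\sum_i h_i^{-1}\ge F^2/d$ gives $\sum_i h_i^{-1}-F\ge F(F-d)/d$, so $r\ge1$, with equality exactly under equal leverage. This matches Theorem~\ref{thm:floor} applied to $D^{-1}P$, which serves as a consistency check.

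No step is a real obstacle. The only points needing care are the hypothesis $\phi_i\neq0$, required for invertibility of $D$, and the convention that $r$ denotes the mean squared calibrated cross-talk divided by $W(F,d)$.
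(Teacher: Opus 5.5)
Your proof is correct and follows the paper's argument. Symmetry and idempotence of $P$ give $\sum_j P_{ij}^2=h_i$, so each row-normalised row contributes $h_i^{-1}-1$, and Cauchy--Schwarz against $\sum_i h_i=d$ gives $r\ge1$ with equality exactly at uniform leverage. You also note that $h_i>0$ requires every column $\phi_i\neq0$, since full row rank alone allows a zero column. That is a fair point which the paper's statement of $h_i\in(0,1]$ leaves implicit.
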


\begin{proof}
$P$ is symmetric idempotent, so row $i$ obeys $\sum_j P_{ij}^2=P_{ii}=h_i$ and $D_{ii}=h_i$.
Dividing row $i$ by $h_i$ and removing the unit diagonal entry leaves
$(h_i-h_i^2)/h_i^2=h_i^{-1}-1$; summing over $i$ gives the identity. The inequality is
Cauchy--Schwarz applied to $\sum_i h_i\cdot\sum_i h_i^{-1}\ge F^2$, and dividing by $F(F-1)$
and by $W(F,d)=(F-d)/(d(F-1))$ turns it into $r\ge1$.
\end{proof}

This is worth stating because it deflates a natural over-reading of a ratio close to one under
the pseudoinverse readout. Attaining the floor with $G=\Phi^{+}$ says that the code spreads its
features evenly over the row space --- nothing more. It is a statement about the geometry of the
code, not about whether the code is a good representation for any task; a code can have
perfectly equalised leverage and still be useless. We use this to scope the empirical claims of
Section~\ref{sec:res-e5}.

\subsection{The floor of the code itself}
\label{sec:codefloor}

Theorem~\ref{thm:floor} bounds every rank-$d$ unit-diagonal interface, so it is a statement
about the \emph{ensemble} of codes: no particular $\Phi$ need be able to attain it. That makes a
measured ratio hard to read. If a trained code sits $0.06\%$ above $W(F,d)$, is the readout
excellent, or is the floor simply unreachable for this code and the readout merely adequate? The
question is answered by computing the floor of the code itself, which is available in closed
form.

\begin{theorem}[Code-specific analog optimum]
\label{thm:codefloor}
Let $\Phi\in\R^{d\times F}$ have full row rank, let $\Sigma=\Phi\Phi^{\top}\succ0$ be its frame
operator, and let $h_i=\phi_i^{\top}\Sigma^{-1}\phi_i$ be the leverage of feature $i$. Then for
every $i$
\begin{equation}
  \min_{g\in\R^d}\Big\{\sum_{j\neq i}(g^{\top}\phi_j)^2 \;:\; g^{\top}\phi_i=1\Big\}
  \;=\;\frac{1}{h_i}-1,
  \label{eq:capon}
\end{equation}
attained uniquely at $g_i^{\star}=\Sigma^{-1}\phi_i/h_i$. Stacking the $F$ minimisers as rows gives
$G^{\star}=D_h^{-1}\Phi^{+}$ with $D_h=\operatorname{diag}(h_1,\dots,h_F)$: the row-wise optimum is
the Moore--Penrose pseudoinverse \emph{after} gain calibration, and coincides with $\Phi^{+}$ itself
only when every $h_i=1$. Consequently the smallest mean-square off-diagonal cross-talk this code
admits is
\begin{equation}
  W_{\star}(\Phi)\;:=\;\frac{1}{F(F-1)}\sum_{i=1}^{F}\Big(\frac1{h_i}-1\Big)
  \;=\;\frac{\sum_i h_i^{-1}-F}{F(F-1)}\;\ge\;W(F,d),
  \label{eq:codefloor}
\end{equation}
with equality if and only if $h_i=d/F$ for every $i$.
\end{theorem}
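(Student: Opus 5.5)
The plan is to solve each row problem in \eqref{eq:capon} as an equality-constrained quadratic minimisation, reduce it to a Cauchy--Schwarz inequality in the $\Sigma$-inner product, and then sum over $i$ and invoke Proposition~\ref{prop:leverage} for the comparison with $W(F,d)$.

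\textbf{The row problem.} Fix $i$. The off-diagonal objective can be rewritten using the frame operator:
\[
  \sum_{j\neq i}(g^{\top}\phi_j)^2 \;=\; g^{\top}\Sigma g-(g^{\top}\phi_i)^2 \;=\; g^{\top}\Sigma g-1
\]
on the constraint set. The problem is therefore equivalent to minimising $g^{\top}\Sigma g$ subject to $g^{\top}\phi_i=1$. This is the classical minimum-variance (Capon) problem. Since $\Sigma\succ0$, I will treat it by Cauchy--Schwarz in the inner product $\langle u,v\rangle_\Sigma=u^{\top}\Sigma v$:
\[
  1=(g^{\top}\phi_i)^2=\langle g,\Sigma^{-1}\phi_i\rangle_\Sigma^2\le (g^{\top}\Sigma g)\,(\phi_i^{\top}\Sigma^{-1}\phi_i)=(g^{\top}\Sigma g)\,h_i .
\]
This gives $g^{\top}\Sigma g\ge 1/h_i$, with equality iff $g$ is parallel to $\Sigma^{-1}\phi_i$. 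The constraint then fixes the scalar, so $g_i^\star=\Sigma^{-1}\phi_i/h_i$ and the minimum is $1/h_i-1$.

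Before this step I need $h_i>0$. This holds because $\phi_i\neq0$; a zero column makes the constraint infeasible, so it must be excluded, or it is implicit since leverage is defined as positive in Proposition~\ref{prop:leverage}. Uniqueness then follows from the equality case of Cauchy--Schwarz, or equivalently from strict convexity of $g^{\top}\Sigma g$ on the affine constraint set.

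\textbf{Stacking the rows.} Since $\Phi$ has full row rank, $\Phi^{+}=\Phi^{\top}(\Phi\Phi^{\top})^{-1}=\Phi^{\top}\Sigma^{-1}$. Its $i$-th row is $\phi_i^{\top}\Sigma^{-1}$, so the $i$-th row of $D_h^{-1}\Phi^{+}$ is exactly $(g_i^\star)^{\top}$. This identifies $G^\star$. The interface is then $G^\star\Phi=D_h^{-1}P$ with $P=\Phi^{+}\Phi$, whose diagonal is $P_{ii}=h_i$. Hence $G^\star=\Phi^{+}$ iff $D_h=I$, that is, iff every $h_i=1$. I also note the consistency check: $h_i$ here is the same $P_{ii}$ used in Proposition~\ref{prop:leverage}.

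\textbf{The code floor.} The calibrated cross-talk sum decouples across rows: each row of $G$ enters only its own term, and calibration to unit diagonal is exactly the constraint $g^{\top}\phi_i=1$. The minimum of the total is therefore the sum of the row minima, $\sum_i(1/h_i-1)$. Dividing by $F(F-1)$ gives the displayed expression for $W_\star(\Phi)$.

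The inequality $W_\star\ge W(F,d)$ and its equality case are then exactly Proposition~\ref{prop:leverage}: from $\sum_i h_i=\operatorname{tr}P=d$ and Cauchy--Schwarz, $\sum_i h_i^{-1}\ge F^2/d$, with equality iff all $h_i=d/F$. Substituting gives
\[
  W_\star\ge \frac{F^2/d-F}{F(F-1)}=\frac{F-d}{d(F-1)} .
\]

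\textbf{Main obstacle.} The only delicate point is justifying the reduction to the $\Sigma$-weighted Cauchy--Schwarz, namely the rewriting of the off-diagonal sum as $g^{\top}\Sigma g-1$ together with the positivity of $h_i$. Once that is in place, the rest is bookkeeping that reuses Proposition~\ref{prop:leverage}.
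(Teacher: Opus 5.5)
Your proof is correct and follows the paper's route. Like the paper, you rewrite the row objective as $g^{\top}\Sigma g-1$, solve the constrained quadratic, sum the row minima, and finish with $\sum_i h_i=d$ and Cauchy--Schwarz. The differences are small: you obtain the minimum and its uniqueness in one step from Cauchy--Schwarz in the $\Sigma$-inner product where the paper uses Lagrange conditions plus strict convexity, and you verify two things the paper's proof leaves unstated, namely the identification $G^{\star}=D_h^{-1}\Phi^{+}$ via $\Phi^{+}=\Phi^{\top}\Sigma^{-1}$ and the hypothesis $\phi_i\neq0$ needed for $h_i>0$.
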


\begin{proof}
Since $\sum_{j}(g^{\top}\phi_j)^2=g^{\top}\Sigma g$, the constraint $g^{\top}\phi_i=1$ turns the
objective into $g^{\top}\Sigma g-1$. Minimising a positive-definite quadratic under one linear
equality is classical: the Lagrange conditions give $g=\lambda\Sigma^{-1}\phi_i$, the constraint
fixes $\lambda=1/h_i$, and the value is $\phi_i^{\top}\Sigma^{-1}\phi_i/h_i^2=1/h_i$. Subtracting
the constrained direction's contribution, which is exactly $1$, gives~\eqref{eq:capon}; strict
convexity gives uniqueness. Summing over $i$ and dividing by $F(F-1)$
gives~\eqref{eq:codefloor}. Finally $\sum_ih_i=\operatorname{tr}(\Sigma^{-1}\Phi\Phi^{\top})=d$,
so Cauchy--Schwarz gives $\sum_ih_i^{-1}\ge F^2/d$ with equality iff the $h_i$ are all equal,
and $(F^2/d-F)/(F(F-1))=W(F,d)$.
\end{proof}

\begin{remark}[Provenance]
\label{rem:capon}
Equation~\eqref{eq:capon} is the minimum-variance distortionless-response (Capon) beamformer with
the frame operator in place of a covariance~\cite{capon1969}, and $g_i^{\star}$ is the canonical
dual frame vector rescaled to unit gain; both are classical. We claim no
novelty in the optimisation. What the diagnostic uses is the reading of $h_i$ as a leverage
score, the resulting per-code floor~\eqref{eq:codefloor}, and the exact attribution of the gap
in Proposition~\ref{prop:excess}.
\end{remark}

\begin{proposition}[The excess is leverage dispersion, exactly]
\label{prop:excess}
With $c=d/F$ the mean leverage,
\[
  \sum_{i=1}^{F}\frac1{h_i}-\frac{F^2}{d}\;=\;\sum_{i=1}^{F}\frac{(h_i-c)^2}{c^{2}h_i}\;\ge\;0 .
\]
Hence $W_{\star}(\Phi)$ exceeds $W(F,d)$ by a weighted dispersion of the leverage scores, which
vanishes precisely when leverage is uniform.
\end{proposition}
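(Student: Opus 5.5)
The identity is purely algebraic, so the plan is to expand the right-hand side term by term and then use the trace identity $\sum_i h_i=d$ from Theorem~\ref{thm:codefloor} to collapse it onto the left-hand side. The only input needed beyond elementary algebra is that every $h_i>0$, which holds because $h_i=\phi_i^{\top}\Sigma^{-1}\phi_i$ with $\Sigma\succ0$ and $\phi_i\neq0$. This is also implicit in the finiteness of $W_\star(\Phi)$.

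Concretely, for each $i$ I would write
\[
  \frac{(h_i-c)^2}{c^2h_i}=\frac{h_i^2-2ch_i+c^2}{c^2h_i}=\frac{h_i}{c^2}-\frac{2}{c}+\frac{1}{h_i}.
\]
Summing over $i$ gives $\sum_i h_i^{-1}+\frac{1}{c^2}\sum_i h_i-\frac{2F}{c}$. Substituting $\sum_i h_i=d$ and $c=d/F$ makes the middle term $F^2/d$ and the last term $2F^2/d$. Together they contribute $-F^2/d$, which is exactly the left-hand side.

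Nonnegativity is immediate: each summand is a square divided by the positive quantity $c^2h_i$. Since every summand is nonnegative, the sum vanishes if and only if each term does, that is, iff $h_i=c=d/F$ for all $i$. For the interpretive sentence, I would divide both sides by $F(F-1)$ and use $(F^2/d-F)/(F(F-1))=W(F,d)$, computed at the end of the proof of Theorem~\ref{thm:codefloor}. This gives
\[
  W_\star(\Phi)-W(F,d)=\frac{1}{F(F-1)}\sum_i\frac{(h_i-c)^2}{c^2h_i},
\]
a dispersion of the $h_i$ about their mean weighted by $1/h_i$.

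There is no real obstacle here. The one point to state carefully is positivity of the $h_i$, which justifies both the division and the equality characterisation. This argument also recovers the Cauchy--Schwarz step of Theorem~\ref{thm:codefloor} in a sharper, exact form.
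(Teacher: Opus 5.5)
Your proof is correct and matches the paper's: expand each summand as $h_i/c^2-2/c+1/h_i$, sum over $i$, and use $\sum_i h_i=d$ with $c=d/F$ to collapse the first two terms to $-F^2/d$. Your remarks on positivity of the $h_i$ (which needs $\phi_i\neq0$ as well as $\Sigma\succ0$), on the equality case, and on dividing by $F(F-1)$ to obtain $W_\star(\Phi)-W(F,d)$ spell out what the paper leaves implicit.
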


\begin{proof}
Expand the right-hand side: $c^{-2}\sum_i(h_i^2-2ch_i+c^2)/h_i
=c^{-2}\sum_ih_i-2c^{-1}F+\sum_ih_i^{-1}$. Substituting $\sum_ih_i=d$ and $c=d/F$ turns the
first two terms into $F^2/d-2F^2/d=-F^2/d$.
\end{proof}

This separates two questions that a single ratio against $W(F,d)$ conflates. Writing
$W(G,\Phi)$ for the mean-square off-diagonal cross-talk actually achieved by a readout $G$,
\begin{equation}
  \underbrace{\frac{W(G,\Phi)}{W(F,d)}}_{\text{what is usually reported}}
  \;=\;\underbrace{\frac{W_{\star}(\Phi)}{W(F,d)}}_{R_{\mathrm{geom}}:\ \text{the code}}
  \times
  \underbrace{\frac{W(G,\Phi)}{W_{\star}(\Phi)}}_{R_{\mathrm{readout}}:\ \text{the decoder}} ,
  \label{eq:decomposition}
\end{equation}
and the two factors answer different questions. $R_{\mathrm{geom}}$ asks whether the code's
leverage is spread evenly; $R_{\mathrm{readout}}\ge1$ asks whether the decoder is the best one
\emph{for that code}. In particular Proposition~\ref{prop:leverage} says that the calibrated
pseudoinverse has $R_{\mathrm{readout}}=1$ identically, so a published attainment ratio computed
with $G=\Phi^{+}$ measures $R_{\mathrm{geom}}$ and nothing else. Section~\ref{sec:res-e5} reports
both factors.

Finally, the analog level reads a state distribution through one object only, which is what lets
Section~\ref{sec:hierarchy} compare it with the affine level under a single state model.

\begin{proposition}[Distribution-weighted reconstruction error]
\label{prop:secondmoment}
Let $M=G\Phi$ have unit diagonal, let $A=M-I_F$, and let the state $b\in\R^F$ be drawn from any
law with finite second moment $C=\E[bb^{\top}]$. Then
$\E\norm{Ab}_2^2=\operatorname{tr}(A^{\top}A\,C)$. The analog level therefore depends on the
state law only through $C$: two laws with the same second moment impose the same reconstruction
error on every readout.
\end{proposition}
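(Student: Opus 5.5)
The argument is a direct expansion of the squared norm followed by linearity of expectation and the cyclic property of the trace. Since $M$ has unit diagonal, $A=M-I_F$ is a fixed (deterministic) matrix once $G$ and $\Phi$ are chosen, and the error vector $Ab$ is linear in the random state $b$. I would first write
\[
  \norm{Ab}_2^2 \;=\; b^{\top}A^{\top}A\,b \;=\; \operatorname{tr}\!\big(b^{\top}A^{\top}A\,b\big) \;=\; \operatorname{tr}\!\big(A^{\top}A\,bb^{\top}\big),
\]
using that a scalar equals its own trace and then cyclicity of the trace.

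Next I would take expectations. The map $X\mapsto\operatorname{tr}(A^{\top}A\,X)$ is linear in the entries of $X$, namely $\sum_{k,l}(A^{\top}A)_{kl}X_{lk}$. Each entry $\E[b_lb_k]$ is finite because the second moment is finite, by Cauchy--Schwarz $|b_lb_k|\le\tfrac12(b_l^2+b_k^2)$. Expectation therefore passes through the finite sum, and
\[
  \E\norm{Ab}_2^2=\operatorname{tr}\!\big(A^{\top}A\,\E[bb^{\top}]\big)=\operatorname{tr}(A^{\top}A\,C).
\]
The only care needed is this integrability check, which justifies exchanging $\E$ with the finite double sum.

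The second sentence of the statement then follows immediately. For any fixed readout $G$, the right-hand side depends on the law of $b$ only through $C$. Two laws with equal $C$ therefore give the same value for every $A$, and hence for every readout.

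There is no real obstacle here. The only step deserving a sentence is the integrability justification. I would also remark that the unit-diagonal hypothesis is used only to give $A$ its interpretation as the cross-talk matrix; the identity itself holds for any fixed matrix $A$.
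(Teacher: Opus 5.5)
Your proof is correct and takes the same route as the paper: write $\norm{Ab}_2^2=\operatorname{tr}(A^{\top}A\,bb^{\top})$ and pass the expectation inside by linearity. Your integrability check and your remark that the unit diagonal plays no role in the identity are both accurate additions the paper leaves implicit. The inequality $|b_lb_k|\le\tfrac12(b_l^2+b_k^2)$ is AM--GM rather than Cauchy--Schwarz, though either suffices.
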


\begin{proof}
$\E\norm{Ab}_2^2=\E\operatorname{tr}(A^{\top}Abb^{\top})=\operatorname{tr}(A^{\top}A\,C)$ by
linearity.
\end{proof}

\begin{corollary}[No better-than-floor $\varepsilon$-linear readout]
\label{cor:impossible}
Let $F>d$ and suppose $G$ satisfies $\norm{G y_i-e_i}_\infty\le\delta$ for singleton states
$y_i$, with $\delta<1/2$. Then $\delta/(1-\delta)\ge\sqrt{(F-d)/(d(F-1))}$; in particular
$\delta=\Omega(d^{-1/2})$ when $F/d\to\infty$.
\end{corollary}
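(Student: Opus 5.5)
We reduce the statement to Corollary~\ref{cor:calibfree}, applied to the interface $M=G\Phi$. The singleton states are $y_i=\phi_i$, the code's response to $b=e_i$. The hypothesis $\norm{Gy_i-e_i}_\infty\le\delta$ then says that column $i$ of $M$ lies within $\delta$ of $e_i$ entrywise. This gives two facts for every $i$ and every $j\ne i$:
\[
M_{ii}\ge 1-\delta>0 \qquad\text{and}\qquad |M_{ji}|\le\delta .
\]
Since $\delta<1/2$, every diagonal entry is nonzero, and $\operatorname{rank}(M)\le d$ holds automatically because $M=G\Phi$ with $G\in\R^{F\times d}$. So the hypotheses of Corollary~\ref{cor:calibfree} are met.

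The one point that needs care is orientation. The corollary normalises by rows, bounding $|M_{ij}|/|M_{ii}|$, while the singleton hypothesis controls columns. I would sidestep this by applying Theorem~\ref{thm:floor} to the column-normalised matrix $MD^{-1}$ instead of $D^{-1}M$. This matrix also has rank at most $d$ and unit diagonal, and its off-diagonal entries are $M_{ij}/M_{jj}$. Equivalently, one can apply the corollary to $M^{\top}$, which has the same rank. Either route yields some pair $i\ne j$ with
\[
\frac{|M_{ij}|}{|M_{jj}|}\;\ge\;\sqrt{\frac{F-d}{d(F-1)}} .
\]
Using $|M_{ij}|\le\delta$ and $|M_{jj}|\ge1-\delta$, the left side is at most $\delta/(1-\delta)$, which gives the claimed inequality.

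For the asymptotic statement, note that $W(F,d)\to1/d$ as $F/d\to\infty$, so $\delta/(1-\delta)\ge(1-o(1))d^{-1/2}$. Since $\delta<1/2$, we have $1-\delta\in(1/2,1]$, hence $\delta\ge\tfrac12(1-o(1))d^{-1/2}=\Omega(d^{-1/2})$.

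I expect no step to be a real obstacle. The only place needing attention is the row-versus-column normalisation just discussed, together with the observation that $\delta<1/2$ is what keeps the diagonal bounded away from zero. Strictly, $\delta<1$ would suffice for the inequality itself, and $\delta<1/2$ is used only for the constant in the $\Omega$ bound.
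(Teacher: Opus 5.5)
Your proof is correct and is essentially the paper's own: form $M=G[y_1,\dots,y_F]$, read off $|M_{ii}-1|\le\delta$ and $|M_{ji}|\le\delta$, calibrate, and apply Theorem~\ref{thm:floor}. The row-versus-column worry is unnecessary. The hypothesis holds for every column, so it bounds \emph{every} off-diagonal entry of $M$ by $\delta$ and every diagonal entry below by $1-\delta$; the paper therefore calibrates by rows with $D^{-1}M$ directly, and your detour through $MD^{-1}$ or $M^{\top}$ is harmless but not needed.
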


\begin{proof}
Let $M=G[y_1,\dots,y_F]$. The hypothesis gives $|M_{ii}-1|\le\delta$ and $|M_{ji}|\le\delta$
for $j\neq i$. Calibrating, $|\widetilde M_{ij}|\le\delta/(1-\delta)$, and
Theorem~\ref{thm:floor} applies to $\widetilde M$.
\end{proof}


\section{The affine level: a per-feature collision frontier}
\label{sec:frontier}

Sections~\ref{sec:floor} and~\ref{sec:threshold} bound two criteria on random ensembles. Neither
answers the question an interpretability practitioner actually asks about a \emph{particular}
trained code: for this feature, in this network, up to what sparsity can a linear probe read the
feature's presence off the representation at all? This section gives that quantity exactly, as
the value of one linear programme per feature.

\begin{definition}[Affine separability of a feature]
\label{def:affsep}
Fix $\Phi\in\R^{d\times F}$, a feature $i$ and a sparsity $s$. Write $\mathcal A_i(s)$ for the
set of states $\Phi b$ with $b\in\{0,1\}^F$, $b_i=1$ and $|\operatorname{supp}(b)|\le s$, and
$\mathcal B_i(s)$ for the same with $b_i=0$. Feature $i$ is \emph{affinely separable at sparsity
$s$} if there exist $w\in\R^d$ and $\theta\in\R$ with $w^{\top}x>\theta$ for every
$x\in\mathcal A_i(s)$ and $w^{\top}x<\theta$ for every $x\in\mathcal B_i(s)$.
\end{definition}

The states are Boolean here, and that is not a simplification we could drop. Allowing an active
amplitude to be any positive number makes ``$i$ is active'' an open condition, and an active
state with $b_i\to0$ converges to an inactive one; the two hulls then touch for every code and
every sparsity, so nothing would ever be separable. A positive floor on the active amplitude is
what makes the question well posed, and Corollary~\ref{cor:kappa-alpha} is the graded statement
with that floor made explicit.

An affine functional separates two sets exactly when it separates their convex hulls, so the
question is whether those two hulls meet. The programme below decides it.

\begin{theorem}[The collision frontier]
\label{thm:frontier}
For $z\in\R^{F-1}$, indexed by the features other than $i$, let $z^{+}_j=\max(z_j,0)$ and
$z^{-}_j=\max(-z_j,0)$, so that both parts are non-negative and $z=z^{+}-z^{-}$. Let
$\Phi_{-i}$ be $\Phi$ with column $i$ deleted. Put
\begin{equation}
  \kappa_i\;:=\;\min_{z}\Big\{\max\big(\textstyle\sum_j z^{+}_j,\;1+\sum_j z^{-}_j\big)
  \;:\;\Phi_{-i}z=\phi_i,\;\norm{z}_\infty\le1\Big\}.
  \label{eq:kappa}
\end{equation}
with $\kappa_i:=+\infty$ when the constraints admit no $z$ at all. Then feature $i$ is affinely
separable at every sparsity $s'\le s$ if and only if $\kappa_i>s$. When $\kappa_i$ is finite the
largest sparsity at which feature $i$ is separable is $\lceil\kappa_i\rceil-1$; when
$\kappa_i=+\infty$ the feature is separable at every sparsity, and no collision exists to bound.
\end{theorem}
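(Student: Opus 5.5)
The plan is to reduce affine separability to disjointness of two convex polytopes in coefficient space, and then to read the collision condition directly off the programme~\eqref{eq:kappa}.

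\emph{Step 1 (separation $\Leftrightarrow$ disjoint hulls).} $\mathcal A_i(s)$ and $\mathcal B_i(s)$ are finite sets, so their convex hulls are compact. A strict affine separator exists if and only if the hulls are disjoint. One direction is trivial; the other is the strict separating hyperplane theorem for disjoint compact convex sets. Since $\operatorname{conv}(\Phi X)=\Phi\operatorname{conv}(X)$, the hulls are $\Phi P_A$ and $\Phi P_B$, where $P_A=\operatorname{conv}\{b\in\{0,1\}^F: b_i=1,\ |\operatorname{supp} b|\le s\}$ and $P_B$ is defined analogously with $b_i=0$.

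\emph{Step 2 (describing the hulls; the main obstacle).} I would show that
\[
P_A=\{b: b_i=1,\ 0\le b_j\le1,\ \textstyle\sum_{j\ne i}b_j\le s-1\},
\qquad
P_B=\{b: b_i=0,\ 0\le b_j\le1,\ \textstyle\sum_{j\ne i}b_j\le s\}.
\]
The inclusion $\subseteq$ is immediate. For $\supseteq$, the key fact is that the polytope $\{u\in[0,1]^{F-1}:\sum_j u_j\le k\}$ with $k$ an integer has integral vertices. The constraint matrix, consisting of the identity rows stacked on an all-ones row, is an interval matrix and hence totally unimodular. Alternatively, at a vertex at most one coordinate can be fractional, and integrality of $k$ then forces that coordinate to be integral as well. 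This is the step where the Boolean model, with integer $s$, is really used, so I expect it to need the most care.

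\emph{Step 3 (collision $\Leftrightarrow\kappa_i\le s$).} The hulls meet exactly when there exist $u,v\in[0,1]^{F-1}$ with $\sum u\le s-1$, $\sum v\le s$ and $\phi_i+\Phi_{-i}u=\Phi_{-i}v$. Put $z=v-u$. Then $\Phi_{-i}z=\phi_i$ and $\norm{z}_\infty\le1$.

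Conversely, suppose $z$ is given. Any decomposition $z=v-u$ with $u,v\ge0$ satisfies $v\ge z^+$ and $u\ge z^-$ coordinatewise, so the choice $v=z^+$, $u=z^-$ (both in $[0,1]^{F-1}$) minimises both sums simultaneously. Hence a collision at sparsity $s$ exists if and only if some feasible $z$ has $\sum_j z^+_j\le s$ and $1+\sum_j z^-_j\le s$, that is, $\max(\sum_j z^+_j,\,1+\sum_j z^-_j)\le s$.

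The feasible set of~\eqref{eq:kappa} is a compact polyhedron and the objective is continuous, so when the set is nonempty the minimum $\kappa_i$ is attained. Therefore a collision at $s$ occurs if and only if $\kappa_i\le s$. If the feasible set is empty, no collision occurs at any $s$, which matches $\kappa_i=+\infty$.

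\emph{Step 4 (monotonicity and the threshold).} Separability at $s$ is thus equivalent to $\kappa_i>s$. This condition is monotone in $s$, so it holds at every $s'\le s$ as soon as it holds at $s$, which gives the ``if and only if'' as stated. For finite $\kappa_i$, the largest integer $s$ with $s<\kappa_i$ is $\lceil\kappa_i\rceil-1$. This also covers integer $\kappa_i$, where the collision first appears exactly at $s=\kappa_i$.
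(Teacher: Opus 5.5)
Your proof is correct and follows essentially the same route as the paper: separability is reduced to disjointness of the two hulls, the hulls are described by integrality of the unit-weight knapsack polytope (the paper's Proposition~\ref{prop:knapsack}), and the collision condition is read off with $z=v-u$. Your observation that $v\ge z^{+}$ and $u\ge z^{-}$ does the job of the paper's cancellation of common components. You also make explicit two points the paper leaves implicit: strict separation of disjoint compact hulls, and attainment of the minimum in~\eqref{eq:kappa}, which is what lets you pass from ``some feasible $z$ has objective at most $s$'' to ``$\kappa_i\le s$''.
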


\begin{proof}
Separability fails exactly when $\operatorname{conv}\mathcal A_i(s)$ meets
$\operatorname{conv}\mathcal B_i(s)$. By Proposition~\ref{prop:knapsack} below, applied to the
coordinates other than $i$ with budget $s-1$ on the active side and $s$ on the inactive side,
those hulls are $\Phi\{e_i+u:u\in[0,1]^{F-1},\;\mathbf 1^{\top}u\le s-1\}$ and
$\Phi\{v:v\in[0,1]^{F-1},\;\mathbf 1^{\top}v\le s\}$ respectively --- the relaxation from
$0/1$ coordinates to the box is exact, because the polytope is integral. So a collision is a
pair $u,v$ in those two boxes with $\Phi(e_i+u)=\Phi v$. We may take $u$ and $v$ to have
disjoint supports, since a common component cancels from both sides and only relaxes the two sum
constraints. Then
$\Phi(e_i+u)=\Phi v$ reads $\Phi_{-i}(v-u)=\phi_i$, so with $z=v-u$ we have $z^{+}=v$ and
$z^{-}=u$. The budget on the inactive state is $\sum_j v_j\le s$, that on the active state is
$1+\sum_j u_j\le s$, and the entries lie in $[0,1]$, giving $\norm{z}_\infty\le1$. The two
budgets are exactly $\sum_jz^{+}_j\le s$ and $1+\sum_jz^{-}_j\le s$, so a collision at some
sparsity at most $s$ exists if and only if the objective in~\eqref{eq:kappa} can be brought to
$s$ or below, that is, if and only if $\kappa_i\le s$.
\end{proof}

Three things improve on the corresponding exactly-$s$ statement, which we record because the
earlier formulation of this work used it. The threshold is $s$ itself rather than
$2\min(s,F-s)-1$; the admissible sets nest in $s$, so monotonicity of the frontier holds by
construction rather than needing a proof and a restriction to $s\le F/2$; and $\kappa_i=7.4$ now
reads directly as ``separable up to $7$, lost at $8$'', which is what a capacity ought to mean.
Computationally~\eqref{eq:kappa} is a single linear programme, stated as
Algorithm~\ref{alg:frontier}: the $\max$ of two linear forms is handled by one auxiliary scalar,
and one solve settles every sparsity at once rather than one feasibility problem per $s$.

\begin{algorithm}[t]
\caption{\textsc{CollisionFrontier} --- the largest affinely separable sparsity, per feature}
\label{alg:frontier}
\begin{algorithmic}[1]
\Require code $\Phi\in\R^{d\times F}$; feature set $\mathcal I\subseteq[F]$; minimum amplitude
  $\alpha\in(0,1]$
\Ensure $\kappa_i(\alpha)$ and the frontier $\lceil\kappa_i(\alpha)\rceil-1$ for each
  $i\in\mathcal I$
\For{$i\in\mathcal I$}
  \State solve, in variables $u,v\in\R^{F-1}_{\ge0}$ and $t\in\R$:
  \Statex \qquad $\min\;t$ \quad s.t.\quad $\Phi_{-i}(u-v)=\phi_i$,\quad
    $\alpha\mathbf 1^{\top}u\le t$,\quad $1+\alpha\mathbf 1^{\top}v\le t$,\quad
    $u,v\le1/\alpha$
  \State $\kappa_i(\alpha)\gets t^{\star}$;\quad
    $s_i^{\max}\gets\lceil\kappa_i(\alpha)\rceil-1$
\EndFor
\State \Return $\{(\kappa_i(\alpha),s_i^{\max})\}_{i\in\mathcal I}$
\end{algorithmic}
\end{algorithm}

\begin{remark}[Why splitting $z$ into two non-negative vectors loses nothing]
\label{rem:split}
Algorithm~\ref{alg:frontier} does not constrain $u$ and $v$ to have disjoint supports, so a
feasible point need not have $u=z^{+}$ and $v=z^{-}$. It is still exact. If
$u_j,v_j>0$ for some $j$, subtracting $\min(u_j,v_j)$ from both leaves $u-v$ unchanged, keeps
feasibility, and strictly decreases both $\mathbf 1^{\top}u$ and $\mathbf 1^{\top}v$, hence does
not increase $t$. Every optimum therefore admits a canonical split, on which the two budget
constraints are exactly those of~\eqref{eq:kappa} and the bounds $u,v\le1/\alpha$ are exactly
$\norm{z}_\infty\le1/\alpha$.
\end{remark}

The cost is one linear programme in $2(F-1)+1$ variables with $d$ equality constraints per
feature, which is affordable for every feature at $d=50$ and not at $d=400$; how we choose a
subset there, and why the resulting minimum is an upper bound, is stated in
Section~\ref{sec:limitations}.

\subsection{What the state distribution contributes}
\label{sec:frontier-amplitudes}

Definition~\ref{def:affsep} allowed graded activations in $[0,1]$. It is natural to ask what
happens under a genuine state law, and the answer is unusually clean: almost nothing survives.

\begin{proposition}[Amplitudes wash out of the hulls]
\label{prop:knapsack}
For $\alpha\in(0,1]$ and $k\in\mathbb N$ let
$\mathcal U_k^{\alpha}=\{u\in\R_{\ge0}^{F}:u_j\in\{0\}\cup[\alpha,1],\;
|\operatorname{supp}(u)|\le k\}$. Then
$\operatorname{conv}\mathcal U_k^{\alpha}=\{u\in[0,1]^F:\mathbf 1^{\top}u\le k\}$ for every
$\alpha$.
\end{proposition}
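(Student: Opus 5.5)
We prove the two inclusions separately. Write $P_k=\{u\in[0,1]^F:\mathbf 1^{\top}u\le k\}$. The target is $\operatorname{conv}\mathcal U_k^{\alpha}=P_k$.

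\emph{The inclusion $\operatorname{conv}\mathcal U_k^{\alpha}\subseteq P_k$.} This is immediate. Every $u\in\mathcal U_k^{\alpha}$ has entries in $[0,1]$ and at most $k$ nonzero entries, each at most $1$, so $\mathbf 1^{\top}u\le k$. Since $P_k$ is convex, it contains the hull.

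\emph{The reverse inclusion.} The plan is to show that every vertex of $P_k$ already lies in $\mathcal U_k^{\alpha}$. Because $P_k$ is a polytope, it is the convex hull of its vertices, which then gives $P_k\subseteq\operatorname{conv}\mathcal U_k^{\alpha}$. The vertices are the $0/1$ vectors with at most $k$ ones. A $0/1$ vector $u$ with $|\operatorname{supp}(u)|\le k$ has entries in $\{0,1\}\subseteq\{0\}\cup[\alpha,1]$, so it belongs to $\mathcal U_k^{\alpha}$ for every $\alpha\in(0,1]$. This is why the statement holds uniformly in $\alpha$: the amplitude floor only removes points in the interior of the segments between vertices, never the vertices themselves.

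\emph{The main obstacle: integrality of $P_k$.} The one step that needs care is showing that $P_k$ has no fractional vertices. I would argue directly.

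\begin{enumerate}
\item Let $u$ be a vertex and suppose it has a fractional coordinate $u_a\in(0,1)$.
\item If $\mathbf 1^{\top}u<k$, then $u\pm\varepsilon e_a$ lies in $P_k$ for small $\varepsilon>0$. This writes $u$ as a midpoint of two points of $P_k$, a contradiction.
\item If $\mathbf 1^{\top}u=k$ and $k$ is an integer, the sum constraint forces a second fractional coordinate $u_b\in(0,1)$. Then $u\pm\varepsilon(e_a-e_b)$ stays in $P_k$ for small $\varepsilon>0$, again a contradiction.
\end{enumerate}

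Hence every vertex is $0/1$, and such a vertex has at most $k$ ones because $\mathbf 1^{\top}u\le k$. Alternatively, one can cite total unimodularity of the constraint matrix formed by the identity and an all-ones row, but the two-case perturbation argument is self-contained.

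We also handle the boundary case $k\ge F$ explicitly. Then the sum constraint is slack, $P_k=[0,1]^F$, and the vertices are all $0/1$ vectors, each with support at most $F\le k$. The argument above covers this case without modification.
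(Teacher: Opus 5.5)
Your proof is correct and follows the same route as the paper. The inclusion $\operatorname{conv}\mathcal U_k^{\alpha}\subseteq P_k$ comes from the two constraints, and the reverse inclusion from the fact that the vertices of the knapsack polytope are $0/1$ vectors with at most $k$ ones, each of which lies in $\mathcal U_k^{\alpha}$ for every $\alpha\in(0,1]$; the only difference is that you prove integrality with the two-case perturbation argument where the paper simply asserts it, which makes your version self-contained.
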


\begin{proof}
The right-hand side is the unit-weight knapsack polytope, which is integral: its vertices are the
$0/1$ vectors with at most $k$ ones. Each such vertex lies in $\mathcal U_k^{\alpha}$ for every
$\alpha\le1$, so the right-hand side is contained in the hull; the reverse inclusion is immediate
since $\mathcal U_k^{\alpha}$ satisfies both constraints.
\end{proof}

\begin{corollary}[The frontier under a minimum amplitude]
\label{cor:kappa-alpha}
If active amplitudes lie in $[\alpha,1]$ then feature $i$ is separable at every sparsity
$s'\le s$ iff $\kappa_i(\alpha)>s$, where
\[
  \kappa_i(\alpha)=\min_z\Big\{\max\big(\alpha\textstyle\sum_jz^{+}_j,\;
  1+\alpha\sum_jz^{-}_j\big)\;:\;\Phi_{-i}z=\phi_i,\;\norm{z}_\infty\le1/\alpha\Big\},
\]
and $\kappa_i(\alpha)$ is non-decreasing in $\alpha$. As $\alpha\to0$ an active state converges
to an inactive one, the hulls touch, and no uniform margin exists.
\end{corollary}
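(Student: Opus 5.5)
The plan is to repeat the proof of Theorem~\ref{thm:frontier} with the amplitude floor carried through, and then read off monotonicity and the $\alpha\to0$ behaviour from the resulting programme.

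\emph{Step 1: the hulls.} With active amplitudes in $\{0\}\cup[\alpha,1]$, the active-side states are $\Phi(a e_i+u)$ with $a\in[\alpha,1]$, $u\in\mathcal U^{\alpha}_{s-1}$ on the other coordinates. The inactive-side states are $\Phi v$ with $v\in\mathcal U^{\alpha}_{s}$. Proposition~\ref{prop:knapsack} replaces each $\mathcal U$ by its knapsack polytope. The $i$-th coordinate contributes the interval $[\alpha,1]$, and the hull of a product of convex sets is their product. Separability fails iff the two hulls meet, i.e.\ iff
\[
a\phi_i=\Phi_{-i}(v-u),\qquad a\in[\alpha,1],\quad u,v\in[0,1]^{F-1},\quad \mathbf 1^{\top}u\le s-1,\quad \mathbf 1^{\top}v\le s .
\]

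\emph{Step 2: reduce to the programme.} As in Theorem~\ref{thm:frontier} and Remark~\ref{rem:split}, cancel common support so that $u=z'^-$ and $v=z'^+$ with $z'=v-u$. Divide by $a$ and set $z=z'/a$, so that $\Phi_{-i}z=\phi_i$. The box becomes $\norm{z}_\infty\le 1/a$, and the budgets become $a\sum z^+\le s$ and $1+a\sum z^-\le s$ (the active side having total count one for feature $i$ plus $\sum u$).

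The main obstacle is the optimisation over $a$. I claim $a=\alpha$ is optimal. For fixed $z$, the box constraint $\norm{z}_\infty\le1/a$ is loosest at the smallest $a$. Both budget terms $a\sum z^{\pm}$ are increasing in $a$. Hence minimising over $a\in[\alpha,1]$ puts $a=\alpha$, which yields exactly the stated $\kappa_i(\alpha)$.

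One subtlety needs checking: the sparsity budget counts supports, not amplitudes. The relaxation via Proposition~\ref{prop:knapsack} replaced support counts by $\mathbf 1^{\top}u$, which is precisely where amplitudes wash out. The factor $\alpha$ therefore enters only through the rescaling of the active coordinate $i$. I would double-check this bookkeeping against the $\alpha=1$ case, which must reproduce \eqref{eq:kappa}.

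\emph{Step 3: the remaining claims.} Nesting in $s$ gives the ``every $s'\le s$ iff $\kappa_i(\alpha)>s$'' form, exactly as before. For monotonicity, note that for $\alpha\le\alpha'$ the feasible set for $\alpha'$ is contained in that for $\alpha$, since the box $\norm{z}_\infty\le1/\alpha'$ is the smaller one. Moreover the objective at $\alpha$ is pointwise no larger than at $\alpha'$. So $\kappa_i(\alpha)\le\kappa_i(\alpha')$.

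For $\alpha\to0$, take any fixed feasible $z$ (one exists whenever $\phi_i$ lies in the range of $\Phi_{-i}$, which holds for full row rank $\Phi_{-i}$). Then the objective tends to $1$, so the collision occurs already at $s=1$. Directly, the active state $\alpha\phi_i\to0=\Phi\cdot 0$ means the hull distance vanishes, and no uniform margin exists.
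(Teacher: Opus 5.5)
Your proof is correct and follows the route the paper intends: the corollary has no separate proof, and it is exactly the argument of Theorem~\ref{thm:frontier} with these ingredients. You apply Proposition~\ref{prop:knapsack} to the coordinates other than $i$, let the $i$-th coordinate range over $[\alpha,1]$, rescale by $z=(v-u)/a$, and note that $a=\alpha$ is optimal; monotonicity then follows from the shrinking box and the pointwise-increasing objective.

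One small correction concerns the claim that $\kappa_i(\alpha)\to1$ puts a collision at $s=1$. For a fixed $\alpha>0$ that would require $\kappa_i(\alpha)=1$, which needs $\phi_i$ in the conic hull of the other columns. In general the limit only gives a collision at every $s\ge2$ once $\alpha$ is small. The $\alpha\to0$ clause should therefore rest on your direct argument: $\alpha\phi_i\to0\in\operatorname{conv}\mathcal B_i(s)$, which drives the hull distance, and hence the margin, to zero.
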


\begin{remark}[The affine level is a support-level worst case, not a weighted average]
\label{rem:notdistributional}
It would be convenient to call Corollary~\ref{cor:kappa-alpha} a distribution-aware frontier. It
is not, and the distinction matters. By Proposition~\ref{prop:knapsack} two state laws with the
same support family and the same minimum amplitude give the \emph{same} $\kappa_i(\alpha)$,
however differently they weight the states in between. The affine level therefore reads exactly
one scalar off the state law --- the smallest detectable amplitude --- whereas by
Proposition~\ref{prop:secondmoment} the analog level reads the full second moment $C$. That
asymmetry is a property of the two criteria, not an artefact of our formulation: separability is
a statement about a worst case over a support family, and a worst case cannot be reweighted. A
genuinely probabilistic version, guaranteeing separability for all but a $\delta$ fraction of
states, is possible but needs machinery we do not open here.
\end{remark}

\subsection{Margin, certificate, and a dimensional ceiling}
\label{sec:frontier-margin}

The verdict of Theorem~\ref{thm:frontier} is binary. Two further quantities say how comfortably
it holds and how it fails, and one says that no code can do well beyond a certain sparsity. The
first two are stated for the exactly-$s$ parameterisation in which they were derived and
verified, where the admissible set carries the affine constraint $\mathbf 1^{\top}z=1$ and an
$\ell_1$ budget; we flag that rather than restate them without proof.

\begin{proposition}[The robust margin is half the hull distance]
\label{prop:margin}
Let $\delta_i(s)=\operatorname{dist}(\mathcal A_i(s),\mathcal B_i(s))$ be the Euclidean distance
between the two hulls. The maximum-margin affine decoder for feature $i$ has
$\norm{w^{\star}}=1/\delta_i(s)$ and margin $\gamma_i(s)=1/(2\norm{w^{\star}})=\delta_i(s)/2$.
Consequently, with the optimal unit-norm $w$ and its centred bias, every perturbed input
$x+\eta$ with $\norm{\eta}_2<\gamma_i(s)$ still receives the correct label on every admissible
support.
\end{proposition}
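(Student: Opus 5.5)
The plan is the standard hard-margin SVM duality between two convex sets, applied to the compact convex polytopes $K_A=\operatorname{conv}\mathcal A_i(s)$ and $K_B=\operatorname{conv}\mathcal B_i(s)$. These are images under $\Phi$ of the knapsack-type polytopes of Proposition~\ref{prop:knapsack}, hence compact; here $\delta_i(s)$ is read as the distance between these hulls. First I will write the maximum-margin problem in canonical form: minimise $\norm{w}$ subject to $w^{\top}x-\theta\ge1$ on $\mathcal A_i(s)$ and $w^{\top}x-\theta\le-1$ on $\mathcal B_i(s)$. Since these constraints are affine in $(w,\theta)$, they extend verbatim to the hulls, so only $K_A$ and $K_B$ matter. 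Assuming $\delta_i(s)>0$, which by Theorem~\ref{thm:frontier} is the separable case (otherwise the statement is vacuous with $\gamma_i=0$), I then proceed in two directions.

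For the lower bound on $\norm{w}$, take any feasible $(w,\theta)$ and any $a\in K_A$, $b\in K_B$. Then $w^{\top}(a-b)\ge2$, so Cauchy--Schwarz gives $2\le\norm{w}\norm{a-b}$. Choosing the closest pair $(a^{\star},b^{\star})$, which exists by compactness, yields $\norm{w}\ge2/\delta_i(s)$ for this $\pm1$ normalisation. For achievability, set $w=2(a^{\star}-b^{\star})/\delta_i(s)^2$ with bias at the midpoint, $\theta=w^{\top}(a^{\star}+b^{\star})/2$. The step that needs care is showing that this $w$ satisfies the constraints on all of $K_A$ and $K_B$, not just at $a^{\star},b^{\star}$. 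I would use the variational inequality for the projection onto a convex set: $a^{\star}$ is the projection of $b^{\star}$ onto $K_A$, so $(a-a^{\star})^{\top}(a^{\star}-b^{\star})\ge0$ for all $a\in K_A$, and symmetrically for $K_B$. This gives $w^{\top}a\ge w^{\top}a^{\star}$ on $K_A$ and $w^{\top}b\le w^{\top}b^{\star}$ on $K_B$, hence the constraints hold. This step is the main obstacle, and I expect it to be routine.

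Next I reconcile the normalisation with the statement. The paper's $\norm{w^{\star}}=1/\delta_i(s)$ together with $\gamma=1/(2\norm{w^{\star}})$ corresponds to labels at $\pm\tfrac12$, equivalently a functional gap of $1$ between the classes. Under that convention the same argument gives $\norm{w^{\star}}=1/\delta_i(s)$, and the geometric margin, defined as the distance from each hull to the hyperplane, is $\tfrac12/\norm{w^{\star}}=\delta_i(s)/2$. I will state this convention explicitly, since the result is otherwise off by a factor of $2$.

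Finally, for robustness I normalise to $\hat w=(a^{\star}-b^{\star})/\delta_i(s)$ with the centred bias $\hat\theta=\hat w^{\top}(a^{\star}+b^{\star})/2$. By the previous step, every $x\in\mathcal A_i(s)$ has $\hat w^{\top}x-\hat\theta\ge\delta_i(s)/2$, and every $x\in\mathcal B_i(s)$ has $\hat w^{\top}x-\hat\theta\le-\delta_i(s)/2$. Since $|\hat w^{\top}\eta|\le\norm{\eta}_2<\gamma_i(s)$, the sign is preserved for every admissible support. This completes the plan.
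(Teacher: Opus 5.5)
Your proposal is correct and is essentially the paper's argument. The paper writes the programme as $\min\norm{w}$ subject to $\min_{u\in D}w^{\top}u\ge1$ with $D=\mathcal A_i(s)-\mathcal B_i(s)$, which is exactly your unit functional gap and the reason it obtains $1/\delta_i(s)$; it then cites support-function duality, $\max_{\norm{w}\le1}\min_{u\in D}w^{\top}u=\operatorname{dist}(0,D)$, and the perturbation step is the same bound $|w^{\top}\eta|\le\norm{w}\norm{\eta}$, so your Cauchy--Schwarz lower bound and closest-pair/projection construction simply prove that duality directly (your passage to the hulls also makes explicit that the distance is to $\operatorname{conv}\mathcal A_i(s)-\operatorname{conv}\mathcal B_i(s)$).
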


\begin{proof}
The margin programme is $\min\norm{w}$ subject to $\min_{u\in D}w^{\top}u\ge1$ with
$D=\mathcal A_i(s)-\mathcal B_i(s)$. Support-function duality gives
$\max_{\norm{w}\le1}\min_{u\in D}w^{\top}u=\operatorname{dist}(0,D)=\delta_i(s)$, whence
$\norm{w^{\star}}=1/\delta_i(s)$. For the perturbation claim, the score moves by at most
$\norm{w}\norm{\eta}$.
\end{proof}

The margin is therefore a certified noise tolerance in the units of the representation, not
merely a scale, and $\delta_i(s)=0$ holds exactly when the feature is not separable --- the
verdict and the margin are two readings of one object.

\begin{proposition}[Failure comes with a finite, independently checkable certificate]
\label{prop:farkas}
If feature $i$ is not separable at sparsity $s$, there are states
$x^{+}_1,\dots,x^{+}_m\in\mathcal A_i(s)$, states $x^{-}_1,\dots,x^{-}_n\in\mathcal B_i(s)$ and
convex weights $\lambda,\mu$ with
\begin{equation}
  \sum_{k}\lambda_kx^{+}_k\;=\;\sum_{l}\mu_lx^{-}_l .
  \label{eq:farkas}
\end{equation}
No affine rule can label all of them correctly, since affinity would force the same point to lie
strictly on both sides of $\theta$. By Radon's theorem the obstruction can be compressed to at
most $d+2$ states.
\end{proposition}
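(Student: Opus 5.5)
The plan is to argue by separation of convex sets and then compress with Carathéodory/Radon. Separability of $\mathcal A_i(s)$ and $\mathcal B_i(s)$ by a strict affine rule is equivalent to the disjointness of their convex hulls. Both sets are finite, because $b$ ranges over a finite Boolean family, so the hulls are polytopes. Strict separation of two disjoint compact convex sets then follows from the separating hyperplane theorem. Hence, if feature $i$ is not separable at sparsity $s$, the hulls must meet.

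Concretely, we pick a point $p\in\operatorname{conv}\mathcal A_i(s)\cap\operatorname{conv}\mathcal B_i(s)$. Writing $p$ as a convex combination on each side yields states $x^+_k$, $x^-_l$ and convex weights $\lambda,\mu$ satisfying \eqref{eq:farkas}. This is the finite certificate, and checking it requires only verifying a linear equality and the supports of the states.

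Next we show that no affine rule labels all of these states correctly. Suppose $w^\top x^+_k>\theta$ for every $k$ and $w^\top x^-_l<\theta$ for every $l$. Because $x\mapsto w^\top x-\theta$ is affine and the weights are convex, evaluating at $p$ gives $w^\top p-\theta>0$ from the left side of \eqref{eq:farkas} and $w^\top p-\theta<0$ from the right side. This is a contradiction.

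The compression to $d+2$ states is the step needing care, and it is the main obstacle because Radon's theorem as usually stated does not directly bound a two-coloured intersection. I would instead apply Carathéodory's theorem in $\R^{d+1}$ to the lifted point set $\{(x,1):x\in\mathcal A_i(s)\}\cup\{(-x,-1):x\in\mathcal B_i(s)\}$. The relation \eqref{eq:farkas} says that $0$ lies in the convex hull of this lifted set: rescale by $\tfrac12$ and note $\sum_k\lambda_k(x^+_k,1)+\sum_l\mu_l(-x^-_l,-1)=0$. Carathéodory then expresses $0$ using at most $d+2$ lifted points. The last coordinate forces both colours to appear with equal total weight, so renormalising recovers convex weights $\lambda',\mu'$ on at most $d+2$ states in total satisfying \eqref{eq:farkas}.

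Equivalently, this is Radon-type compression via an affine-dependence argument. We eliminate any linear dependence among the $(d+1)$-dimensional lifted vectors while keeping all weights nonnegative, so that at most $d+2$ vectors remain. I would present the Carathéodory version and remark that it is the Radon-style bound claimed in the statement.
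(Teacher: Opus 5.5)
Your proof is correct. On the first two claims it coincides with the paper, which gives no separate proof. It relies instead on the remark before Theorem~\ref{thm:frontier}, that an affine functional separates two finite sets exactly when it separates their hulls, and on the same evaluate-at-the-common-point contradiction you write out.

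You diverge on the compression step. The paper simply cites Radon's theorem, and your hesitation is justified. Radon's theorem says that \emph{any} $d+2$ points split into two parts with meeting hulls; it does not say that a \emph{given} two-coloured intersection survives on at most $d+2$ of the given states. The statement actually needed is Kirchberger's theorem. Your lift of active states to $(x,1)$ and inactive states to $(-x,-1)$ in $\R^{d+1}$, followed by Carath\'eodory, is one of its standard proofs. The last coordinate fixes each colour's total weight at $\tfrac12$, so each colour keeps at least one state and both sides renormalise to convex weights.

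What your route buys is a complete argument where the paper has only a citation that, read literally, does not deliver the claim. It is also constructive: repeatedly cancelling a dependence among the lifted vectors while keeping the weights nonnegative is exactly the compression the paper reports as not implemented in its witness extractor (Section~\ref{sec:witness}). Finally, it uses nothing beyond finiteness of $\mathcal A_i(s)$ and $\mathcal B_i(s)$. It therefore holds equally for the exactly-$s$ families in which the paper says this proposition was first derived.
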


\begin{remark}[What the certificate is, and what it is not]
\label{rem:cert}
Equation~\eqref{eq:farkas} is a convex combination of active states equal to a convex combination
of inactive ones. It is \emph{not} a single pair of colliding supports, and describing it as one
would be wrong: the obstruction is generically spread over several states on each side. This
matters practically, because the certificate is what makes the negative verdict auditable. ``The
solver reported infeasible'' is not evidence; a list of supports from which any third party can
rebuild the states from $\Phi$ and recompute~\eqref{eq:farkas} is. Our implementation stores the
supports rather than the cut vectors for exactly that reason, and re-verifies them in a process
that never sees the solver.
\end{remark}

\begin{proposition}[A dimensional ceiling on the affine level]
\label{prop:circuit}
Call a set $C$ of columns an \emph{affine circuit} if it is minimally affinely dependent:
$\sum_{j\in C}l_j\phi_j=0$ with $\sum_{j\in C}l_j=0$ and every $l_j\neq0$. Let $q$ be the
smallest size of an affine circuit of $\Phi$. Since any $d+2$ points in $\R^d$ are affinely
dependent, $q\le d+2$ whenever $F>d+1$. Moreover the exactly-$s$ collision radius satisfies
$\rho_{\min}\le q-1\le d+1$, and hence no code supports featurewise affine support recovery
beyond sparsity $\lceil d/2\rceil$.
\end{proposition}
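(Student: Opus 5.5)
The plan is to turn a smallest affine circuit directly into a feasible point of the collision programme~\eqref{eq:kappa} for a well-chosen feature of the circuit. I will not build Boolean collisions by hand. Theorem~\ref{thm:frontier}, via the integrality in Proposition~\ref{prop:knapsack}, already lets fractional coefficients in $[0,1]$ stand for convex combinations of Boolean states, so a feasible $z$ is all that is needed.

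\emph{Step 1 (existence of a short circuit).} If $F>d+1$, any $d+2$ columns $\phi_{j_1},\dots,\phi_{j_{d+2}}$ are affinely dependent. The reason is that the $d+1$ difference vectors $\phi_{j_k}-\phi_{j_1}$, $k\ge2$, lie in $\R^d$ and so are linearly dependent. This yields coefficients $l$, not all zero, with $\sum l_j\phi_j=0$ and $\sum l_j=0$. Restricting to an inclusion-minimal dependent subset gives an affine circuit of size at most $d+2$, hence $q\le d+2$.

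\emph{Step 2 (circuit to feasible $z$).} Fix a circuit $C$ of size $q$ with coefficients $l$. Choose $i\in C$ maximising $|l_i|$, and flip the sign of $l$ if necessary so that $l_i>0$. Set $z_j=-l_j/l_i$ for $j\in C\setminus\{i\}$ and $z_j=0$ otherwise. Then $\Phi_{-i}z=\phi_i$ follows from $\sum l_j\phi_j=0$. Maximality of $|l_i|$ gives $\norm{z}_\infty\le1$, so $z$ is admissible in~\eqref{eq:kappa}. The affine condition $\sum l_j=0$ gives $\sum_j z_j=1$, i.e.\ $\sum z^{+}_j=1+\sum z^{-}_j$. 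The two arguments of the $\max$ in~\eqref{eq:kappa} therefore coincide. Their common value $t$ satisfies $2t-1=\norm{z}_1\le|C\setminus\{i\}|=q-1$, because every entry has modulus at most $1$. Hence $\kappa_i\le q/2$.

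\emph{Step 3 (translation to the two claims).} For the exactly-$s$ radius, the admissible set carries $\mathbf 1^{\top}z=1$ and an $\ell_1$ budget, which our $z$ satisfies with $\norm{z}_1\le q-1$. This gives $\rho_{\min}\le q-1\le d+1$. For the ceiling, Theorem~\ref{thm:frontier} says feature $i$ stops being separable at $\lceil\kappa_i\rceil\le\lceil q/2\rceil\le\lceil (d+2)/2\rceil$. So the largest separable sparsity for that feature is at most $\lceil d/2+1\rceil-1=\lceil d/2\rceil$. Since featurewise recovery requires every feature to be separable, no code supports it beyond sparsity $\lceil d/2\rceil$.

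The main obstacle is Step 2: choosing the pivot feature so that the box constraint $\norm{z}_\infty\le1$ holds. Picking the largest-magnitude coefficient resolves it, and the sign flip is harmless because circuits are closed under negation. The other delicate point is the bookkeeping that links the exactly-$s$ radius $\rho_{\min}$, whose definition is only sketched in the text, to the $\ell_1$ budget. I would state explicitly that $\rho_{\min}$ is the least $\ell_1$ budget admitting a feasible $z$ with $\mathbf 1^{\top}z=1$, so that the bound $\norm{z}_1\le q-1$ transfers verbatim.
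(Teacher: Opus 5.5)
Your proof is correct and uses the same construction as the paper: pivot on the circuit coefficient of largest modulus, set $z_j=-l_j/l_i$, and read off $\Phi_{-i}z=\phi_i$, $\mathbf 1^{\top}z=1$, $\norm{z}_\infty\le1$ and $\norm{z}_1\le q-1$, which gives $\rho_{\min}\le q-1$. The only difference is the final step. The paper takes the ceiling from the exactly-$s$ threshold $2\min(s,F-s)-1$, which it quotes rather than proves in this version. You instead note that $\mathbf 1^{\top}z=1$ makes both arms of the $\max$ in~\eqref{eq:kappa} equal to $(\norm{z}_1+1)/2$, so $\kappa_i\le q/2$, and Theorem~\ref{thm:frontier} then yields the same $\lceil d/2\rceil$ from a result the paper does prove, without the $s\le F/2$ restriction the exactly-$s$ rule carries.
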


\begin{proof}
Given a circuit $C$, pick $i\in C$ with $|l_i|$ largest and set $z_j=-l_j/l_i$ for
$j\in C\setminus\{i\}$ and $z_j=0$ otherwise. Then $\Phi_{-i}z=\phi_i$; the affine constraint
$\mathbf 1^{\top}z=1$ holds automatically, because $\sum_{j\in C}l_j=0$ forces
$\sum_{j\neq i}l_j=-l_i$; $\norm{z}_\infty\le1$ by the choice of $i$; and
$\norm{z}_1\le q-1$. Hence $\rho_i\le q-1$. The sparsity bound follows from the threshold
$2\min(s,F-s)-1$ of the exactly-$s$ rule.
\end{proof}

This is the affine counterpart of Theorem~\ref{thm:floor}: one bounds error from below, the other
bounds sparsity from above, and both follow from the ambient dimension alone.


\section{Released dictionaries are not near the floor}
\label{sec:dictionaries}

A sparse autoencoder's decoder is an overcomplete code in exactly the sense of
Section~\ref{sec:setting}: $F$ unit-norm directions in $d<F$ dimensions, used in practice by taking
inner products with feature directions. It is therefore an object the diagnostic applies to without
adaptation, and it is one that people deploy. This section measures \SaeDicts{} of them.

\paragraph{The gauge, stated before anything is measured.} Leverage is not invariant to rescaling
a column, so a measurement of it is meaningless without a declared normalisation. Every decoder
analysed here, and every control it is compared against, is normalised to unit column norms before
anything is computed (\code{unit\_cols} in \code{scripts/sae\_diagnostic.py}). That is the same
convention the theory of Section~\ref{sec:floor} assumes, and it is what makes the comparison against
a shape-matched control a comparison of directions rather than of scales. Without it the excess we
report could be manufactured: shrinking a handful of columns of an i.i.d.\ code, directions untouched,
moves its ratio further than any dictionary in this section sits above its control.

A residual of the same kind survives the gauge and we do not resolve it. A dead latent --- a decoder
column whose encoder never fires on the data --- still has a direction, still enters the row space,
and still moves the leverage of every other feature, while changing nothing the model computes. So
two functionally identical autoencoders can receive different scores here. Fixing that needs matched
activation data to define the active support, which this section deliberately does not use; we return
to it in Section~\ref{sec:limitations}.

\paragraph{What is and is not computed.} Everything below comes from a decoder matrix. No model
weights are loaded, no activations are collected, no autoencoder is trained, and no linear programme
is solved. The cost is seconds per dictionary, which is why the section can span five model families
rather than one. The price is that these are statements about the geometry a dictionary presents to a
linear reader, not about what its features mean or how well it reconstructs; the reconstruction
metrics its authors report are a different and complementary measurement.

\paragraph{Design.} Four axes, chosen so that each isolates one variable
(Table~\ref{tab:sae}). \emph{Scale}: Qwen-Scope residual dictionaries at $d=2048$ and $d=4096$ with
load and recipe held at $16\times$ and TopK. \emph{Load}: EleutherAI's Pythia-160m dictionaries at
one layer with $k$ fixed at 64 and width varying, giving \SaeLoadOneLoad$\times$ to
\SaeLoadThreeLoad$\times$ at constant $d$. \emph{Causal}: EleutherAI's SmolLM2-135M pair, one arm
trained on the model and one on a randomly initialised copy of it, same recipe, same data, same ten
layers. \emph{Architecture}: the MLP down-projection of LFM2.5-350M, a hybrid convolution/attention
model rather than a standard transformer.

Every code is compared against an i.i.d.\ Gaussian code of \emph{its own} shape, evaluated at
\emph{its own} published operating sparsity. That matching is not cosmetic. Because $\sum_i h_i=d$,
the mean leverage of any code is exactly $d/F$, so absolute quantities like the fraction of features
a bound rules out are properties of the shape and not of the dictionary. Only the departure from a
shape-matched control carries information about the code.

\subsection{Near-floor geometry is generic}

Each control is a single i.i.d.\ draw at its block's shape rather than a Monte Carlo ensemble.
That is deliberate and it is defensible: by Proposition~\ref{prop:leverage} the ratio is a function
of the leverage profile, whose concentration at these dimensions is tight enough that the whole set
of controls spans less than \SaeCtlRgeomMax{} $-$ \SaeCtlRgeomMin{} in $\Rgeom$ across every shape
and load we measure. A reader who wants the sampling spread rather than our word for it can generate
it in seconds from the released script.

The \SaeControls{} matched controls attain the floor: $\Rgeom$ from \SaeCtlRgeomMin{} to
\SaeCtlRgeomMax{} across every shape and every load from \SaeLoadMin$\times$ to
\SaeLoadMax$\times$, with leverage dispersion \SaeCtlCvMin{} to \SaeCtlCvMax{}. This is
Proposition~\ref{prop:leverage} in action rather than a coincidence: an i.i.d.\ code concentrates its
leverage tightly around $d/F$, and a code with equalised leverage sits at the floor by the identity.

The consequence for how such ratios are read is direct. A reported attainment ratio near one, under a
pseudoinverse readout, is consistent with a code that has learned nothing at all. It becomes evidence
about training only when compared against a shape-matched baseline, and that comparison is the first
thing we could not find in the work we build on.

It is worth seeing how little the ratio constrains a code, because the identity makes the extreme
case constructible. Stack $k$ copies of the same orthonormal basis, $\Phi=[I_d\;I_d\;\cdots\;I_d]$
with $F=kd$. Then $\Phi\Phi^{\top}=k I_d$, every leverage score is exactly $1/k=d/F$, and by
Proposition~\ref{prop:leverage} the code attains the floor exactly: $\Rgeom=1$. Yet features $i$ and
$i+d$ have identical directions, so no readout of any kind --- linear, affine or otherwise --- can
tell which of the two is active. A code can therefore sit exactly on the rank--trace floor and be
maximally unidentifiable. Attaining the floor is not a certificate of anything; it is the statement
that the unavoidable cross-talk is spread evenly, and evenness is compatible with duplication. This
cuts both ways, and we take the cut: it is why we report the excess above a matched control as a
measured difference in leverage dispersion, and not as a claim that one code is ``better conditioned''
than another.

\subsection{No released dictionary we measured is at the floor}

All \SaeDictsAboveControl{} of the \SaeDicts{} dictionaries sit above their own control, at $\Rgeom$
from \SaeDictRgeomMin{} to \SaeDictRgeomMax{}. There is no exception across two decoder sites
(residual stream and MLP), three training recipes (TopK, JumpReLU-style and a hybrid model's own
projection), five model families, and a twentyfold range of load.

The direction is worth pausing on, because it is the opposite of the natural guess. One might expect
a dictionary trained to represent a model's features to be at least as well arranged for linear
readout as an unstructured code. It is not: a random code of the same shape carries strictly less
cross-talk under the best linear readout each admits. Whatever these dictionaries optimise --- sparse
reconstruction of activations, under an $\ell_0$ or TopK constraint --- it is not the evenness of
their own leverage, and the two objectives are not aligned.

By the identity, the excess has a single source: leverage dispersion, \SaeDictCvMin{} to
\SaeDictCvMax{} against the controls' \SaeCtlCvMin{} to \SaeCtlCvMax{}. A released dictionary
allocates its directions unevenly with respect to the space it lives in, and a handful of features
with small $h_i$ dominate the harmonic sum that sets the ratio.

\paragraph{The second statistic agrees, in sparsity units.} The diagnostic returns a failure
threshold as well as a ratio: the smallest sparsity at which Corollary~\ref{cor:failthresh}
guarantees that some feature cannot be recovered by any affine rule. Its absolute value is a property
of the shape --- at the operating sparsities these dictionaries are used at, the sufficient condition
rules out essentially every feature of the controls too --- so it carries information only as a
matched difference. Matched, it is unanimous: all \SaeFailEarlier{} of \SaeDicts{} dictionaries reach
guaranteed failure \emph{earlier} than the i.i.d.\ code of their own shape and operating sparsity, by
\SaeFailGapMin{} to \SaeFailGapMax{} sparsity levels, with the trained thresholds running from
\SaeFailTrainedMin{} to \SaeFailTrainedMax{}. This is a different functional of the leverage
distribution from $\Rgeom$ --- a worst-case statement about one feature rather than an average over
all of them --- and it points the same way, which is the main reason we report it.

\paragraph{Depth matters more than width, but not in one clean ordering.} The excess is far from
uniform across a model, and within a suite it varies by more than any width effect we measure below.
In the three Qwen suites the pattern is clean: mid-depth dictionaries sit furthest from the floor and
the deepest layer sampled sits closest --- $\Rgeom=\SaeDictRgeomMax{}$ at layer 17 of Qwen3-8B
against \SaeQwenDeepestRgeom{} at its own last layer. In the other two suites it is not. The deepest
LFM2.5 layer is \SaeLfmDeepestRgeom{} while the closest to the floor is the layer before it
(\SaeLfmClosestRgeom{}), and the deepest SmolLM2 layer is \SaeSmolDeepestRgeom{} against a minimum of
\SaeSmolClosestRgeom{} at layer 3. So the safe statement is the one the data supports in every
suite: the layer a dictionary is trained on changes its linear readability by far more than its
width does, and the direction of that change is not the same in every family.

\subsection{A randomly initialised control shows where the excess comes from}

The obvious objection to the previous subsection is that training a TopK autoencoder might produce
uneven leverage on any input, in which case the excess would say nothing about the model. An i.i.d.\
control cannot answer this, because it does not share the training procedure.

EleutherAI released the pair that does. Two dictionaries trained by an identical recipe on identical
data, one on SmolLM2-135M and one on a \emph{randomly initialised} copy of it, at the same ten layers
and the same $64\times$ expansion. The randomly initialised arm returns to the floor:
$\Rgeom=\SaeRandInitRgeomMin$ to \SaeRandInitRgeomMax{} with dispersion \SaeRandInitCvMin{} to
\SaeRandInitCvMax{}, against the trained arm's excess in all \SaePairedLayers{} paired layers. Its
values are also nearly flat across depth, where the trained arm has clear structure.

Fitting a sparse autoencoder therefore does not, by itself, produce dispersed leverage. The excess
tracks what the model represents. This is the strongest causal statement the section supports, and it
rests on one model pair, which is the only such pair we are aware of --- a limitation we return to in
Section~\ref{sec:limitations}.

\subsection{Load and dispersion come apart}

Holding $d$ and $k$ fixed and varying only width, the ratio is flat while the dispersion grows:
$\Rgeom$ of \SaeLoadOneRgeom, \SaeLoadTwoRgeom{} and \SaeLoadThreeRgeom{} at loads of
\SaeLoadOneLoad$\times$, \SaeLoadTwoLoad$\times$ and \SaeLoadThreeLoad$\times$, while
$\mathrm{cv}(h)$ climbs from \SaeLoadOneCv{} through \SaeLoadTwoCv{} to \SaeLoadThreeCv{}.

Without the identity this reads as an anomaly. With it, it is a prediction: $\Rgeom$ is set by
$\sum_i h_i^{-1}$, a harmonic quantity dominated by the smallest leverage scores, whereas the
coefficient of variation is a second-moment summary sensitive to the upper tail. Extra spread at
higher load can therefore accumulate in a part of the distribution the ratio does not see. Across all
\SaeDicts{} dictionaries the excess is ranked almost perfectly by the harmonic-to-arithmetic gap of
the leverage and only well by the dispersion, which is the same statement measured rather than
derived.

Two things follow for practice. Widening a dictionary at fixed $d$ does not improve --- and does not
much worsen --- how well it can be read linearly. And the coefficient of variation, the obvious
summary to reach for, is the wrong one: it moves when the ratio does not.

\begin{table}[t]
\centering
\caption{The Interface Diagnostic on \SaeDicts{} released dictionaries, grouped by the axis each
block isolates, with every i.i.d.\ control matched to the shape and operating sparsity of the block
it judges. $s$ is the dictionary's own published operating sparsity. The
\emph{trained against randomly initialised} block is a matched intervention rather than an
observation: both arms come from the same recipe, data and layers, and differ only in whether the
underlying model was trained. Generated from \texttt{results/sae/derived/sae\_axes.json}; the SHA-256
of every checkpoint analysed is recorded there.}
\label{tab:sae}
\small
\begin{tabular}{llrrrrrr}
\toprule
code & & $d$ & $F$ & $F/d$ & $s$ & $R_{\mathrm{geom}}$ & $\mathrm{cv}(h)$ \\
\midrule
\multicolumn{8}{l}{\itshape Scale: $d$ varies, load fixed at $16\times$} \\
Qwen3-1.7B layer1 &  & 2048 & 32768 & 16.0 & 50 & 1.1371 & 0.356 \\
Qwen3-1.7B layer18 &  & 2048 & 32768 & 16.0 & 50 & 1.1218 & 0.274 \\
Qwen3-1.7B layer27 &  & 2048 & 32768 & 16.0 & 50 & 1.0055 & 0.102 \\
Qwen3-1.7B layer9 &  & 2048 & 32768 & 16.0 & 50 & 1.1849 & 0.331 \\
Qwen3-8B L17 &  & 4096 & 65536 & 16.0 & 50 & 1.2018 & 0.352 \\
Qwen3-8B L30 &  & 4096 & 65536 & 16.0 & 50 & 1.0178 & 0.126 \\
Qwen3-8B \_L4 &  & 4096 & 65536 & 16.0 & 50 & 1.0812 & 0.262 \\
Qwen3.5-2B layer1 &  & 2048 & 32768 & 16.0 & 50 & 1.0336 & 0.180 \\
Qwen3.5-2B layer16 &  & 2048 & 32768 & 16.0 & 50 & 1.0798 & 0.259 \\
Qwen3.5-2B layer23 &  & 2048 & 32768 & 16.0 & 50 & 1.0306 & 0.187 \\
Qwen3.5-2B layer8 &  & 2048 & 32768 & 16.0 & 50 & 1.1704 & 0.356 \\
i.i.d. 2048x32768 & control & 2048 & 32768 & 16.0 & 50 & 1.0001 & 0.008 \\
i.i.d. 4096x65536 & control & 4096 & 65536 & 16.0 & 50 & 1.0000 & 0.005 \\
\addlinespace[3pt]
\multicolumn{8}{l}{\itshape Load: $F/d$ varies, $d$ and $k$ fixed} \\
Pythia-160m 4k &  & 768 & 4084 & 5.3 & 64 & 1.0674 & 0.276 \\
Pythia-160m 32k &  & 768 & 32768 & 42.7 & 64 & 1.0749 & 0.404 \\
Pythia-160m 65k &  & 768 & 65536 & 85.3 & 64 & 1.0734 & 0.469 \\
i.i.d. 768x4084 & control & 768 & 4084 & 5.3 & 64 & 1.0005 & 0.020 \\
i.i.d. 768x32768 & control & 768 & 32768 & 42.7 & 64 & 1.0001 & 0.008 \\
i.i.d. 768x65536 & control & 768 & 65536 & 85.3 & 64 & 1.0000 & 0.005 \\
\addlinespace[3pt]
\multicolumn{8}{l}{\itshape Trained model against a randomly initialised one} \\
SmolLM2 random L0 &  & 576 & 36864 & 64.0 & 64 & 1.0038 & 0.060 \\
SmolLM2 random L12 &  & 576 & 36864 & 64.0 & 64 & 1.0063 & 0.077 \\
SmolLM2 random L15 &  & 576 & 36864 & 64.0 & 64 & 1.0063 & 0.077 \\
SmolLM2 random L18 &  & 576 & 36864 & 64.0 & 64 & 1.0064 & 0.078 \\
SmolLM2 random L21 &  & 576 & 36864 & 64.0 & 64 & 1.0063 & 0.077 \\
SmolLM2 random L24 &  & 576 & 36864 & 64.0 & 64 & 1.0065 & 0.078 \\
SmolLM2 random L27 &  & 576 & 36864 & 64.0 & 64 & 1.0065 & 0.078 \\
SmolLM2 random L3 &  & 576 & 36864 & 64.0 & 64 & 1.0045 & 0.066 \\
SmolLM2 random L6 &  & 576 & 36864 & 64.0 & 64 & 1.0057 & 0.074 \\
SmolLM2 random L9 &  & 576 & 36864 & 64.0 & 64 & 1.0061 & 0.076 \\
SmolLM2 trained L0 &  & 576 & 36864 & 64.0 & 64 & 1.0512 & 0.307 \\
SmolLM2 trained L12 &  & 576 & 36864 & 64.0 & 64 & 1.0302 & 0.248 \\
SmolLM2 trained L15 &  & 576 & 36864 & 64.0 & 64 & 1.1160 & 0.351 \\
SmolLM2 trained L18 &  & 576 & 36864 & 64.0 & 64 & 1.1351 & 0.397 \\
SmolLM2 trained L21 &  & 576 & 36864 & 64.0 & 64 & 1.0253 & 0.198 \\
SmolLM2 trained L24 &  & 576 & 36864 & 64.0 & 64 & 1.0355 & 0.279 \\
SmolLM2 trained L27 &  & 576 & 36864 & 64.0 & 64 & 1.0319 & 0.211 \\
SmolLM2 trained L3 &  & 576 & 36864 & 64.0 & 64 & 1.0105 & 0.177 \\
SmolLM2 trained L6 &  & 576 & 36864 & 64.0 & 64 & 1.0216 & 0.190 \\
SmolLM2 trained L9 &  & 576 & 36864 & 64.0 & 64 & 1.0207 & 0.198 \\
i.i.d. 576x36864 & control & 576 & 36864 & 64.0 & 64 & 1.0001 & 0.007 \\
\addlinespace[3pt]
\multicolumn{8}{l}{\itshape Hybrid convolution/attention MLP code} \\
LFM2.5-350M L0 (conv) &  & 1024 & 4608 & 4.5 & 64 & 1.0110 & 0.101 \\
LFM2.5-350M L1 (conv) &  & 1024 & 4608 & 4.5 & 64 & 1.0453 & 0.247 \\
LFM2.5-350M L10 (attn) &  & 1024 & 4608 & 4.5 & 64 & 1.0510 & 0.239 \\
LFM2.5-350M L11 (conv) &  & 1024 & 4608 & 4.5 & 64 & 1.0330 & 0.204 \\
LFM2.5-350M L12 (attn) &  & 1024 & 4608 & 4.5 & 64 & 1.0119 & 0.122 \\
LFM2.5-350M L13 (conv) &  & 1024 & 4608 & 4.5 & 64 & 1.0247 & 0.196 \\
LFM2.5-350M L14 (attn) &  & 1024 & 4608 & 4.5 & 64 & 1.0083 & 0.090 \\
LFM2.5-350M L15 (conv) &  & 1024 & 4608 & 4.5 & 64 & 1.0240 & 0.167 \\
LFM2.5-350M L2 (attn) &  & 1024 & 4608 & 4.5 & 64 & 1.0179 & 0.133 \\
LFM2.5-350M L3 (conv) &  & 1024 & 4608 & 4.5 & 64 & 1.0183 & 0.158 \\
LFM2.5-350M L4 (conv) &  & 1024 & 4608 & 4.5 & 64 & 1.0605 & 0.262 \\
LFM2.5-350M L5 (attn) &  & 1024 & 4608 & 4.5 & 64 & 1.0267 & 0.154 \\
LFM2.5-350M L6 (conv) &  & 1024 & 4608 & 4.5 & 64 & 1.0197 & 0.149 \\
LFM2.5-350M L7 (conv) &  & 1024 & 4608 & 4.5 & 64 & 1.0528 & 0.242 \\
LFM2.5-350M L8 (attn) &  & 1024 & 4608 & 4.5 & 64 & 1.0330 & 0.168 \\
LFM2.5-350M L9 (conv) &  & 1024 & 4608 & 4.5 & 64 & 1.0626 & 0.297 \\
i.i.d. 1024x4608 & control & 1024 & 4608 & 4.5 & 64 & 1.0004 & 0.018 \\
\addlinespace[3pt]
\bottomrule
\end{tabular}

\end{table}

\section{What training buys: controlled campaigns}
\label{sec:campaigns}
\label{sec:res-primary}

Released dictionaries tell us what trained codes look like; they cannot tell us what any particular
choice during training does to them, because nothing about them was under our control. This section
supplies the intervention. We train 400 small networks in which the loss, the width, the feature load
and the training sparsity are set by us, measure the same statistics on the resulting codes, and
retain every trained model so that later analyses need no retraining --- three of the results below
were computed months after the runs, from the saved weights alone.

\subsection{The network against an affine probe of its own representation}

The campaigns in Appendix~\ref{app:earlier} compare the network's thresholded output against
\emph{fixed} readouts of its code. That understates the baseline, and the audit that prompted this
campaign said so. Stage A replaces it: 180 models at $d\in\{50,100,200\}$ with $F=2d$, 20 seeds per cell, three code families
($L^4$, $L^2$, and an untrained random code), and an affine probe fitted over three families, two
representations, three threshold policies and a six-point regularisation grid, all selected on
validation with the test split opened once. \code{tests/test\_probes.py} poisons the test set and
asserts that no selected hyperparameter moves, so leakage is excluded structurally rather than by
inspection.

\paragraph{Splits, and where they cannot be disjoint.} Train, validation and test draw
\emph{disjoint supports}, which is a stronger guarantee than disjoint states and is what makes a
fitted probe's test score meaningful. It cannot hold unconditionally: at $s=1$ there are only $F$
supports in total --- 100, 200 and 400 at the three widths --- against the thousands of states each
split requests. The sampler therefore serves the test split first, caps the smallest sparsities at
what the space contains, and distributes the remainder proportionally; it records which sparsities
were exhausted and how many draws were rejected, and it raises rather than silently returning an
empty split. At $F=100$ the $s=1$ test set holds 50 states, not 4096, and the sparsities $s\in\{1,2\}$
are flagged exhausted. Every per-sparsity rate at those sparsities is therefore computed on fewer
states than the nominal budget, and its Wilson interval is correspondingly wider.

\paragraph{The comparison has to be matched, and matching it changes the answer.}
A first pass compared the network at a fixed $\theta=1/2$ against the best of all eighteen probe
configurations. That stacks three advantages on the probe: two thirds of the configurations tune
their threshold on validation while the network does not; two thirds read the \emph{pre}-ReLU state
$\Phi b$ while the network's output layer reads $\mathrm{ReLU}(\Phi b)$; and $\theta=1/2$ is not on a
common scale across a ridge prediction, a logistic probability and a hinge margin. Removing all
three --- probe restricted to the post-ReLU state, one validation-selected global threshold for each
side, differences paired within seed --- gives Table~\ref{tab:primary}.

\begin{table}[t]
\centering
\caption{The matched comparison: network minus best post-ReLU affine probe in $s_{95}$, paired
within seed, with a bootstrap interval over the models of each cell (20 at $d\le200$, \ScSeeds{} at
$d=400$). Both decoders read the same state and each gets one validation-selected global threshold.
The sign is a property of the code and not of the protocol: positive for $L^4$ and growing with
width, exactly zero for $L^2$ --- where both decoders are at $s_{95}=0$ and the zero is a joint
failure rather than an agreement --- and strongly negative for an untrained code, under the identical
procedure. Section~\ref{sec:probes} states what the comparison can and cannot mean. Generated from
\texttt{results/e7/derived/} and \texttt{results/e7\_stageC/derived/}.}
\label{tab:primary}
\small
\begin{tabular}{llcccc}
\toprule
code & $d$ & mean difference & 95\% interval & net / tie / probe & AUC difference \\
\midrule
$L^4$ & 50 & +0.00 & [+0.00, +0.00] & 0 / 20 / 0 & +0.0053 \\
$L^4$ & 100 & +0.35 & [+0.05, +0.60] & 9 / 9 / 2 & +0.0210 \\
$L^4$ & 200 & +0.35 & [+0.10, +0.60] & 6 / 14 / 0 & +0.0379 \\
$L^4$ & 400 & +2.00 & [+1.50, +2.50] & 10 / 0 / 0 & +0.0699 \\
\addlinespace[3pt]
$L^2$ & 50 & +0.00 & [+0.00, +0.00] & 0 / 20 / 0 & +0.0007 \\
$L^2$ & 100 & +0.00 & [+0.00, +0.00] & 0 / 20 / 0 & +0.0004 \\
$L^2$ & 200 & +0.00 & [+0.00, +0.00] & 0 / 20 / 0 & +0.0007 \\
$L^2$ & 400 & +0.00 & [+0.00, +0.00] & 0 / 10 / 0 & +0.0008 \\
\addlinespace[3pt]
untrained & 50 & -1.00 & [-1.00, -1.00] & 0 / 0 / 20 & -0.2122 \\
untrained & 100 & -2.00 & [-2.00, -2.00] & 0 / 0 / 20 & -0.2098 \\
untrained & 200 & -2.30 & [-2.50, -2.10] & 0 / 0 / 20 & -0.1805 \\
untrained & 400 & -2.60 & [-3.10, -2.10] & 0 / 0 / 10 & -0.1782 \\
\bottomrule
\end{tabular}

\end{table}

Across the \PrimModels{} trained models the two tie in \PrimTies, with \PrimNetWins{} models where
the network leads and \PrimProbeWins{} where the probe does. That single count needs breaking down
before it is read, because \PrimLtwoJointFailures{} of the ties are $L^2$ cells in which both
decoders sit at $s_{95}=0$: those are joint failures, not agreements, and counting them as ties
inflates the apparent evidence for equivalence. Among the \PrimLfourModels{} $L^4$ models the split
is \PrimLfourTies{} ties, \PrimNetWins{} to the network and \PrimProbeWins{} to the probe, and the
difference grows with width.

The registered outcome was that a properly trained, validation-selected affine decoder would
\emph{match} the network. Under $s_{95}$ it does at the smallest width and stops doing so as width
grows. Under the other co-primary estimand it does not hold even there: the AUC difference at $d=50$
is \AucFiftyDiff{} ([\AucFiftyCiLow, \AucFiftyCiHigh]) on \AucFiftyNetWins{} of \EsevenSeeds{} seeds,
so the registered prediction fails at every width once the finer statistic is used, and what $s_{95}$
shows at $d=50$ is a difference below its own resolution rather than a match. We
report that, and Section~\ref{sec:probes} states what the comparison can and cannot mean, because
the probe class contains the network's own decoder and the gap therefore measures the fitting
procedure rather than the expressiveness of affine maps.

\paragraph{The sign is set by the code, not by the protocol.} The obvious objection to the paragraph
above is that a protocol giving both sides a validation-selected threshold must end up favouring the
network, which has one output layer to calibrate against the probe's eighteen configurations. The
untrained arm answers it. Run through the identical procedure, an untrained code's own thresholded
readout \emph{loses} to the fitted affine probe by \PrimRandFiftyDiff{}, \PrimRandHundredDiff{},
\PrimRandTwoHundredDiff{} and \PrimRandFourHundredDiff{} sparsity levels at the four widths, on every
one of the \PrimRandTwoHundredProbeWins{} models per cell at $d\le200$ and all
\PrimRandFourHundredProbeWins{} at $d=400$, with AUC differences of \PrimRandFiftyAucDiff{} to
\PrimRandFourHundredAucDiff{} --- two orders of magnitude larger than any $L^4$ cell and in the
opposite direction. A procedure that systematically favoured the network could not produce that.

So across the three code families the same protocol gives three different answers, and which one it
gives is decided by how the code was made: the network leads under $L^4$ and increasingly with width,
neither decoder recovers anything under $L^2$, and the probe leads by a wide margin on a code that
was never trained. Read as a decoder comparison this table says little; read as a property of the
code, the ordering is the result.

\paragraph{The second estimand agrees, and the earlier disagreement was an artefact.} The analysis
plan named two co-primary estimands: $s_{95}$ and the recovery AUC, the area under the exact-recovery
curve over the evaluated sparsities. $s_{95}$ is a floored crossing point and therefore blunt, so a
difference of less than one level is invisible to it; the AUC sees the whole curve. On the same
matched post-ReLU comparison the AUC is positive at every width and grows with it:
\AucFiftyDiff{} (95\% interval [\AucFiftyCiLow, \AucFiftyCiHigh]) on \AucFiftyNetWins{} of
\EsevenSeeds{} seeds at $d=50$; \AucHundredDiff{} ([\AucHundredCiLow, \AucHundredCiHigh]) on
\AucHundredNetWins{} of \EsevenSeeds{} at $d=100$; \AucTwoHundredDiff{}
([\AucTwoHundredCiLow, \AucTwoHundredCiHigh]) on \AucTwoHundredNetWins{} of \EsevenSeeds{} at
$d=200$; and \AucFourHundredDiff{}
([\AucFourHundredCiLow, \AucFourHundredCiHigh]) on \AucFourHundredNetWins{} of \ScSeeds{} at
$d=400$. Every cell is unanimous across seeds and no interval covers zero.

An earlier version of this section reported the opposite: an AUC that changed sign with width, which
we described as the two estimands disagreeing and treated as the finding. That reading was produced
by a defect in how the shared threshold was selected (Appendix~\ref{app:withdrawn}), which cost the
network up to five sparsity levels at the widest setting. With the defect fixed the two estimands
concur, and their concurrence is the more informative outcome: a difference visible to a blunt
statistic and to a fine one, in the same direction, is harder to attribute to the choice of summary.
Figure~\ref{fig:e7primary}(a,b)
shows both estimands at all four widths so that a reader can see the agreement rather than take our
word for it.

\paragraph{The nonlinearity costs almost nothing on a trained code, and a great deal on an
untrained one.} Restoring the pre-ReLU probe --- which reads the preactivation $\Phi b$, a state the
network's output layer never sees --- is how we had previously located the effect, and on the $L^4$
codes it now finds very little. The pre-ReLU probe's advantage is \PrimLfourFiftyPreDiff{} of a
sparsity level at $d=50$ with \PrimLfourFiftyPreTies{} of \EsevenSeeds{} seeds tied,
\PrimLfourHundredPreDiff{} at $d=100$, \PrimLfourTwoHundredPreDiff{} at $d=200$, and at $d=400$ it
reverses sign: the network leads even the pre-ReLU probe by \PrimLfourFourHundredPreDiff{} on
\PrimLfourFourHundredPreNetWins{} of \ScSeeds{} models. An earlier version of this paper read a
full-level gap here as a measurement of what the ReLU discards; that reading was an artefact of the
threshold defect and is withdrawn (Appendix~\ref{app:withdrawn}, item 2).

On the untrained arm the same comparison is an order of magnitude larger and unanimous:
\PrimRandFiftyPreDiff{}, \PrimRandHundredPreDiff{}, \PrimRandTwoHundredPreDiff{} and
\PrimRandFourHundredPreDiff{} levels to the pre-ReLU probe, growing steeply with width. So the
pipeline of ReLU plus a single global threshold throws away a large and growing amount of affinely
accessible information on a code that was never trained, and next to none on one trained under
$L^4$. That is the surviving form of the claim, and it is a statement about the interaction between
the code and the readout rather than about the nonlinearity alone.

\begin{figure}[t]
\centering
\includegraphics[width=\linewidth]{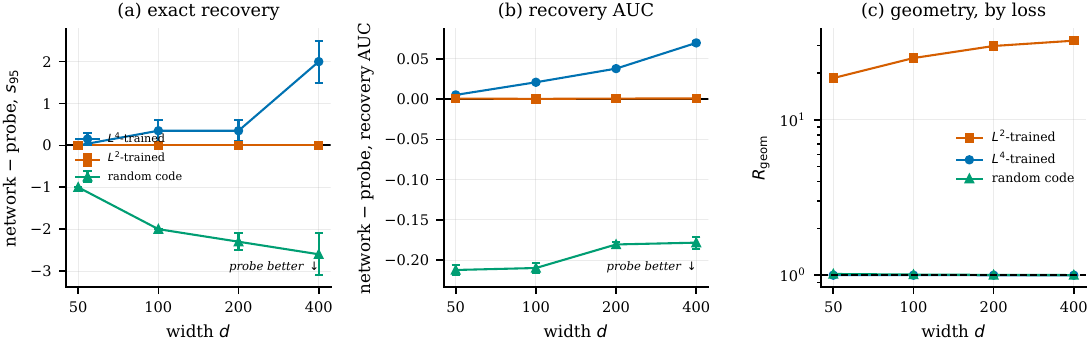}
\caption{\textbf{The matched comparison, both estimands, and the geometry the loss produces.}
Paired within seed; bars are bootstrap intervals over the models of a cell. Negative means the affine
probe decodes better. \textbf{(a)} Under $s_{95}$ the matched post-ReLU comparison separates the three
code families in three directions: the network leads under $L^4$ by an amount that grows with width,
$L^2$ is a flat zero because neither decoder recovers anything, and on an untrained code the probe
leads by one to two and a half levels. The protocol is identical in all three.
\textbf{(b)} The same models under the recovery AUC, which resolves differences smaller than one
sparsity level and agrees with $s_{95}$ at every width --- an earlier version reported a sign reversal
here, which was an artefact of the threshold defect (Appendix~\ref{app:withdrawn}).
\textbf{(c)} $R_{\mathrm{geom}}$ by loss. $L^2$ leaves the interface one to two orders of magnitude
above the rank--trace floor; $L^4$ and an \emph{untrained} random code both sit on it, which is why
proximity to the floor is not by itself evidence of learned structure.}
\label{fig:e7primary}
\end{figure}

\paragraph{The loss decides the interface.} The three code families separate cleanly and by a wide
margin, at every width:

\begin{center}\small
\begin{tabular}{lccc}
\toprule
& $d=50$ & $d=100$ & $d=200$ \\
\midrule
$R_{\mathrm{geom}}$, $L^4$ & \SALfourFiftyRgeom & \SALfourHundredRgeom & \SALfourTwoHundredRgeom \\
$R_{\mathrm{geom}}$, $L^2$ & \SALtwoFiftyRgeom & \SALtwoHundredRgeom & \SALtwoTwoHundredRgeom \\
$R_{\mathrm{geom}}$, random & \SARandFiftyRgeom & \SARandHundredRgeom & \SARandTwoHundredRgeom \\
\midrule
$\kappa_{\min}$, $L^4$ & \FullLfourFiftyKappaMin & \FullLfourHundredKappaMin & \FullLfourTwoHundredKappaMin \\
$\kappa_{\min}$, $L^2$ & \FullLtwoFiftyKappaMin & \FullLtwoHundredKappaMin & \FullLtwoTwoHundredKappaMin \\
$\kappa_{\min}$, random & \FullRandFiftyKappaMin & \FullRandHundredKappaMin & \FullRandTwoHundredKappaMin \\
\bottomrule
\end{tabular}
\end{center}

$L^2$ drives $R_{\mathrm{geom}}$ into the tens and rising with width, and its frontier collapses to
$1$. By Theorem~\ref{thm:frontier} a feature is separable at $s$ only when $\kappa_i>s$, so a
frontier at $1$ means the worst feature is separable at singletons and at nothing beyond them. Only
at $d=50$, where $\kappa_{\min}=\FullLtwoFiftyKappaMin{}$ exactly, does the bound reach the
degenerate case in which two features share a representation; at the larger widths
$\kappa_{\min}$ is \FullLtwoHundredKappaMin{}, \FullLtwoTwoHundredKappaMin{} and \ScLtwoKappa{}, so
every feature is separable at $s=1$ and fails at $s=2$. The mechanism is a spread of column norms: a
feature whose direction is a fraction of a typical length lies inside the convex hull of the others,
and no duplicate is needed. $L^4$ does not do this, and its
$R_{\mathrm{readout}}$ of \SALfourFiftyRreadout{} says its own output layer is near-optimal for the
code it built.

\paragraph{But near-floor geometry is generic.} The untrained random code reaches
$R_{\mathrm{geom}}=\SARandFiftyRgeom{}$ at $d=50$ and its frontier is
\FullRandTwoHundredKappaMin{} at $d=200$, against $L^4$'s \FullLfourTwoHundredKappaMin{}. So the
distance from the rank--trace floor, which earlier work reads as a signature of learned structure,
is what an i.i.d.\ code gives for free at this load; the trained code is better, by one to two
sparsity levels, and that margin is the honest size of the effect.

\paragraph{And the margin comes from the code, not the readout.} The random control differs from a
trained model in two ways at once --- its code is not trained \emph{and} its readout is
$\Phi^{+}$ rather than learned --- so it cannot say which half matters. E8 separates them with three
arms on an identical budget, \EeightSeeds{} seeds each: both trained; a random code held
\emph{frozen} with only $W_{\mathrm{out}}$ trainable; and neither trained.

\begin{center}\small
\begin{tabular}{lccccccccc}
\toprule
& \multicolumn{3}{c}{$d=50$} & \multicolumn{3}{c}{$d=100$} & \multicolumn{3}{c}{$d=200$} \\
\cmidrule(lr){2-4}\cmidrule(lr){5-7}\cmidrule(lr){8-10}
arm & $R_{\mathrm{geom}}$ & $\kappa_{\min}$ & $R_{\mathrm{ro}}$
    & $R_{\mathrm{geom}}$ & $\kappa_{\min}$ & $R_{\mathrm{ro}}$
    & $R_{\mathrm{geom}}$ & $\kappa_{\min}$ & $R_{\mathrm{ro}}$ \\
\midrule
both trained & \EeightTrainedFiftyRgeom & \EeightTrainedFiftyKappa & \EeightTrainedFiftyRreadout
             & \EeightTrainedHundredRgeom & \EeightTrainedHundredKappa & \EeightTrainedHundredRreadout
             & \EeightTrainedTwoHundredRgeom & \EeightTrainedTwoHundredKappa & \EeightTrainedTwoHundredRreadout \\
frozen code  & \EeightFrozenFiftyRgeom & \EeightFrozenFiftyKappa & \EeightFrozenFiftyRreadout
             & \EeightFrozenHundredRgeom & \EeightFrozenHundredKappa & \EeightFrozenHundredRreadout
             & \EeightFrozenTwoHundredRgeom & \EeightFrozenTwoHundredKappa & \EeightFrozenTwoHundredRreadout \\
neither      & \EeightRandomFiftyRgeom & \EeightRandomFiftyKappa & \EeightRandomFiftyRreadout
             & \EeightRandomHundredRgeom & \EeightRandomHundredKappa & \EeightRandomHundredRreadout
             & \EeightRandomTwoHundredRgeom & \EeightRandomTwoHundredKappa & \EeightRandomTwoHundredRreadout \\
\bottomrule
\end{tabular}
\end{center}

\noindent$R_{\mathrm{ro}}$ abbreviates $R_{\mathrm{readout}}$. The frozen arm is at
$d\in\{50,100\}$ from one run and $d=200$ from a second, added after review asked for the third
width; the two carry separate run records rather than being merged.

Training the readout alone leaves the frontier where an untrained code puts it, at all three widths:
\EeightFrozenHundredKappa{} against \EeightRandomHundredKappa{} at $d=100$ and
\EeightFrozenTwoHundredKappa{} against \EeightRandomTwoHundredKappa{} at $d=200$, while training both
reaches \EeightTrainedHundredKappa{} and \EeightTrainedTwoHundredKappa{}. The gain is entirely in
$\Phi$. That is unsurprising for $\kappa$, which is a function of the code alone --- the frozen and
untrained rows must agree up to the column normalisation that distinguishes them, and their agreement
is a consistency check rather than a result --- but it is the point: whatever the training buys for
support recovery, it buys by moving the code.

\begin{figure}[t]
\centering
\includegraphics[width=\linewidth]{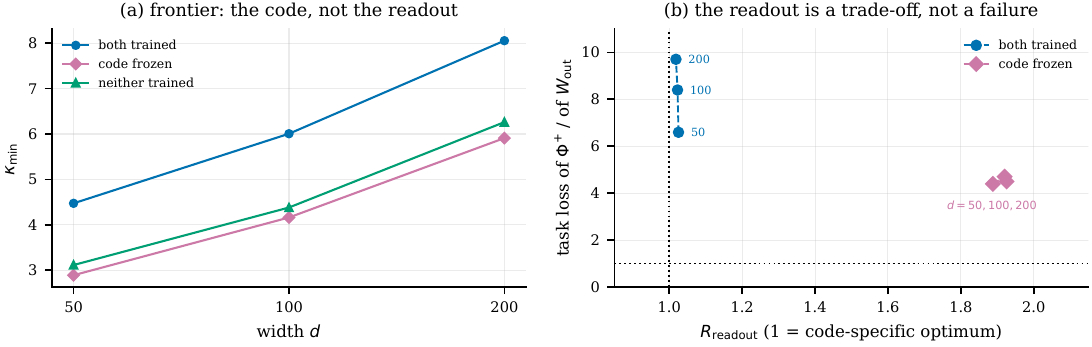}
\caption{\textbf{Which half of the training does the work.} Medians over
\EeightSeeds{} models per arm and width. \textbf{(a)} The frontier tracks the code: training only the
readout leaves $\kappa_{\min}$ where an untrained code puts it --- below it, in fact --- while
training both lifts it at every width. \textbf{(b)} The same models placed by cross-talk against task
loss. The horizontal axis is $R_{\mathrm{readout}}$, so $1$ is the best readout \emph{of the code that
arm was given}; the vertical axis is how much lower the trained readout's own training loss is than
that optimum's. The frozen arm sits at nearly twice the cross-talk, which invites reading it as a
failed optimisation, but it beats the cross-talk optimum on the loss it trained under, so the
cross-talk is bought rather than lost. Its three widths coincide: neither coordinate moves as the
problem grows. The jointly trained arm is better on both axes at once and pulls further ahead with
width, so co-adaptation does not trade the two objectives against each other --- it aligns them.}
\label{fig:e8}
\end{figure}

The readout column is the surprise, and it is stable across width. The frozen arm trained
$W_{\mathrm{out}}$ for the full budget and ended at
$R_{\mathrm{readout}}=\EeightFrozenHundredRreadout{}$ at $d=100$ and
\EeightFrozenTwoHundredRreadout{} at $d=200$ --- nearly twice the cross-talk of the best readout
\emph{of the code it was given} --- while the jointly trained arm reaches
\EeightTrainedHundredRreadout{} and \EeightTrainedTwoHundredRreadout{}.

It is tempting to read that as gradient descent failing at the conditional problem, and it is not.
The cross-talk optimum is attained by $\Phi^{+}$ for \emph{every} code, so a readout with
$R_{\mathrm{readout}}$ near $1$ always exists and its existence settles nothing; what matters is
whether the arm's own objective prefers its $W_{\mathrm{out}}$ to that optimum. It does, decisively.
Scored under the loss the arm trained on, in its own post-ReLU representation, on fresh draws from a
seed no training used (\texttt{scripts/frozen\_readout\_tradeoff.py}), the frozen
$W_{\mathrm{out}}$ beats $\Phi^{+}$ by a factor of \EeightFrozenHundredTaskGain{} at $d=100$ and
\EeightFrozenTwoHundredTaskGain{} at $d=200$, on \EeightFrozenTwoHundredTaskWins{} of
\EeightSeeds{} models at each width. The cross-talk it gives up is bought, not lost. As a check on
the scoring, the untrained arm --- whose readout \emph{is} $\Phi^{+}$ --- returns a ratio of
\EeightRandomTwoHundredTaskGain{}. Figure~\ref{fig:e8}(b) places both arms in the two coordinates at
once, which is the only view in which the trade-off is visible as a trade-off.

So the two objectives genuinely pull apart on a fixed random code, and the contrast is about what
co-adaptation does to them rather than about optimisation difficulty: the jointly trained arm attains
$R_{\mathrm{readout}}=\EeightTrainedTwoHundredRreadout{}$ \emph{and} a task advantage of
\EeightTrainedTwoHundredTaskGain{} at $d=200$, so it does not trade one against the other at all,
while the frozen arm can only buy the second with the first. The joint arm's task advantage also
grows with width, \EeightTrainedFiftyTaskGain{} to \EeightTrainedHundredTaskGain{} to
\EeightTrainedTwoHundredTaskGain{}, where the frozen arm's is flat
(\EeightFrozenFiftyTaskGain{}, \EeightFrozenHundredTaskGain{},
\EeightFrozenTwoHundredTaskGain{}). Training the code appears to align the task objective with the
cross-talk objective; freezing it leaves them in conflict that no amount of readout training
resolves.

\paragraph{The frontier is now exact.} Earlier we reported $\kappa_{\min}$ over a subset of features
and called it an upper bound. Evaluating every feature of all 180 models shows the bound was tight:
the subset overestimated by \FullLfourTwoHundredSubsetOver{} on average at $d=200$ and never by more
than a tenth of a level. Its \emph{justification} is weaker than its accuracy, though. The subset was
chosen by Theorem~\ref{thm:arrow}, and that theorem is sufficient rather than necessary: on trained
codes it locates the true argmin in only \FullLfourTwoHundredArgminFound{} of 20 models at $d=200$,
where the true argmin sits at leverage rank \FullLfourTwoHundredArgminRankMed{} by median, outside
the window. On untrained codes it succeeds every time. Training pulls the lowest-leverage feature and
the worst-frontier feature apart, and the numbers above are the exact ones.

\paragraph{A fourth width, committed to in advance.} Before running it we extrapolated the frontier
from $d\in\{50,100,200\}$ and registered $\kappa_{\min}=\ScRegistered$ at $d=400$ in
\texttt{configs/e7\_stageC.json}. Then we ran the width --- \ScSeeds{} models per arm at $F=2d=800$,
same protocol --- and measured \ScMeasured{}, an error of \ScRegisteredErrorPct\% against the
registered value.

Two caveats on that number, both of which make it weaker than it first reads. The registered
extrapolation used the subset frontier, the only statistic available when it was written; refitting
on the exact frontier gives $\kappa_{\min}\propto d^{\ScFitExponent}$ and \ScPredicted{}, an error of
\ScPredErrorPct\%. We report the registered comparison as the primary one and the refit beside it,
because the refit was computed after seeing the measurement and is the more flattering of the two;
the three defensible pairings of predicted and measured all fall between $-2.2\%$ and $-2.6\%$, so
nothing here turns on the choice. And a prediction from three points fitting two parameters is a
weak instrument: it tests that the growth is regular, not that any particular law holds. We read the
result as a local rate rather than a law, because the exponent drifts downward across the four widths
and Section~\ref{sec:res-e6} explains why four points cannot separate the candidate shapes.

Two other things replicate at this width. The loss still decides the interface: $L^2$ reaches
$R_{\mathrm{geom}}=\ScLtwoRgeom$ with its frontier at \ScLtwoKappa, against $L^4$'s
\ScLfourRgeom{} and \ScLfourKappa{}, and the untrained code again sits near the floor at
\ScRandRgeom{} with a frontier of \ScRandKappa{}. And the matched decoder comparison continues in the
direction the smaller widths established, more sharply: at $d=400$ the network leads the affine probe
by \ScPrimDiff{} of a sparsity level ([\ScPrimCiLow, \ScPrimCiHigh]) in \ScPrimNetWins{} of
\ScSeeds{} models, where the three smaller widths gave a tie or a third of a level. Under the
conservative reading, in which the probe keeps the maximum over families evaluated on the test set,
the difference is \ScPrimTestMaxDiff{} --- so the ordering at this width does not depend on which of
the two defensible selection rules is used. Section~\ref{sec:probes} explains why this is a statement
about supervised fitting and not about what an affine readout can represent.

\paragraph{The margin over an untrained code is closing, and that is the finding of this width.}
Both frontiers grow with width, but not at the same rate: fitted over all four widths the trained
code gives $d^{\ScExpTrained}$ and the untrained one $d^{\ScExpRandom}$
(Fig.~\ref{fig:frontierscaling}a). The untrained exponent is the larger, so the advantage of training
is a decreasing function of width in both readings of ``advantage''. As a ratio it falls throughout
and faster each time: \MarginFiftyRatio, \MarginHundredRatio, \MarginTwoHundredRatio,
\MarginFourHundredRatio. In absolute sparsity levels --- the operationally meaningful unit, since the
frontier licenses supports up to $\lceil\kappa_{\min}\rceil-1$ --- it rises to a peak at $d=200$ and
then falls: \MarginFiftyGap, \MarginHundredGap, \MarginTwoHundredGap, \MarginFourHundredGap. The fall
is larger than the sampling noise: a two-sample bootstrap over models --- the arms are separately
trained, so the difference is between independent groups and not paired --- gives a drop of
\MarginDrop{} with
interval [\MarginDropCiLow, \MarginDropCiHigh], and the two widths' intervals do not overlap. Taken
at face value the two fitted laws meet near $d\approx\ScCrossing$.

We state this rather than the extrapolation. Four widths spanning a factor of eight cannot establish
where two rates cross, and $d\approx\ScCrossing$ is well outside the range measured. What the data
does support is narrower and still consequential: \emph{training buys frontier at every width we
measured, and the amount it buys stops growing by $d=400$.} A reader entitled to conclude that
gradient descent finds structurally better codes would want that margin to be at least stable in
width, and here it is not. Whether the trained code's deceleration reflects a real ceiling or the
fixed optimisation budget --- \EsevenSteps{} steps at every width, so the largest models get the
least optimisation per parameter --- this campaign cannot say, and it is the first thing a larger
budget should test.

\paragraph{And the shortcut has stopped being defensible.} The same width shows the subset
heuristic's justification collapsing, which is why every number in the two paragraphs above is the
exact minimum over all $F$ features. The mean overestimate grows \SubsetFiftyOver, \SubsetHundredOver,
\SubsetTwoHundredOver, \SubsetFourHundredOver; the median leverage rank of the true argmin grows
\SubsetFiftyRank, \SubsetHundredRank, \SubsetTwoHundredRank, \SubsetFourHundredRank; and at $d=400$
the subset locates the true argmin in \SubsetFourHundredFound{} of \ScSeeds{} trained models. The
error is still small in absolute terms, but it is one-sided and it is asymmetric between the arms:
untrained codes keep the true argmin at leverage rank 1, so their subset value \emph{is} exact, while
trained codes drift. A margin computed from subset values is therefore biased in favour of training
by an amount that grows with exactly the variable under study --- which is why we computed the exact
frontier for all four widths before making the claim above, and why we would not trust the shortcut
past $d=200$ in future work.

\begin{figure}[t]
\centering
\includegraphics[width=\linewidth]{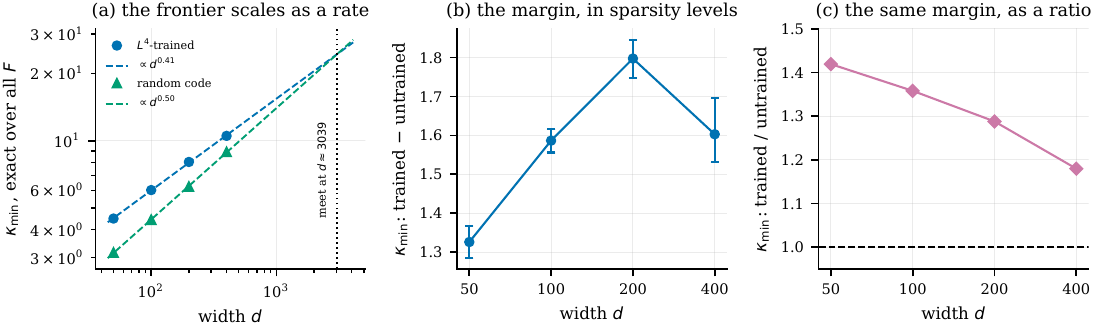}
\caption{\textbf{The frontier's rate, and the margin it puts in question.} All values are the exact
$\kappa_{\min}$ over every one of the $F$ features, never the subset. \textbf{(a)} Both codes'
frontiers are close to straight on these axes over the four widths, and the untrained slope is the
steeper. The power-law fits meet near $d\approx\ScCrossing$, drawn to show what the two slopes imply
and not offered as a prediction: it lies far outside the measured range, and four points do not
identify the shape --- other forms fit as well and move the meeting point by a factor of two. The
robust statement is the ordering of the slopes, not where they cross. \textbf{(b)} The margin in sparsity
levels, with two-sample bootstrap intervals over models; it peaks at $d=200$ and the fall to $d=400$
exceeds the sampling noise. \textbf{(c)} The same margin as a ratio, which falls at every step. The
two panels are shown separately because they are the two honest readings of the same quantity and
they do not agree until $d=400$.}
\label{fig:frontierscaling}
\end{figure}

\subsection{A second campaign, at double the feature load}
\label{sec:stageb}

Everything above comes from one campaign at one feature load, $F=2d$. Stage B is a separate
campaign, run with its own seeds and its own run record, and it varies that load
(Table~\ref{tab:stageb}). Its $p=0.01$ cells sit at $F=\SbLoadHigh d$ --- twice stage A's load, at
$d\in\{100,200\}$ --- and its $p=0.02$ cells stay at $F=\SbLoadMatched d$ and double the training
sparsity instead. Every other setting matches stage A: the same optimiser, step count, batch size,
probe budget and evaluation grid, verified against the two configuration files.

An earlier version of this section described the $p=0.01$ cells as exact repeats of stage A and read
their larger effects as a replication. That was wrong: they are a different design, and
Appendix~\ref{app:withdrawn} records the error and what it cost.

\paragraph{At double the load the ordering holds and the frontier falls.} The decoder comparison
keeps its sign and grows: the network leads the affine probe by \SbLfourHundredDiff{} of a sparsity
level at $d=100$ ([\SbLfourHundredCiLow, \SbLfourHundredCiHigh], \SbLfourHundredNetWins{} of
\SbSeeds{} models) and \SbLfourTwoHundredDiff{} at $d=200$
([\SbLfourTwoHundredCiLow, \SbLfourTwoHundredCiHigh], \SbLfourTwoHundredNetWins{} of \SbSeeds{}),
against \PrimLfourHundredPostDiff{} and \PrimLfourTwoHundredPostDiff{} at the same widths and half
the load. The untrained arm keeps its own opposite sign, \SbRandHundredDiff{} and
\SbRandTwoHundredDiff{} levels to the probe; and every $L^2$ cell has both decoders at $s_{95}=0$,
\SbLtwoJointFailures{} models across \SbLtwoCells{} cells, the same joint failure stage A shows.

The frontier moves the other way, as it must: doubling the number of features in the same $d$ costs
recovery, and $\kappa_{\min}$ falls from \FullLfourHundredKappaMin{} to \SbKappaLfourHundred{} at
$d=100$ and from \SbMatchPoneKappa{} to \SbKappaLfourTwoHundred{} at $d=200$. So the two campaigns
together say that the loss--geometry separation and the sign structure of the decoder comparison are
not artefacts of $F=2d$, while the absolute frontier is a strong function of the load. The
comparison is across campaigns with different seeds, which is why we read the ordering and not the
difference of the means.

\paragraph{Training sparsity, at matched load, buys nothing.} Stage B's $p=0.02$ cells are the only
place any training variable other than the loss is varied, and they must be compared against stage
A's cells at the \emph{same} load, $F=\SbLoadMatched d$ --- not against stage B's own $p=0.01$ cells,
which are at twice that. Made correctly, the comparison gives nothing to the hypothesis that denser
training states buy frontier. At $d=50$ the two are indistinguishable, \SbMatchFiftyPoneKappa{}
against \SbMatchFiftyPtwoKappa{}. At $d=200$ the higher sparsity is \emph{worse}, \SbMatchPtwoKappa{}
[\SbMatchPtwoKappaLo, \SbMatchPtwoKappaHi] against \SbMatchPoneKappa{}
[\SbMatchPoneKappaLo, \SbMatchPoneKappaHi], with the per-model ranges disjoint.

An earlier version reported the opposite as ``the cleanest small result in the campaign'' by
comparing across the load. We withdraw it. What remains is a null: over the two widths where a
matched comparison exists, doubling the training sparsity does not improve the geometry the code
presents --- indistinguishable at $d=50$ and worse at $d=200$ --- and the decoder ordering survives
the change ($L^4$ at \SbLoadLfourFiftyDiff{} and \SbLoadLfourTwoHundredDiff{}, $L^2$ still pinned at
$\kappa_{\min}=\SbKappaLoadLtwoTwoHundred$). Both matched widths rest on 10 models against stage A's
20, and the direction at $d=200$ comes from one width, so we read this as an absence of evidence for
the withdrawn claim rather than as evidence for its converse.

\begin{table}[t]
\centering
\caption{Stage B: a separate campaign with its own seeds and run record. The $F/d$ column is the one
to read first. The $p=0.01$ rows are at $F=\SbLoadHigh d$, twice stage A's load; the $p=0.02$ rows
are at $F=\SbLoadMatched d$ and vary the training sparsity instead. ``mean difference'' is network
minus best post-ReLU affine probe in $s_{95}$, paired within seed, and $\kappa_{\min}$ is the exact
minimum over all $F$ features. Any comparison against stage A must hold $F/d$ fixed; doing otherwise
produced a claim we withdraw in Appendix~\ref{app:withdrawn}. Generated from
\texttt{results/e7\_stageB/derived/}.}
\label{tab:stageb}
\small
\begin{tabular}{llcccccc}
\toprule
code & $d$ & $F/d$ & $p$ & mean difference & 95\% interval & net / tie / probe & $\kappa_{\min}$ \\
\midrule
$L^4$ & 100 & 4 & 0.01 & +0.70 & [+0.40, +1.00] & 7 / 3 / 0 & 4.3072 \\
$L^4$ & 200 & 4 & 0.01 & +1.00 & [+0.70, +1.30] & 9 / 1 / 0 & 5.6212 \\
$L^4$ & 50 & 2 & 0.02 & +0.00 & [+0.00, +0.00] & 0 / 10 / 0 & 4.5074 \\
$L^4$ & 200 & 2 & 0.02 & +0.40 & [+0.10, +0.70] & 4 / 6 / 0 & 7.6479 \\
$L^2$ & 100 & 4 & 0.01 & +0.00 & [+0.00, +0.00] & 0 / 10 / 0 & 1.0003 \\
$L^2$ & 200 & 4 & 0.01 & +0.00 & [+0.00, +0.00] & 0 / 10 / 0 & 1.0076 \\
$L^2$ & 50 & 2 & 0.02 & +0.00 & [+0.00, +0.00] & 0 / 10 / 0 & 1.0000 \\
$L^2$ & 200 & 2 & 0.02 & +0.00 & [+0.00, +0.00] & 0 / 10 / 0 & 1.0031 \\
untrained & 100 & 4 & 0.01 & -1.80 & [-2.00, -1.50] & 0 / 0 / 10 & 3.3999 \\
untrained & 200 & 4 & 0.01 & -2.00 & [-2.00, -2.00] & 0 / 0 / 10 & 4.7230 \\
\bottomrule
\end{tabular}

\end{table}

\subsection{What the decoder comparison can and cannot mean}
\label{sec:probes}

The comparison in the previous subsections is easy to over-read, and the direction of the
over-reading is the flattering one. This subsection states the ceiling on it and then measures
where the comparison actually sits against that ceiling, because a ceiling one has only argued for
is a caveat and a ceiling one has measured is a result.

\paragraph{The probe class contains the network's own decoder.} The affine probe is fitted on the
post-ReLU state $z=\mathrm{ReLU}(\Phi b)$ with a design matrix that appends an intercept and nothing
else (\code{\_design} in \code{src/lrtr/probes.py}), so its hypothesis class is every affine map of
$z$ --- which includes $W_{\mathrm{out}}$, the map the network's own output layer applies. Writing
that map in the probe's own parameterisation,
\[
  W_{\mathrm{net}} \;=\; \begin{bmatrix} W_{\mathrm{out}}^{\top} \\ 0 \end{bmatrix}
  \;\in\;\R^{(d+1)\times F},
\]
scoring $W_{\mathrm{net}}$ as a probe reproduces the network's own evaluation exactly. It follows
that a probe configuration attaining the network's score \emph{exists} for every model in every
cell. Whenever the network scores higher, what the gap measures is that our fitting and selection
procedure did not find that map: a fixed ridge grid, a fixed number of steps, and $8{,}192$ training
states. It is not a statement that an affine readout cannot represent what the network computes,
because it demonstrably can.

\paragraph{The selection misses it, and we can say so by how much.} That argument bounds the
comparison without telling us whether the bound is tight, so we tested it. The campaign selects a
probe configuration by maximising a validation objective --- the area under the per-sparsity
validation recovery curve, stored for every configuration in the saved records --- and $W_{\mathrm{net}}$
is a member of the class that objective ranges over. Evaluating the same objective at
$W_{\mathrm{net}}$ therefore asks whether the selection missed something it had the data to find. It
needs no refitting, so it runs over the released weights in seconds per model
(\code{scripts/probe\_ceiling.py}).

It did miss it, in every cell where the network leads and in every model of those cells. The
validation gap between the network's own map and the selected probe is
\CeilLfourHundredValGap{} ([\CeilLfourHundredValCiLow, \CeilLfourHundredValCiHigh]) at $d=100$,
\CeilLfourTwoHundredValGap{} ([\CeilLfourTwoHundredValCiLow, \CeilLfourTwoHundredValCiHigh]) at
$d=200$ and \CeilLfourFourHundredValGap{}
([\CeilLfourFourHundredValCiLow, \CeilLfourFourHundredValCiHigh]) at $d=400$, all positive, and
positive in \CeilLfourMissed{} of \CeilLfourModels{} models across the \CeilLfourCells{} $L^4$ cells
at $d\ge100$ in both campaigns, at both feature loads
(\CeilSbLfourHundredValGap{} and \CeilSbLfourTwoHundredValGap{} in stage B's $F=\SbLoadHigh d$ cells).

The gap tracks the reported difference in size as well as in sign. It is largest at $d=400$, the
width where the network's lead is \ScPrimDiff{} levels, and smallest at $d=50$
(\CeilLfourFiftyValGap, [\CeilLfourFiftyValCiLow, \CeilLfourFiftyValCiHigh]), the width where the
lead is zero. It is not zero there, so we do not offer $d=50$ as a null: a small failure to find the
better map need not cost a whole sparsity level, and $s_{95}$ is floored.

The $L^2$ cells are the internal control for the measurement itself. There neither decoder recovers
anything and the network has no lead, and there the validation gap sits at essentially zero. So a
positive gap is not something this analysis produces wherever it looks: it appears where the network
leads, vanishes where nothing does, and reverses where the probe leads.

So the residual $L^4$ difference is a search failure, measured rather than assumed. A better member
of the probe's own class was available, validation could see that it was better, and the procedure
returned something else.

\paragraph{The control rules out the obvious objection to that.} One could suspect this of being an
artefact --- that the network's own map simply always looks good on a validation objective computed
from the network's own states. The untrained arm says otherwise, and by a wide margin in the
opposite direction: there the selected probe beats $W_{\mathrm{net}}$ on validation by
\CeilRandFiftyValGap{} at $d=50$, \CeilRandTwoHundredValGap{} at $d=200$ and
\CeilRandFourHundredValGap{} at $d=400$, and the selection fails to beat it in
\CeilRandMissed{} of \CeilRandModels{} models. On a code that was never trained the fitting procedure
finds a far better affine map than the network's own readout, which is exactly what it fails to do on
a trained one.

As a validity check, this analysis rebuilds each cell's split bundle from the campaign's own seed and
recomputes both sides from the released weights. The differences it obtains reproduce the published
ones exactly --- a worst-cell discrepancy of \CeilReproMaxDelta{} of a sparsity level across all
\CeilReproCells{} cells of the three campaigns, Tables~\ref{tab:primary} and~\ref{tab:stageb}
together. The check is stated over every cell rather than over the ones it happens to pass: an
earlier version was scoped to Table~\ref{tab:primary} alone, which excluded the single cell where it
then failed.

\paragraph{What the comparison does and does not support.} It does not support ``the network's
nonlinearity buys expressive power'': the class contains its decoder, and the gap is now
\emph{known} to be a search failure rather than merely arguably one. We report the $L^4$ differences
of \PrimLfourHundredPostDiff{} to \ScPrimDiff{} and build nothing on them.

What the comparison does support is a property of the code, because the protocol's asymmetries are
held fixed while the code varies. Both sides read the same state, both get one validation-selected
global threshold, both are scored on the same held-out supports. Under that fixed protocol the sign
of the difference is set by how the code was produced: positive and growing with width for $L^4$,
exactly zero for $L^2$, and negative by one to two and a half levels for an untrained code. A
protocol that systematically favoured the network could not produce a two-and-a-half-level loss on
any arm.

Read that way the comparison says something we did not expect. Training does not only improve the
geometry the code presents (Section~\ref{sec:res-primary}); it also puts the network's own readout
somewhere a supervised fit of the same class, on the same data, with a standard grid, does not
reach --- while on an untrained code that same fit comfortably outperforms the network's readout.
The trained readout is not merely good; it is hard to find by fitting. That is a form of the
co-adaptation E8 isolates, seen from the decoder's side rather than the code's.

\paragraph{The question this leaves open.} Knowing that the selection misses $W_{\mathrm{net}}$ does
not say why. Two mechanisms are consistent with it and this experiment does not separate them: the
optimiser and grid may simply be too coarse to reach that region, or the probe's surrogate objective
--- ridge, logistic or squared hinge --- may have its optimum somewhere other than at the map that
maximises exact support recovery, in which case no amount of better optimisation of \emph{that}
objective would find it. Separating them needs a refit \emph{started} at $W_{\mathrm{net}}$: if the
fit stays, the grid was too coarse; if it walks away and scores worse, the objective is misaligned.
That is the expensive direction --- the full grid and the margin fits for every model, where the
measurement above needs neither --- so \code{fit\_probe} accepts a starting point and
\code{probe\_ceiling.py --with-refit} wires it up, and we report the question rather than a partial
answer to it.

\section{Background and related work}
\label{sec:background}

\subsection{Computation in superposition}

A network of hidden width $d$ \emph{represents $F\gg d$ features in superposition} if its
activation $a(x)\in\R^d$ is approximately $\Phi b$ for a feature-encoding matrix
$\Phi\in\R^{d\times F}$ and a sparse Boolean $b$. Following~\cite{hanni2024superposition},
features $f_1,\dots,f_F$ are \emph{$\varepsilon$-linearly represented} by $a$ if there is a
readout $R\in\R^{F\times d}$ with $|r_k\cdot a(x)-f_k(x)|<\varepsilon$ for all $k$ and $x$.

H\"{a}nni et al.\ construct an approximate-linear recursive template for sparse Boolean
circuits, alternating a targeted superpositional AND layer with an error-correction layer, and
certify emulation of circuits of width $\Otilde(d^{3/2})$. Adler and
Shavit~\cite{adler2024superposition} give parameter-description lower bounds and explicit
constructions for computing in superposition, obtaining a near-quadratic capacity by
thresholding after each stage. Adler et al.~\cite{adler2025combinatorial} reframe superposition
combinatorially through feature channel coding, and Kong et al.~\cite{kong2025expand} show
empirically that reducing inter-feature interference at fixed parameter count improves
performance. This paper reproves none of these constructions; it identifies the interface-level
reason the two regimes coexist.

\subsection{Trained toy models of computation in superposition}

A recent empirical line studies whether small trained networks actually compute in
superposition. Braun et al.~\cite{braun2025apd} introduced the compressed-computation task ---
a width-$50$ network asked to compute $100$ sparse ReLU features --- as a testbed. Bhagat et
al.~\cite{bhagat2026cc} argued that the standard $L^2$-trained model largely exploits a mixing
term rather than genuinely computing in superposition, and Ferreira da Silva and
Heimersheim~\cite{ferreiradasilva2026l4} showed that training under an $L^4$ loss elicits a
solution that does compute in superposition, assigning each feature a sparse codeword over
neurons and decoding it with a pseudoinverse of the encoder --- that is, a linear readout
interface in the sense studied here. In parallel,
\cite{ferreiradasilva2026errcorr} report perturbation-based evidence for feature-specific
error correction in large language models, the mechanism whose certified tolerance drives the
$d^{3/2}$ compatibility scale of the H\"{a}nni template. This debate is about which decoding
interface a trained network maintains; the diagnostic proposed here is a direct instrument for
it, and Section~\ref{sec:res-e5} applies it to exactly the models under dispute.

\subsection{Frame theory, Welch bounds and cross-Gramians}
\label{sec:frames}

Classical Welch bounds~\cite{welch1974lower} lower-bound the correlations of overcomplete
unit-norm systems; the maximum-correlation form we use,
$\max_{i\neq j}|\langle\phi_i,\phi_j\rangle|\ge\sqrt{(F-d)/(d(F-1))}$, appears for instance as
Theorem~2.3 of Strohmer and Heath~\cite{strohmer2003grassmannian} in the study of Grassmannian
frames. Waldron~\cite{waldron2003generalized} showed that generalised Welch-bound-equality
sequences are exactly tight frames, and Datta, Howard and Cochran~\cite{datta2012geometry}
derived the entire family of higher-order Welch bounds from a geometric rank/Grammian
argument, of which the rank--trace computation behind Theorem~\ref{thm:floor} is the
first-order instance; see also~\cite{klappenecker2005mub} for the design-theoretic view. For
pairs of sequences, Aceska and Kaczanowski~\cite{aceska2022crossframe} introduced the
cross-frame potential, and Christensen, Datta and Kim~\cite{christensen2020welch} proved
Welch-type bounds for the maximum coherence of dual frame pairs, characterising equality via
equiangular tight frames.

Theorem~\ref{thm:floor} sits in this lineage: it is a unit-diagonal cross-Gramian formulation
assuming only $\operatorname{rank}(M)\le d$ and $M_{ii}=1$ --- in particular, no duality
relation between the two sequences, which, as finite sequences, are of course frames for their
spans. We include its short rank--trace proof only to keep the paper self-contained; the proof
mechanism is standard in this literature, cf.~\cite{datta2012geometry}. The contribution here
is the application to readout interfaces, the gain-normalised form
(Corollary~\ref{cor:calibfree}), and the diagnostic built on top of them.

\subsection{Compressed sensing and sparse recovery}

The threshold-recovery results used below are standard coherence-based and random-dictionary
support-recovery arguments. Their role is to make explicit a distinction relevant to neural
computation: small-error linear readout requires uniformly small coordinate error, whereas
threshold recovery only requires the aggregate interference from the active support to remain
below a margin.

The collision frontier of Section~\ref{sec:frontier} sits in this neighbourhood and we place it
carefully, because two established literatures ask adjacent questions.

The first is group testing. Recovering a sparse Boolean $b$ from $\Phi b$ is exactly
\emph{quantitative} group testing when $\Phi$ is a real measurement matrix, and the combinatorial
theory of superimposed, disjunct and separable codes descends from Kautz and
Singleton~\cite{kautz1964superimposed}. That theory asks how many tests suffice to identify the
\emph{whole} support, by \emph{any} decoder, usually for a matrix to be designed. We ask something
weaker and more specific: for a matrix we are given, and one coordinate of it, is that coordinate's
presence decidable by an \emph{affine} function of $\Phi b$? The gap is real in both directions.
A matrix can be identifiable in the group-testing sense while $\kappa_i\le s$, because
$\kappa_i\le s$ only requires a collision between \emph{convex combinations} of admissible states,
not an exact coincidence of two integer ones. Conversely $\kappa_i>s$ certifies a linear probe
with a margin, which identifiability alone does not supply. The restriction to affine decoders is
not a weakness of the analysis but the object of interest, since linear probing is how features
are read in practice.

The second is the null space property. Writing $z'$ for $z$ extended by zero at coordinate $i$,
the constraint $\Phi_{-i}z=\phi_i$ says precisely that $z'-e_i\in\ker\Phi$, so $\kappa_i$ is a
statement about kernel vectors normalised at one coordinate --- the same geometry as the null
space property and as the neighbourliness of randomly projected
polytopes~\cite{donoho2005neighborliness}. Two differences matter. The null space property is
quantified over all supports and certifies recovery by $\ell_1$ minimisation, a nonlinear decoder;
$\kappa_i$ is per-coordinate and certifies an affine one. And the null space property is hard to
verify, which is why the literature tests it by semidefinite
relaxation~\cite{daspremont2011nullspace}, whereas $\kappa_i$ is the exact value of one linear
programme --- a consequence of asking the weaker, one-coordinate question rather than of a better
algorithm. The box $\norm{z}_\infty\le1/\alpha$, which comes from bounded activations and has no
counterpart in the null space property, turns out to bind rarely, so for most features the
frontier is a minimum-$\ell_1$-representation quantity and the tools of that literature apply
directly.

\subsection{Sparse autoencoders}

Sparse autoencoder work scales dictionary learning to millions of
latents~\cite{cunningham2023sparse,gao2024scaling,lieberum2024gemma} and studies
reconstruction--sparsity trade-offs~\cite{michaud2025manifolds}. Such results motivate the
importance of overcomplete internal representations, but dictionary size is a reconstructive
quantity and does not measure the recursive Boolean interface capacities studied here. This
paper answers one of the open questions listed by Sharkey et al.~\cite{sharkey2025open}
concerning what theoretical insight can be gleaned from computation performed natively in
superposition.

\paragraph{Delineation from a prior sparse-autoencoder study.} The closest prior work measures
sparse autoencoders with an overlapping toolset and a different question.
\citet{borobia2026pruning}, in \emph{Neurocomputing}, train TopK autoencoders on weight-pruned
language models and ask which features survive compression, using reconstruction fidelity and
feature-level survival statistics as the instrument. That paper is the reference point for the
dictionary measurements here, and the division of labour between the two is worth stating plainly
because the objects overlap and the quantities do not. This paper trains no autoencoder, prunes
nothing, and measures no reconstruction. It
takes decoder matrices as given --- including ones that paper never examined, such as the
randomly-initialised control of Section~\ref{sec:dictionaries} --- and computes a different quantity:
the cross-talk a linear reader of that decoder must accept, its exact decomposition into leverage
dispersion, and the sparsity at which affine recovery must fail. No model, dictionary, measurement or
result is shared between the two, and neither paper's conclusions depend on the other's.


\section{Limitations and what we withdrew}
\label{sec:limitations}

\begin{enumerate}[leftmargin=*,itemsep=1pt]
\item \textbf{One trained setting.} E5 covers a single toy task at $d=\EfiveD$, $F=\EfiveF$,
with \EfiveSeeds{} seeds per condition. Nothing here is a claim about large trained language
models.
\item \textbf{Out-of-distribution evaluation.} The diagnostic is defined on Boolean sparse
states, while the toy models are trained on continuous sparse inputs. It therefore measures
which interface the learned code \emph{supports}, not what the network does on its native
distribution.
\item \textbf{Constant sparsity in the application.} The H\"{a}nni-template comparison assumes
$s=O(1)$, whereas the distributional results allow $s$ up to $O(d/\log d)$; real features are
continuous with heterogeneous sparsity.
\item \textbf{The floor bounds linear readouts only.} Nonlinear decoders and
distribution-specific average-case mechanisms are outside its scope --- which is the point of
the separation, but also a limit on what the diagnostic can certify.
\item \textbf{Fit with few points.} The $c\,d/\ln d$ fit in E6 rests on \EsixFitPoints{}
widths, on which a plain linear law fits nearly as well ($R^2=\EsixInterpRsqLin$ against
$\EsixInterpRsqLog$). The data are consistent with the predicted shape and do not discriminate
it from the alternatives; the logarithm's base is not identifiable at all.

\item \textbf{$s_{95}$ is a point estimate, not a certified threshold.} The trial budget shrinks
with width, and at $d=\EsixDmax$ even a perfect run gives a Wilson lower bound of
\EsixTopBestWilsonLow --- below the $95\%$ level being estimated. The reported thresholds are
best estimates; certifying them at confidence would need more trials at the large widths.

\item \textbf{The linear side's non-exactness is the theorem, not a measurement.} Wherever we
report that a linear readout ``cannot be exact'', that follows from
Theorem~\ref{thm:floor} by construction for any rank-$d$ unit-diagonal interface. What the
experiments measure is the \emph{magnitude} of that error for codes that were actually learned,
and how far a threshold decoder gets before it bites.
\item \textbf{Correlated curve points.} The nested-support trick makes each point of
$p_{\mathrm{rec}}(s)$ marginally unbiased but correlates points within a trial; the Wilson
intervals are per-point and should not be read as a joint confidence band.
\item \textbf{Open log gap.} The Adler--Shavit capacity bracket retains a single logarithmic
factor of slack, which is not resolved here.

\item \textbf{The leverage prediction is a sufficient condition, and its selection heuristic decays
with width.} Corollary~\ref{cor:failthresh} predicts the $L^2$ collapse from leverage alone and
holds at all three widths of Section~\ref{sec:res-primary}. But it is applied at $h_{\min}$, and it
is sufficient rather than necessary, which shows up in where the true worst feature sits: the
lowest-leverage window contains the true $\arg\min\kappa$ in \FullLfourTwoHundredArgminFound{} of 20
$L^4$ models at $d=200$ and \SubsetFourHundredFound{} of \ScSeeds{} at $d=400$, where the true
$\arg\min$ sits at leverage rank \SubsetFourHundredRank{} by median --- against every model of every
untrained code, at every width. Training pulls the two apart, and the gap widens with width, so the
theorem localises the failure less and less well as the code improves. Nothing reported depends on
the heuristic --- the frontiers quoted are exact --- but a reader should not take low leverage as a
reliable pointer to the binding feature, and past $d=200$ should not use it as a sampling window at
all.

\item \textbf{The exact frontier does not extrapolate much further.} Every $\kappa_i$ reported here is
exact --- all $F$ features of every model at all four widths, including the \ScSeeds{} models per arm
at $d=400$, $F=800$. An earlier version of this limitation said the $d=400$ sweep would be
unaffordable and that a fourth width would have to return to a subset estimate; it was run instead,
in a few hours on ten CPU workers, and this entry is corrected rather than removed because the
estimate was wrong in the direction that mattered. What remains true is the shape of the cost: one
linear programme per feature --- \FrontierLpSecondsTwoHundred\,s at $d=200$, from the recorded sweep
times --- growing steeply in $d$ while $F$ grows with it, with the wall-clock of every campaign in
Appendix~\ref{app:cost}. A fifth width would be a different
proposition, and the honest statement is that the statistic is exact everywhere we report it and that
we do not know where it stops being affordable.

\item \textbf{The nonlinear witness is built but not yet used.} The extractor of
Section~\ref{sec:witness} is implemented and validated --- every witness it returns is confirmed
inseparable by a linear programme, and on small codes against brute-force enumeration --- but two
things are missing. The Radon compression to $d+2$ states is not implemented, so the witnesses are
valid and not minimal; and no result reported in this paper is derived from one, so the sharper
claim it licenses is not made here.

\item \textbf{How much the ReLU costs is measurable in only three cells.} The decoder comparison
says the ReLU costs an $L^4$ code almost nothing and an untrained code a great deal
(Section~\ref{sec:res-primary}). Quantifying that on the frontier rather than on a fitted probe is
harder: $\kappa$ is defined on $\Phi b$, and $\mathrm{ReLU}(\Phi b)$ is not a linear image of the
state polytope, so the post-ReLU side must be sampled and tested rather than solved. Sampling can
only refute separability, never confirm it, so it overestimates the frontier --- here by
\ReluLooseMin{} to \ReluLooseMax{} levels against the exact $\kappa$ --- and the overestimate cancels
only in a difference taken with the same sampler and the same draws on both sides
(\ReluGapDraws{} draws per side, \ReluGapFeatures{} features per cell, sampler ceiling
$s\le\ReluGapSmax$).

Three of the nine cells give a measurable difference and they agree with the decoder comparison: at
$d=50$ the ReLU costs the $L^4$ code \ReluLfourFiftyGap{} levels
$[\ReluLfourFiftyCiLow,\ReluLfourFiftyCiHigh]$ on \ReluLfourFiftyPositive{} of \ReluGapFeatures{}
features and the untrained code \ReluRandFiftyGap{} $[\ReluRandFiftyCiLow,\ReluRandFiftyCiHigh]$ on
\ReluRandFiftyPositive{}; at $d=100$ the untrained code loses \ReluRandHundredGap{}
$[\ReluRandHundredCiLow,\ReluRandHundredCiHigh]$ on all \ReluRandHundredPositive{}. In
\ReluCensoredCells{} cells both sides reach the sampler ceiling, so the difference there is censoring
and not measurement, and we report no value --- including for every $L^4$ cell above $d=50$, which is
precisely where we would most want one. Every $L^2$ cell is a flat zero because its frontier is at
$1$ on both sides. A larger sampling budget would lift the ceiling and we have not spent one, so the
claim that the cost is small on trained codes rests on $d=50$ plus the decoder comparison, and not on
the frontier at the two larger widths.

\item \textbf{A dictionary-only score is not quite a property of the autoencoder.} Every decoder is
analysed under a declared unit-column gauge (Section~\ref{sec:dictionaries}), which removes the
scaling freedom, but not the one that matters most in practice. A dead latent --- a column whose
encoder never fires on the data --- still occupies a direction, still enters the row space, and still
changes the leverage of every other feature, while leaving the function the autoencoder computes
untouched. Two functionally identical dictionaries can therefore receive different scores here, and
the dictionaries most likely to carry such columns are the widest ones, which is the axis
Section~\ref{sec:dictionaries} varies. Resolving it needs matched activation data to define the
active support, which is exactly the input this section is built to avoid using, so it is a real
limit on the method and not only on this application of it. What the section measures is the geometry
the released matrix presents; whether that matrix has columns the model never uses is a separate
question we do not answer.

\item \textbf{One control the analysis wants and does not have.} A random \emph{tight frame}, whose
$R_{\mathrm{geom}}$ is exactly $1$ rather than the \SaeCtlRgeomMax{} an i.i.d.\ code reaches at
worst, would sharpen
the claim that near-floor geometry is generic: as it stands the comparison is against a code that
is close to the floor rather than on it, so a small residual advantage for the trained code could be
conditioning rather than structure.

\item \textbf{The margin over an untrained code stops growing inside our own range.} This is the
sharpest limitation on the headline claim, and it comes from our own data rather than from something
we did not run. Section~\ref{sec:res-primary} reports the trained frontier at $d^{\ScExpTrained}$ and
the untrained one at $d^{\ScExpRandom}$: the untrained exponent is larger, the ratio falls at every
step, and the margin in sparsity levels peaks at $d=200$ and falls by \MarginDrop{}
$[\MarginDropCiLow,\MarginDropCiHigh]$ at $d=400$. ``Training buys the code'' therefore holds at every
width we measured and is not established as a property that survives scale. Two candidate
explanations are not separated here: a real ceiling on what the frontier can buy, or the fixed
optimisation budget of \EsevenSteps{} steps at every width, which gives the largest models the least
optimisation per parameter. Distinguishing them needs a budget sweep at fixed width, which is cheap,
and a fifth width, which is not.
\end{enumerate}

\section{Discussion}
\label{sec:discussion}

Four readings survive this work, and two that motivated it do not.

\textbf{The floor is a target, not an obstacle --- and reaching it means less than it looks.}
Gradient descent under the unit-diagonal constraint converges to a unit-norm tight frame
(Appendix~\ref{app:earlier}), and a network trained on a task with no interference objective lands
within \EfiveLfourRatioPinvMean{} of the same bound. But an untrained i.i.d.\ code reaches
$\Rgeom=\SARandFiftyRgeom{}$ with no training at all, and \SaeControls{} shape-matched controls land
between \SaeCtlRgeomMin{} and \SaeCtlRgeomMax{} across a twentyfold range of load. Proximity to the
rank--trace floor is what the ensemble gives for free, not a signature of learned structure, and
Proposition~\ref{prop:leverage} says why rather than leaving it as an empirical coincidence: the ratio
is exactly one when leverage is equalised, and an unstructured code equalises it. Any attainment
ratio reported without a shape-matched baseline is uninterpretable, and we could not find that
baseline in the work we build on.

\textbf{No released dictionary we measured is at the floor, and the excess tracks what the model
learned.} All \SaeDictsAboveControl{} of \SaeDicts{} decoders sit above their own control, at $\Rgeom$
from \SaeDictRgeomMin{} to \SaeDictRgeomMax{}, with no exception across five model families, three
training recipes and two decoder sites. The direction is the opposite of the natural guess: a random
code of the same shape carries strictly \emph{less} cross-talk under the best linear readout each
admits, so whatever sparse reconstruction optimises, it is not the evenness of its own leverage. The matched pair EleutherAI released settles where the excess comes from: an
autoencoder trained by an identical recipe on a \emph{randomly initialised} copy of the same model
returns to the floor (\SaeRandInitRgeomMin--\SaeRandInitRgeomMax) in all \SaePairedLayers{} paired
layers. Fitting a sparse autoencoder does not by itself produce dispersed leverage; representing a
trained model's features does. For practice this reorders two intuitions: the layer a dictionary is
trained on changes its linear readability by far more than its width does, though not in one ordering
that holds across families; and the coefficient of variation --- the obvious summary --- is the wrong
one, because it moves when the ratio does not.

\textbf{The loss decides whether a linear interface exists at all.} This is the largest effect we
report and the one that replicates most widely. $L^2$ drives $\Rgeom$ to \SALtwoTwoHundredRgeom{} at
$d=200$ and \ScLtwoRgeom{} at $d=400$ while collapsing its frontier to $1$ --- its worst feature is
separable at singletons and at no larger sparsity --- and $L^4$ stays at \SALfourTwoHundredRgeom{}
with a frontier that grows to \ScMeasured{}. What
distinguishes models is therefore not whether they approach the floor but whether the floor
\emph{binds} them, and Stage B reproduces the separation in an independent campaign at twice the
feature load.

\textbf{What training buys is the code, not the readout.} A random code with only its readout trained
keeps an untrained code's frontier, \EeightFrozenHundredKappa{} against \EeightRandomHundredKappa{},
while training both reaches \EeightTrainedHundredKappa{}. The frozen arm's own decoder ends at
$\Rreadout=\EeightFrozenHundredRreadout{}$, near twice the closed-form cross-talk optimum for the
code it was handed --- and that is not a failed optimisation. It beats that optimum by
\EeightFrozenHundredTaskGain{} on the loss it was trained under, so on a fixed random code the task
objective and the cross-talk objective genuinely diverge and the readout is right to follow the
former. What the joint arm gains is that it need not choose: it holds
$\Rreadout=\EeightTrainedHundredRreadout{}$ while beating the same baseline by
\EeightTrainedHundredTaskGain{}. Training the code aligns the two objectives; freezing it leaves them
in a conflict no amount of readout training resolves.

\textbf{What does not survive, first: the decoder framing, in either direction.} This work began with
the claim that an $L^4$ network's thresholded output keeps working past where linear readouts of its
own code collapse. Against the fixed readouts of Appendix~\ref{app:earlier} that appeared true; a
matched comparison against a fitted probe was registered as the test, and the registered prediction
was a tie. What the corrected comparison gives is neither: across \PrimModels{} models the two tie in
\PrimTies{} --- but \PrimLtwoJointFailures{} of those ties are $L^2$ cells where both decoders sit at
$s_{95}=0$, which is joint failure and not agreement --- with \PrimNetWins{} to the network,
\PrimProbeWins{} to the probe, and a difference that grows with width to \ScPrimDiff{} levels in
\ScPrimNetWins{} of \ScSeeds{} models at $d=400$.

We decline to claim that as a nonlinear advantage, and not merely on principle. The probe's
hypothesis class contains the network's own output layer, so a probe attaining the network's score
exists in every cell; writing that map in the probe's parameterisation and scoring it under the very
objective the selection maximises shows that the selection missed it in \CeilLfourMissed{} of
\CeilLfourModels{} models across every $L^4$ cell above the tie width, by
\CeilLfourHundredValGap{} to \CeilLfourFourHundredValGap{} of validation objective. The gap is a
search failure, measured. The untrained arm is the control that makes this a finding rather than a
tautology: there the same selection beats the network's own map by \CeilRandFourHundredValGap{} and
fails to do so in \CeilRandMissed{} of \CeilRandModels{} models (Section~\ref{sec:probes}).

What the comparison does support is a property of the code, because the
protocol is held fixed while the code varies: the sign is positive under $L^4$, zero under $L^2$, and
negative by up to \PrimRandFourHundredDiff{} levels on a code that was never trained. A single cell
of that table says little; the ordering across the three is the result. And it comes with a
mechanism: on a trained code the network's own readout sits where a supervised fit of the same class
on the same data does not reach, while on an untrained code that fit comfortably beats it. The
trained readout is not merely better --- it is hard to find by fitting, which is co-adaptation seen
from the decoder's side.

\textbf{And second: the representation gap.} An earlier version located the effect in what the ReLU
discards, reporting that a probe reading the preactivation beats the network by a full sparsity level.
That was an artefact of the threshold-selection defect. On trained codes the pre-ReLU advantage is now
at most a quarter of a level with most seeds tied, and at $d=400$ it reverses. The surviving form is
narrower and confined to the untrained arm, where the preactivation is more affinely decodable by
\PrimRandFiftyPreDiff{} to \PrimRandFourHundredPreDiff{} levels: the ReLU-plus-threshold pipeline
throws away a large and growing amount on a code that was never trained, and next to none on one
trained under $L^4$. Appendix~\ref{app:withdrawn} records both withdrawals with their causes, along
with four others.

\textbf{An instrument, and what it does not settle.} This gives the debate
in~\cite{bhagat2026cc,ferreiradasilva2026l4} something to measure. ``Does this network compute in
superposition?'' becomes three questions with numerical answers: how far the code is from its own
floor rather than from the ensemble's, how that distance decomposes into leverage dispersion, and, per
feature, the largest sparsity at which any affine rule could succeed. The first two are statements
about the row space under a declared unit-column gauge, and are invariant to any invertible
transformation of the ambient coordinates --- which is what makes them properties of the code rather
than of a basis, and equally what stops them from being claims about numerical conditioning or
noise robustness. None requires
reverse-engineering a mechanism, and the third is exact rather than sampled --- which matters, because
this project's history is a series of empirical readings that reversed when an estimand or a baseline
was tightened, while $\kappa_i$ never moved.

The consequence question deserves stating precisely, because the honest answer is neither ``none'' nor
``established''. In operational units the diagnostic does say something: all \SaeFailEarlier{} of
\SaeDicts{} dictionaries reach guaranteed affine failure \SaeFailGapMin{} to \SaeFailGapMax{} sparsity
levels earlier than an i.i.d.\ code of their own shape and operating sparsity. At a sparsity where a
random code is still safe, a released dictionary already has a feature no affine rule can recover.
That is a statement a practitioner can act on, and it is a worst-case guarantee rather than an
average.

What it is not is an external endpoint. Both the ratio and the failure threshold are functionals of
the same leverage distribution, so their agreement is a consistency check and not independent
corroboration; and neither is measured against reconstruction quality, attribution accuracy, steering
behaviour, or any other quantity an interpretability pipeline actually optimises. Whether a
\SaeFailGapMax{}-level earlier failure boundary changes what such a pipeline recovers in practice is
the question we would most want a reader to take next, and the one this paper does not answer.

For capacity theory, the interface distinction explains why the H\"{a}nni and Adler--Shavit results
coexist rather than compete (Appendix~\ref{sec:application}). It also cautions against reading
sparse-autoencoder dictionary sizes as computation capacities: those are reconstructive quantities,
and both halves of this paper show that reconstruction and linear decodability are governed by
different criteria --- most sharply in the dictionaries, which optimise the first and come out behind
an unstructured code of their own shape on the second.


\section*{Acknowledgements}
The authors thank the VRAIN institute and the Universitat Polit\`ecnica de Val\`encia for
institutional support, and two frame-theory researchers who, after reading the first version of this
preprint, pointed out prior work containing the proof mechanism of Theorem~\ref{thm:floor} and raised
the calibration question that became Corollary~\ref{cor:calibfree} and Remark~\ref{rem:calib}.

We also thank the external reviewers who audited this manuscript adversarially before submission.
Their findings are the reason for two of the eight entries in Appendix~\ref{app:withdrawn} and for
three of the nine entries in the released defect register.

\section*{Funding}
H.B.\ acknowledges support from the UPV doctoral programme. No additional external funding was
received for this work.

\section*{Code and data availability}
All code, configurations, run records, raw results and the sources of this manuscript are publicly
available at \url{https://github.com/hecboar/linear-readout-threshold-recovery}, together with a
single entry point that reproduces every experiment, table and figure (Appendix~\ref{app:repro}).
The trained weights of all 400 networks are included, so every analysis in
Section~\ref{sec:campaigns} can be recomputed without retraining. The repository also carries the
defect register and the decision log the manuscript refers to.

\section*{Declaration of generative AI and AI-assisted technologies in the writing process}
During the preparation of this work the authors used AI-assisted tools for exploratory mathematical
checking, drafting support, language editing and literature-search assistance. After using these
tools the authors reviewed and edited the content as needed and take full responsibility for the
content of the publication. AI tools are not listed as authors and did not determine the research
questions, proof strategy or final claims. All mathematical statements, proofs, citations and
comparisons with source papers were verified by the authors.

\section*{Changes from version 1, including three corrected statements}
\addcontentsline{toc}{section}{Changes from version 1}

Version~1 of this preprint appeared under the title \emph{Linear-Readout Floors and Threshold
Recovery for Computation in Superposition}. This version supersedes it. Most of what follows is new
work rather than correction, but three formal statements in v1 are wrong as written and two of its
empirical readings did not survive later measurement, so we list those first and separately. A reader
who cited v1 should read this section.

\paragraph{Three statements that were too strong, and are corrected here.} An external adversarial
audit found, and we verified independently against the implementation, that:

\begin{itemize}[leftmargin=*,itemsep=3pt]

\item Theorem~\ref{thm:arrow} and Corollary~\ref{cor:failthresh} were stated for all $h_i<1$, while
their shared proof establishes them only for $h_i\le 1/2$. Explicit unit-norm counterexamples at
$d=2$, $F=3$ give $\kappa=\infty$ against a finite bound. Both now carry the $h_i\le 1/2$
hypothesis, $\kappa_i$ is defined as $+\infty$ when the linear programme is infeasible, and the
counterexamples are regression tests in the released code
(\code{tests/test\_arrow\_counterexamples.py}).

\item A proposition concluded that a feature with $\kappa_i=1$ remains separable at $s=1$.
Theorem~\ref{thm:frontier} requires $\kappa_i>s$, so the correct conclusion is that such a feature is
separable at no positive sparsity. Corrected.

\item Theorem~\ref{thm:codefloor} claimed that the row-wise optimum coincides with the algebraic
pseudoinverse. It is the \emph{gain-calibrated} pseudoinverse $D_h^{-1}\Phi^{+}$. The clause is
removed.

\end{itemize}

No measurement in v1 or here depends on the invalid region. The corollary is only ever applied at
$\min_i h_i$, and the largest $\min_i h_i$ over all \KdThreeModels{} models of the main grid is
\KdThreeMaxLevMin, inside $h_i\le 1/2$. The tests that reported no violations sampled only that
region, which is a gap in the tests rather than a false test, and is why the error survived to
publication.

\paragraph{Two readings withdrawn.} A defect in how a shared decision threshold was selected --- the
search never scored the default it was meant to improve on --- invalidated two empirical claims made
in v1: that the two pre-registered estimands disagree, and that an affine probe of the preactivation
beats the network by a full sparsity level. Refitted correctly, the estimands agree at every width
and the pre-ReLU advantage is at most a quarter of a level on trained codes, reversing at the largest
width. Appendix~\ref{app:withdrawn} records these and six further withdrawals with the cause of each.

\paragraph{What is new.} The theory is unchanged apart from the corrections above. The empirical
content is substantially new: the leverage identity of Proposition~\ref{prop:leverage} is put to
predictive use rather than stated in passing; \SaeDicts{} released decoder matrices are measured
against shape-matched controls, with a trained-versus-randomly-initialised pair locating the cause;
the controlled campaigns are extended to four widths, two feature loads and two training sparsities;
a frozen-code arm separates what training the code contributes from what training the readout
contributes; and the ceiling on the decoder comparison is measured rather than argued. Every measured
number is generated from the released results by \code{scripts/make\_numbers.py} and checked against
this manuscript by \code{scripts/check\_manuscript\_numbers.py}.

\bibliographystyle{tmlr}
\bibliography{refs}

\appendix
\section{The Interface Diagnostic}
\label{sec:method}

\subsection{Problem statement}

\begin{problem}[Interface diagnosis]
\label{prob:main}
\textbf{Input:} a code $\Phi\in\R^{d\times F}$ with $F>d$ (or a trained network from which an
effective code is read off); a readout family $\mathcal G$; a sparse-state model, here
uniformly random supports of size $s$ drawn from a grid $\mathcal S$; a threshold $\theta$; a
trial budget $T$; a confidence level $1-\alpha$.

\textbf{Output:} (i) the attainment ratio $r=\widehat W/W(F,d)\ge1$ and the maximum-form ratio
$r_{\max}$, together with its factorisation $R_{\mathrm{geom}}\times R_{\mathrm{readout}}$
against the code's own floor (Section~\ref{sec:codefloor}); (ii) for each $s\in\mathcal S$, the
empirical $p_{\mathrm{rec}}(s)$ with a Wilson interval and the exact per-coordinate RMS error
$\rho_{\mathrm{lin}}(s)$ of the calibrated linear interface; (iii) the criterion-separation
witness $(s_{95},\rho_{\mathrm{lin}}(s_{95}))$; (iv) for each feature in a chosen subset, the
collision frontier $\kappa_i$ and hence the largest sparsity at which any affine probe can
detect that feature (Algorithm~\ref{alg:frontier}).
\end{problem}

The output is a diagnosis, not a score to be optimised, and the four parts answer different
questions. A code with $r$ close to $1$ is Welch-optimal in the mean-square sense, but only
$R_{\mathrm{readout}}$ says anything about the decoder; a large $s_{95}$ with
$\rho_{\mathrm{lin}}(s_{95})$ well above $d^{-1/2}$ says the code supports support recovery in a
regime where analog reconstruction from the same map is useless; and $\kappa_i$ is the only part
that is per-feature and exact rather than aggregate and empirical, which is what makes it usable
as a statement about an individual feature rather than about the code on average.

\subsection{Algorithm 1: floor attainment}

\begin{algorithm}[t]
\caption{\textsc{FloorAttainment} --- how close the readout interface sits to the floor}
\label{alg:one}
\begin{algorithmic}[1]
\Require code $\Phi\in\R^{d\times F}$, $F>d$; readout $G\in\R^{F\times d}$ or a rule in
  $\{\textsc{tied},\textsc{pinv},\textsc{lsq}\}$
\Ensure attainment ratio $r$ (and $r_{\max}$ if requested), or \textsc{Degenerate}
\State $G\gets\Phi^{\top}$, $\Phi^{+}$, or the least-squares fit, per the rule
\State $D_{ii}\gets\langle g_i,\phi_i\rangle$ for $i=1,\dots,F$
  \Comment{the diagonal of $M=G\Phi$, in $O(Fd)$; $M$ itself is not formed}
\If{$\min_i|D_{ii}|<\epsilon_{\mathrm{gain}}$} \Return \textsc{Degenerate}
  \Comment{some feature is read out with vanishing gain}
\EndIf
\State $\bar G\gets D^{-1}G$ \Comment{gain normalisation; see Corollary~\ref{cor:calibfree}}
\State $H\gets\bar G^{\top}\bar G$;\quad $\Sigma\gets\Phi\Phi^{\top}$
  \Comment{both $d\times d$; $O(Fd^2)$}
\State $\widehat W\gets\dfrac{\operatorname{tr}(H\Sigma)-F}{F(F-1)}$
  \Comment{exact, by Lemma~\ref{lem:suffstat}}
\State $W\gets(F-d)/(d(F-1))$ \Comment{Theorem~\ref{thm:floor}}
\If{the maximum form is requested}
  \State form $\widetilde M\gets\bar G\Phi$ and set
    $\widehat\mu\gets\max_{i\neq j}|\widetilde M_{ij}|$
    \Comment{$O(F^2d)$ time, $O(F^2)$ memory}
\EndIf
\State \Return $r\gets\widehat W/W$ \ (and $r_{\max}\gets\widehat\mu/\sqrt{W}$ if computed)
\end{algorithmic}
\end{algorithm}

Step~3 is not a numerical guard but a semantic one: without it, an interface can be made to
look arbitrarily good by shrinking $G$, which is exactly the confusion that
Remark~\ref{rem:calib} dispels. By Theorem~\ref{thm:floor}, a correct implementation always
returns $r\ge1$; we use that as a runtime assertion, and no violation was observed in
\EoneMeasurements{} gain-normalised interfaces (Section~\ref{sec:res-e1}).

The mean-square statistic that the theorem actually constrains never requires the interface to
be built. The maximum statistic does, and we say so rather than folding it into the same cost.

\begin{lemma}[Mean-square cross-talk from $d\times d$ statistics]
\label{lem:suffstat}
Let $\bar G=D^{-1}G$ with $D_{ii}=\langle g_i,\phi_i\rangle\neq0$, let
$\widetilde M=\bar G\Phi$, and set $H=\bar G^{\top}\bar G$ and $\Sigma=\Phi\Phi^{\top}$, both
in $\R^{d\times d}$. Then
\[
  \sum_{i\neq j}\widetilde M_{ij}^{2}
  \;=\;\norm{\bar G\Phi}_F^{2}-F
  \;=\;\operatorname{tr}(H\Sigma)-F .
\]
\end{lemma}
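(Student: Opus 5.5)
The identity has two equalities, and I will prove them in order. Both are bookkeeping on the Frobenius norm; neither needs rank or frame structure.

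\emph{First equality.} I check that $\widetilde M$ has unit diagonal. Since $\bar g_i=g_i/D_{ii}$, we get $\widetilde M_{ii}=\langle\bar g_i,\phi_i\rangle=\langle g_i,\phi_i\rangle/D_{ii}=1$. This is the only place the hypothesis $D_{ii}\neq0$ enters: it makes $D^{-1}$ well defined. Then $\norm{\widetilde M}_F^2=\sum_i\widetilde M_{ii}^2+\sum_{i\neq j}\widetilde M_{ij}^2=F+\sum_{i\neq j}\widetilde M_{ij}^2$. This is the same diagonal split used in the proof of Theorem~\ref{thm:floor}, and rearranging gives $\sum_{i\neq j}\widetilde M_{ij}^2=\norm{\bar G\Phi}_F^2-F$.

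\emph{Second equality.} I write the Frobenius norm as a trace and cycle it:
\[
\norm{\bar G\Phi}_F^2=\operatorname{tr}\big((\bar G\Phi)^{\top}\bar G\Phi\big)=\operatorname{tr}\big(\Phi^{\top}\bar G^{\top}\bar G\Phi\big)=\operatorname{tr}\big(\bar G^{\top}\bar G\,\Phi\Phi^{\top}\big)=\operatorname{tr}(H\Sigma).
\]
The cyclic step moves a $d\times F$ factor past an $F\times d$ product, so every matrix involved is $d\times d$. That is the computational point behind Algorithm~\ref{alg:one}: $H$ and $\Sigma$ each cost $O(Fd^2)$, and the $F\times F$ matrix $\widetilde M$ is never formed.

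\emph{Main obstacle.} There is no real obstacle. The one step to state carefully is the unit-diagonal identity $\widetilde M_{ii}=1$, because the subtraction of $F$ depends on it. It holds because $D$ is defined from the diagonal of $G\Phi$ itself, not from an arbitrary rescaling. Everything else is the cyclic property of the trace.
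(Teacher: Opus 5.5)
Your proof is correct and takes the same route as the paper: the unit diagonal of $\widetilde M$ accounts for the subtracted $F$, and cyclicity of the trace turns $\norm{\bar G\Phi}_F^2$ into $\operatorname{tr}(H\Sigma)$. Your explicit check that $\widetilde M_{ii}=\langle g_i,\phi_i\rangle/D_{ii}=1$ spells out the step the paper calls ``by construction.''
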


\begin{proof}
$\widetilde M$ has unit diagonal by construction, so the $F$ diagonal entries contribute
exactly $F$ to $\norm{\widetilde M}_F^2$ and the remainder is the off-diagonal sum. Cyclicity
of the trace gives $\norm{\bar G\Phi}_F^2=\operatorname{tr}(\Phi^{\top}\bar G^{\top}\bar
G\Phi)=\operatorname{tr}(\bar G^{\top}\bar G\,\Phi\Phi^{\top})=\operatorname{tr}(H\Sigma)$.
\end{proof}

Forming $H$ and $\Sigma$ costs $O(Fd^2)$ time and $O(d^2)$ memory, against $O(F^2d)$ and
$O(F^2)$ for the direct route --- a reduction of a factor $F/d$ in time and $F^2/d^2$ in
memory, with no approximation whatever. The identity is verified against the dense computation
to relative accuracy $10^{-10}$ in the accompanying tests.

\subsection{Algorithm 2: separation profile of a given code}

Algorithm~\ref{alg:two} is the diagnostic proper: it takes a code and a readout that already
exist --- a designed code, an optimised one, or the effective code of a trained network --- and
holds them fixed while the support varies. This is the only version that says anything about a
particular object, and it is the one applied to trained networks in
Section~\ref{sec:res-e5}.

\begin{algorithm}[t]
\caption{\textsc{SeparationProfile} --- support recovery versus irreducible analog error, on a
fixed code}
\label{alg:two}
\begin{algorithmic}[1]
\Require code $\Phi\in\R^{d\times F}$ and readout $G\in\R^{F\times d}$; sparsity grid
  $\mathcal S$; trials $T$; threshold $\theta$; seed $\sigma$
\Ensure $\{(s,p_{\mathrm{rec}}(s),\text{CI},\rho_{\mathrm{lin}}(s))\}_{s\in\mathcal S}$ and the
  separation witness $(s_{95},\rho_{\mathrm{lin}}(s_{95}))$
\State $s_{\max}\gets\max\mathcal S$;\quad $\bar G\gets D^{-1}G$ as in
  Algorithm~\ref{alg:one}
\State $\norm{A}_F^2\gets\operatorname{tr}(H\Sigma)-F$;\quad
       $\norm{A\mathbf 1}^2\gets\norm{\bar G\Phi\mathbf 1}^2
       -2\,\mathbf 1^{\top}\bar G\Phi\mathbf 1+F$
  \Comment{Lemma~\ref{lem:moments}; $O(Fd^2)$}
\State $e_s\gets\frac1F\!\left[(p-q)\norm{A}_F^2+q\norm{A\mathbf 1}^2\right]$,
  $p=\tfrac sF$, $q=\tfrac{s(s-1)}{F(F-1)}$, for each $s\in\mathcal S$
  \Comment{exact; Prop.~\ref{prop:uniform-energy}}
\For{$t=1$ \textbf{to} $T$}
  \State draw nested supports $S_1\subset\cdots\subset S_{s_{\max}}$ from seed $(\sigma,t)$
  \State $Z\gets\bar G\,[\,x_1,\dots,x_{s_{\max}}\,]$, $x_s=\sum_{j\in S_s}\phi_j$
    \Comment{$O(Fd\,s_{\max})$}
  \State $k_s\mathrel{+}=1$ for each $s$ with $\mathbf 1\{Z_{\cdot s}\ge\theta\}=\mathbf 1_{S_s}$
\EndFor
\State $p_{\mathrm{rec}}(s)\gets k_s/T$ with Wilson interval;\quad
       $\rho_{\mathrm{lin}}(s)\gets\sqrt{e_s}$
\State $s_{95}\gets\max\{s:\ p_{\mathrm{rec}}(s')\ge0.95\ \forall s'\le s\}$
\State \Return the profile and $(s_{95},\rho_{\mathrm{lin}}(s_{95}))$
\end{algorithmic}
\end{algorithm}

The support-recovery side of Algorithm~\ref{alg:two} is Monte Carlo over supports; the analog
side is \emph{exact} for the given code, computed once before the loop rather than estimated
inside it. Total cost is $O(Fd^2+TFd\,s_{\max})$ time and $O(Fd+d^2)$ memory: the $F\times F$
interface is never formed.

\subsection{A scaling experiment on the random-code ensemble}
\label{sec:alg-ensemble}

Algorithm~\ref{alg:ensemble} is a different object and we keep it named apart. It redraws a
fresh code at every trial, so what it characterises is the $(d,F)$ random-code \emph{ensemble},
not any individual code; it is the vehicle for the width-scaling study of
Sections~\ref{sec:res-e1} and~\ref{sec:res-e6}, and it is not a diagnosis of anything. Because
the code is regenerated block by block from a counted seed and never stored, it reaches widths
at which the code would not fit in memory.

\begin{algorithm}[t]
\caption{\textsc{EnsembleScaling} --- the same two criteria, averaged over random codes}
\label{alg:ensemble}
\begin{algorithmic}[1]
\Require $d$, $F$, sparsity grid $\mathcal S$, trials $T$, threshold $\theta$, seed $\sigma$,
  block size $B$
\Ensure the ensemble-averaged profile and $(s_{95},\rho_{\mathrm{lin}}(s_{95}))$
\State $s_{\max}\gets\max\mathcal S$;\quad $k_s\gets0$ for all $s$;\quad $e_s\gets0$ for all $s$
\For{$t=1$ \textbf{to} $T$}
  \State draw nested supports $S_1\subset\cdots\subset S_{s_{\max}}$ from seed $(\sigma,t)$
  \State $X\gets$ partial sums $[\,x_1,\dots,x_{s_{\max}}\,]$, $x_s=\sum_{j\in S_s}\phi_j$
    \Comment{$d\times s_{\max}$}
  \State $\Sigma\gets 0_{d\times d}$,\ $v\gets 0_d$
  \For{each block $\beta$ of $B$ columns of $\Phi$}
    \State regenerate $\Phi_\beta$ from seed $(\sigma,t,\beta)$ \Comment{$\Phi$ is never stored}
    \State $Z\gets\Phi_\beta^{\top}X$; mark $s$ as failed if
      $\mathbf 1\{Z_{\cdot s}\ge\theta\}$ disagrees with $\mathbf 1_{S_s}$ on this block
    \State $\Sigma\mathrel{+}=\Phi_\beta\Phi_\beta^{\top}$;\quad
           $v\mathrel{+}=\Phi_\beta\mathbf 1$
  \EndFor
  \State $k_s\mathrel{+}=1$ for each surviving $s$
  \State $\norm{A}_F^2\gets\norm{\Sigma}_F^2-F$;\quad
         $\norm{A\mathbf 1}^2\gets v^{\top}\Sigma v-2\norm{v}^2+F$
    \Comment{Lemma~\ref{lem:moments}}
  \State $e_s\mathrel{+}=\frac1F\!\left[(p-q)\norm{A}_F^2+q\norm{A\mathbf 1}^2\right]$,
    $p=\tfrac sF$, $q=\tfrac{s(s-1)}{F(F-1)}$ \Comment{Prop.~\ref{prop:uniform-energy}}
\EndFor
\State $p_{\mathrm{rec}}(s)\gets k_s/T$ with Wilson interval;\quad
       $\rho_{\mathrm{lin}}(s)\gets\sqrt{e_s/T}$
\State $s_{95}\gets\max\{s:\ p_{\mathrm{rec}}(s')\ge0.95\ \forall s'\le s\}$
\State \Return the profile and $(s_{95},\rho_{\mathrm{lin}}(s_{95}))$
\end{algorithmic}
\end{algorithm}

Both algorithms share the same asymmetry: the support-recovery side is estimated, the analog
side is computed. That is what makes the comparison a witness rather than two noisy curves ---
the analog error carries no Monte Carlo uncertainty, so any gap at $s_{95}$ is attributable to
the recovery side alone.

\begin{lemma}[Sufficient statistics of the tied interface]
\label{lem:moments}
Let $\Phi$ have unit-norm columns, $M=\Phi^{\top}\Phi$, $A=M-I_F$,
$\Sigma=\Phi\Phi^{\top}\in\R^{d\times d}$ and $v=\Phi\mathbf 1\in\R^d$. Then
\[
  \norm{A}_F^2=\norm{\Sigma}_F^2-F,
  \qquad
  \norm{A\mathbf 1}_2^2=v^{\top}\Sigma v-2\norm{v}_2^2+F .
\]
\end{lemma}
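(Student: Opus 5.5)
The plan is to prove both identities by direct expansion of the Frobenius norm and the quadratic form, using only cyclicity of the trace and the unit-norm hypothesis. The argument runs parallel to Lemma~\ref{lem:suffstat}, specialised to the tied readout $G=\Phi^{\top}$. Because the columns are unit norm, $M$ already has unit diagonal, so no gain normalisation enters and $D=I_F$.

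\textbf{The Frobenius identity.} Since $\operatorname{diag}(M)=\mathbf 1$, the matrix $A=M-I_F$ is exactly the off-diagonal part of $M$. Hence $\norm{A}_F^2=\norm{M}_F^2-F$, because the $F$ unit diagonal entries contribute exactly $F$. By cyclicity,
\[
\norm{M}_F^2=\operatorname{tr}(\Phi^{\top}\Phi\Phi^{\top}\Phi)=\operatorname{tr}(\Sigma^2)=\norm{\Sigma}_F^2,
\]
where the last step uses that $\Sigma$ is symmetric. This is the same computation as in the proof of Proposition~\ref{prop:tight}.

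\textbf{The quadratic identity.} Write $A\mathbf 1=M\mathbf 1-\mathbf 1=\Phi^{\top}v-\mathbf 1$ and expand:
\[
\norm{A\mathbf 1}^2=\norm{\Phi^{\top}v}^2-2\,\mathbf 1^{\top}\Phi^{\top}v+\norm{\mathbf 1}^2 .
\]
The three terms reduce as follows.
\begin{itemize}
\item $\norm{\Phi^{\top}v}^2=v^{\top}\Phi\Phi^{\top}v=v^{\top}\Sigma v$.
\item $\mathbf 1^{\top}\Phi^{\top}v=(\Phi\mathbf 1)^{\top}v=\norm{v}^2$.
\item $\norm{\mathbf 1}^2=F$.
\end{itemize}
Substituting these gives the stated formula.

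Neither step presents a real obstacle. The only point requiring care is that the unit-norm hypothesis is used exactly once, to identify $\operatorname{diag}(M)$ with $\mathbf 1$. Without it, the subtraction of $F$ would have to be replaced by $\sum_i\norm{\phi_i}^4$ in the first identity and by the corresponding diagonal terms in the second. I would note this explicitly, since Algorithm~\ref{alg:ensemble} relies on the columns being normalised before accumulating $\Sigma$ and $v$.
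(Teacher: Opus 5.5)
Your proof is correct and follows the paper's argument essentially line for line: cyclicity of the trace together with the unit diagonal of $M$ gives the first identity, and expanding $\norm{\Phi^{\top}v-\mathbf 1}_2^2$ with $\mathbf 1^{\top}\Phi^{\top}v=\norm{v}_2^2$ gives the second. One small aside on your closing remark: with $A=M-I_F$ as defined, the second identity holds for every $\Phi$ and never uses the unit-norm hypothesis, so the diagonal corrections you mention apply only if $A$ is instead read as the off-diagonal part of $M$.
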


\begin{proof}
By cyclicity of the trace, $\norm{\Phi^{\top}\Phi}_F^2=\operatorname{tr}((\Phi^{\top}\Phi)^2)
=\operatorname{tr}((\Phi\Phi^{\top})^2)=\norm{\Sigma}_F^2$; the diagonal of $M$ is all ones and
contributes $F$, so subtracting $F$ leaves the off-diagonal energy, which is $\norm{A}_F^2$.
For the second identity, $A\mathbf 1=\Phi^{\top}\Phi\mathbf 1-\mathbf 1=\Phi^{\top}v-\mathbf 1$,
hence $\norm{A\mathbf 1}^2=v^{\top}\Phi\Phi^{\top}v-2\mathbf 1^{\top}\Phi^{\top}v+F
=v^{\top}\Sigma v-2\norm{v}^2+F$, using $\mathbf 1^{\top}\Phi^{\top}v=v^{\top}v$.
\end{proof}

Lemma~\ref{lem:moments} is the reason the diagnostic scales: it replaces an $F\times F$ object
by a $d\times d$ one, and combined with Proposition~\ref{prop:uniform-energy} it yields the
average linear error at every sparsity without a single Monte Carlo draw.

\subsection{Hyperparameters, complexity and cost}

The diagnostic has five hyperparameters: the readout rule, the threshold $\theta$ (default
$1/2$, the natural midpoint for unit-norm codes and Boolean states), the sparsity grid
$\mathcal S$, the trial budget $T$, and the block size $B$, which trades memory for the number
of regeneration passes. Only $T$ affects the statistical resolution;
Section~\ref{sec:res-e1} reports the threshold sensitivity of the conclusions.

Algorithm~\ref{alg:one} costs $O(Fd^2)$ time and $O(d^2)$ memory for the ratio $r$, by
Lemma~\ref{lem:suffstat}. The maximum-form ratio $r_{\max}$ is the one exception in the whole
method: no $d\times d$ reduction of a maximum is available to us, so obtaining it exactly costs
$O(F^2d)$ time and $O(F^2)$ memory, and it is computed only when asked for. Since the floor of
Theorem~\ref{thm:floor} is a statement about the mean square, nothing about the floor
comparison is lost by omitting it; we report $r_{\max}$ at the moderate widths where it is
affordable and drop it beyond. Algorithm~\ref{alg:two} costs $O(Fd^2+T\,F\,d\,s_{\max})$ time
and $O(Fd+d^2)$ memory. Algorithm~\ref{alg:ensemble} costs $O(T\,F\,d\,
s_{\max})$ time --- one pass of code regeneration and one $B\times d\times s_{\max}$ product
per block --- and $O(Bd+d^2)$ memory, independent of $F$. At $d=\EsixDmax$ and
$F=\EsixFmax$ the code would occupy $4$~GB in single precision; the streaming implementation
peaks near $B\,d\cdot4$ bytes. Measured wall-clock times are in Table~\ref{tab:cost}.

\subsection{Applicability, edge cases and failure modes}

\paragraph{Conditions of applicability.} The diagnostic assumes (i) $F>d$, otherwise the floor
is vacuous and Algorithm~\ref{alg:one} refuses to run; (ii) unit-norm code columns, or a
declared normalisation, since the threshold $\theta=1/2$ is calibrated to unit gain;
(iii) a sparse-state model that is exchangeable across coordinates, which is what makes the
closed form of step~3 valid.

\paragraph{Edge cases.} At $s=1$ the linear error is still non-zero while threshold recovery
succeeds whenever $\mu<\theta$; this is the smallest instance of the separation. When
$F/d\to1$ the floor tends to zero and the diagnosis becomes uninformative --- correctly so,
since a nearly complete code has no interference to speak of.

\paragraph{Failure modes.} Three are worth naming. (a) \emph{Degenerate gain}: if some
$|M_{ii}|$ is numerically zero the calibration is undefined and Algorithm~\ref{alg:one}
returns \textsc{Degenerate} rather than a number; this is not a corner case but the expected
outcome when a readout is fitted without the diagonal constraint
(Section~\ref{sec:res-e4}). (b) \emph{Uncovered feature}: a least-squares readout fitted on
states in which some feature never appears has an identically zero row, which triggers (a);
the implementation therefore enforces coverage of the fitting set. (c) \emph{Vacuous profile}:
if the coherence of the code exceeds $\theta$, $p_{\mathrm{rec}}(1)$ is already below $0.95$
and $s_{95}=0$; the certificate is then empty, and the diagnostic reports that rather than
extrapolating.

\subsection{Implementation and traceability}

\begin{table}[t]
\centering
\caption{Each step of the method and the function that implements it. Paths are relative to
the repository root.}
\label{tab:codemap}
\small
\begin{tabular}{lll}
\toprule
quantity & where defined & implementation \\
\midrule
$W(F,d)$ & Eq.~\eqref{eq:floor} & \code{lrtr/codes.py:welch\_floor} \\
$D^{-1}M$ & Cor.~\ref{cor:calibfree} & \code{lrtr/interface.py:unit\_diagonal} \\
$\widehat W,\widehat\mu,r,r_{\max}$ & Alg.~\ref{alg:one} & \code{lrtr/interface.py:crosstalk\_stats} \\
Algorithm~\ref{alg:one} & Sec.~\ref{sec:method} & \code{lrtr/diagnostic.py:interface\_floor\_diagnostic} \\
$\Sigma,v\mapsto\norm{A}_F^2,\norm{A\mathbf 1}^2$ & Lem.~\ref{lem:moments} & \code{lrtr/diagnostic.py:tied\_energy\_moments} \\
$\frac1F\E_S\norm{A\mathbf 1_S}^2$ & Prop.~\ref{prop:uniform-energy} & \code{lrtr/interface.py:linear\_energy\_uniform} \\
energy floor & Eq.~\eqref{eq:energyfloor} & \code{lrtr/interface.py:energy\_floor\_uniform} \\
threshold decoder $Q$ & Thm.~\ref{thm:coherence} & \code{lrtr/threshold.py:threshold\_decode} \\
fixed-code trial & Alg.~
ef{alg:two}, l.~5--7 & \code{lrtr/threshold.py:recovery\_trial\_fixed} \\
streaming trial & Alg.~
ef{alg:ensemble}, l.~6--10 & \code{lrtr/threshold.py:recovery\_trial\_streaming} \\
Algorithm~
ef{alg:two} & Sec.~
ef{sec:method} & \code{lrtr/diagnostic.py:fixed\_code\_separation\_profile} \\
Algorithm~
ef{alg:ensemble} & Sec.~
ef{sec:alg-ensemble} & \code{lrtr/diagnostic.py:random\_code\_scaling\_experiment} \\
$\operatorname{tr}(H\Sigma)-F$ & Lem.~
ef{lem:suffstat} & \code{lrtr/interface.py:crosstalk\_mean\_sq} \\
leverage form of $r$ & Prop.~
ef{prop:leverage} & \code{lrtr/interface.py:crosstalk\_mean\_sq\_pinv} \\
$s_{95}$ & Alg.~
ef{alg:two}, l.~9 & \code{lrtr/threshold.py:s95\_from\_curve} \\
\bottomrule
\end{tabular}
\end{table}

Table~\ref{tab:codemap} maps every quantity in Problem~\ref{prob:main} to the function that
computes it. The correctness of the implementation is checked by a test suite that verifies the
theorems numerically --- equality for tight frames, no violations for random interfaces, the
closed forms against Monte Carlo, and the streaming trial against an explicit dense evaluation
of the same code.


\section{Threshold recovery and the separation}
\label{sec:threshold}

The floor rules out small-error $\varepsilon$-linear readout with $\varepsilon=o(d^{-1/2})$
when $F\gg d$. It does not rule out thresholded Boolean recovery, because thresholding does
not require every inactive score to be small --- only that the aggregate interference stay
below the margin.

\begin{theorem}[Threshold recovery under coherent superposition]
\label{thm:coherence}
Let $\Phi$ have unit-norm columns with coherence $\mu$, let $b$ be $s$-sparse, let
$x=\Phi b+\eta$ with $\norm{\Phi^{\top}\eta}_\infty\le\nu$, and let $Q(z)_i=\mathbf
1\{z_i\ge1/2\}$. If $s\mu+\nu<1/2$ then $Q(\Phi^{\top}x)=b$, with margin
$\tau=\tfrac12-(s\mu+\nu)>0$.
\end{theorem}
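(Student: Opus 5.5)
The plan is a direct coordinatewise bound on the score vector $z=\Phi^{\top}x$, treating active and inactive coordinates separately. I will take $b\in\{0,1\}^F$ Boolean with support $S$, $|S|\le s$, as in the problem setting; this is the case the threshold $1/2$ is calibrated for. For each $i$ I will write
\[
  z_i=\phi_i^{\top}x=\sum_{j\in S}b_j\langle\phi_i,\phi_j\rangle+(\Phi^{\top}\eta)_i .
\]
The noise term is at most $\nu$ in absolute value by hypothesis. The cross terms $\langle\phi_i,\phi_j\rangle$ with $j\neq i$ are each at most $\mu$ in absolute value.

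\emph{Active coordinates.} For $i\in S$, the diagonal term contributes exactly $\norm{\phi_i}^2=1$ by unit norm. There are at most $s-1$ interfering terms, so
\[
  z_i\ge 1-(s-1)\mu-\nu\ge 1-(s\mu+\nu)>\tfrac12+\tau ,
\]
using $s\mu+\nu=\tfrac12-\tau$. Hence $z_i$ exceeds the threshold by at least $\tau$, and $Q(z)_i=1=b_i$.

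\emph{Inactive coordinates.} For $i\notin S$, there is no diagonal term and at most $s$ interfering terms, so
\[
  z_i\le s\mu+\nu=\tfrac12-\tau<\tfrac12 .
\]
Hence $Q(z)_i=0=b_i$, again with distance $\tau$ from the threshold.

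Combining the two cases gives $Q(\Phi^{\top}x)=b$ with margin $\tau$. This is the statement, with the margin read as the minimum distance of any score from $\theta=1/2$.

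There is no real obstacle; the argument is two triangle inequalities. The one point to settle is the meaning of ``$s$-sparse'' for the state. If $b$ were allowed general real amplitudes, the bound would instead need $|b_j|\le 1$ and the diagonal value $b_i=1$ on the support, and I would state that assumption explicitly. With the Boolean convention fixed in the notation table, nothing further is needed. I will also remark that the active case in fact has the slack $(s-1)\mu$ rather than $s\mu$, so the stated condition is slightly conservative on that side.
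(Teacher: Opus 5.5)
Your argument is correct and is the paper's own proof: the same split of $z=\Phi^{\top}x$ into active coordinates, bounded below by $1-(s-1)\mu-\nu\ge\tfrac12+\tau$, and inactive ones, bounded above by $s\mu+\nu=\tfrac12-\tau$, with $b$ read as Boolean exactly as in the paper's expansion $z_i=1+\sum_{j\in S,\,j\neq i}\langle\phi_i,\phi_j\rangle+\langle\phi_i,\eta\rangle$. One small slip: $1-(s\mu+\nu)$ equals $\tfrac12+\tau$ rather than strictly exceeding it, so your chain gives $z_i\ge\tfrac12+\tau$, which still suffices because $\tau>0$.
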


\begin{proof}
For $i\in S=\operatorname{supp}(b)$ we have
$z_i=1+\sum_{j\in S,\,j\neq i}\langle\phi_i,\phi_j\rangle+\langle\phi_i,\eta\rangle
\ge1-(s-1)\mu-\nu\ge\tfrac12+\tau$. For $i\notin S$,
$z_i\le s\mu+\nu=\tfrac12-\tau$.
\end{proof}

\begin{corollary}[Random codes at quadratic load]
\label{cor:randomcode}
Let $F=\lfloor d^2\rfloor$ and let $\phi_1,\dots,\phi_F$ be independent uniform unit vectors in
$\R^d$. Then $\mu\le6\sqrt{\log d/d}$ with probability at least $1-d^{-5}$ for all
sufficiently large $d$, and on that event the threshold decoder recovers every $s$-sparse
Boolean state whenever $\norm{\Phi^{\top}\eta}_\infty<\tfrac12-6s\sqrt{\log d/d}$.
\end{corollary}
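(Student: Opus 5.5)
The plan is to prove a coherence bound by concentration plus a union bound over pairs, and then read off the recovery statement from Theorem~\ref{thm:coherence}. First, for a fixed pair $i\neq j$ of independent uniform unit vectors in $\R^d$, I would use the rotational invariance of $\phi_i$ to reduce $\langle\phi_i,\phi_j\rangle$ to the first coordinate of a single uniform point on $S^{d-1}$. Spherical concentration then gives
\[
  \Pr\big(|\langle\phi_i,\phi_j\rangle|\ge t\big)\le 2e^{-dt^2/2}.
\]
This can be taken from L\'evy's inequality or from the cap-area bound. Alternatively, one can write $\phi=g/\norm{g}$ with $g$ Gaussian and combine a Gaussian tail on $g_1$ with a lower tail on $\norm{g}^2$ from Laurent--Massart; that route costs only constants.

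Second, I would set $t=6\sqrt{\log d/d}$, which gives a per-pair tail of $2e^{-18\log d}=2d^{-18}$. Union-bounding over at most $F^2/2\le d^4/2$ unordered pairs gives a failure probability of at most $d^{-14}$, and this is below $d^{-5}$ for every $d\ge2$. So the constant $6$ leaves a lot of slack. If the Gaussian-normalisation route is used instead, its worse constants (for instance a tail $e^{-dt^2/8}$ together with the norm event) are absorbed by this slack, and that absorption is what "for all sufficiently large $d$" covers. I do not expect this step to be the obstacle.

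Third, on the event $\mu\le6\sqrt{\log d/d}$, I would apply Theorem~\ref{thm:coherence} with $\nu=\norm{\Phi^{\top}\eta}_\infty$. The hypothesis $\nu<\tfrac12-6s\sqrt{\log d/d}$ gives $s\mu+\nu\le6s\sqrt{\log d/d}+\nu<\tfrac12$. Hence $Q(\Phi^{\top}x)=b$ simultaneously for every $s$-sparse Boolean $b$, because the event does not depend on $b$. This uniformity is the one point to state carefully. The probability is taken over the code only, and the deterministic theorem is then applied pointwise on that event, so no union over supports is needed.

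The main obstacle is only bookkeeping: the precise spherical tail constant must be fixed so that the union bound lands below $d^{-5}$. I would first try the clean inequality $\Pr(|x_1|\ge t)\le 2e^{-dt^2/2}$ for $x$ uniform on $S^{d-1}$, which makes the bound hold for all $d\ge2$.
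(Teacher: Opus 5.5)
Your proposal is correct and matches the paper's proof step for step: you reduce $\langle\phi_i,\phi_j\rangle$ to the first coordinate of a uniform point on $S^{d-1}$, union-bound over at most $d^4/2$ pairs at $t=6\sqrt{\log d/d}$, and then apply Theorem~\ref{thm:coherence} deterministically on the coherence event. The only difference is the tail constant, and it is cosmetic. The paper uses $2\exp(-(d-1)t^2/2)$ and weakens $18(1-1/d)\log d\ge 9\log d$ to land on $d^{-5}$, while your cap bound $2e^{-dt^2/2}$ gives $d^{-14}$; like the paper's version, yours holds for every $d\ge2$.
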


\begin{proof}
For independent uniform unit vectors $u,v$, the inner product $\langle u,v\rangle$ has the law
of the first coordinate of a uniform point on $S^{d-1}$, whose tail satisfies
$\Pr\{|\langle u,v\rangle|>t\}\le2\exp(-(d-1)t^2/2)$ for $0<t<1$
(see e.g.~\cite[Thm.~3.4.6]{vershynin2018hdp}). Taking a union bound over the
$\binom{F}{2}$ unordered pairs and using $2\binom F2\le F^2\le d^4$,
\[
  \Pr\{\mu>t\}\;\le\;d^4\exp\!\left(-\tfrac{(d-1)t^2}{2}\right).
\]
With $t=6\sqrt{\log d/d}$ we have $\tfrac{(d-1)t^2}{2}=18(1-1/d)\log d\ge9\log d$ for $d\ge2$,
so $\Pr\{\mu>t\}\le d^4 e^{-9\log d}=d^{-5}$. The recovery claim is then
Theorem~\ref{thm:coherence} with $\mu\le t$.
\end{proof}

The worst-case coherence bound is pessimistic for a \emph{random} support, which is the case
the diagnostic measures. The following average-case statement is the one used in
Algorithms~\ref{alg:two} and~\ref{alg:ensemble}.

\begin{theorem}[Random-support threshold recovery]
\label{thm:randomsupport}
Let $\Phi$ have independent uniform unit-vector columns and let $S$ be a uniformly random
support of size $s$, independent of $\Phi$. For $\delta\in(0,1)$ there is a universal $C>0$
such that, with probability at least $1-\delta$,
\[
  \max_{i\in[F]}\Big|\sum_{j\in S,\,j\neq i}\langle\phi_i,\phi_j\rangle\Big|
  \;\le\;C\sqrt{\frac{s\log(F/\delta)}{d}},
\]
and on that event thresholding $\Phi^{\top}(\Phi\mathbf 1_S+\eta)$ at $1/2$ recovers
$\mathbf 1_S$ whenever $\norm{\Phi^{\top}\eta}_\infty+C\sqrt{s\log(F/\delta)/d}<1/2$.
\end{theorem}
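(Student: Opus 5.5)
We will prove Theorem~\ref{thm:randomsupport} by conditioning on the feature being read and treating the interference as a sum of independent, centred, sub-Gaussian terms, then taking a union bound over the $F$ coordinates. The recovery clause then follows exactly as in Theorem~\ref{thm:coherence}, with the worst-case term $s\mu$ replaced by the high-probability bound on the aggregate interference.

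\textbf{Step 1 (one coordinate).} Fix $i\in[F]$ and condition on the support $S$, which is independent of $\Phi$. Write $S_i=S\setminus\{i\}$, so that $|S_i|\le s$. Condition further on $\phi_i$. The remaining columns $\phi_j$, $j\in S_i$, are independent uniform unit vectors, and each is independent of $\phi_i$. Hence the variables $X_j=\langle\phi_i,\phi_j\rangle$ are i.i.d., mean zero, and each is distributed as the first coordinate of a uniform point on $S^{d-1}$. By the same tail bound used in Corollary~\ref{cor:randomcode}, namely $\Pr\{|X_j|>t\}\le2e^{-(d-1)t^2/2}$, each $X_j$ is sub-Gaussian with $\norm{X_j}_{\psi_2}\le c/\sqrt d$ for a universal constant $c$. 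Since this law does not depend on $\phi_i$, the conditioning costs nothing.

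\textbf{Step 2 (concentration and union bound).} Hoeffding's inequality for sums of independent sub-Gaussian variables (e.g.\ \cite{vershynin2018hdp}) gives
\[
  \Pr\Big\{\Big|\sum_{j\in S_i}X_j\Big|>t\Big\}\le2\exp\!\left(-\frac{c'\,d\,t^2}{s}\right).
\]
We set $t=C\sqrt{s\log(F/\delta)/d}$ with $C$ chosen so that the right-hand side is at most $\delta/F$. This requires $c'C^2\log(F/\delta)\ge\log(2F/\delta)$, which holds for a universal $C$ because $F\ge2$ and $\delta<1$ give $\log(F/\delta)\ge\log2$. A union bound over the $F$ choices of $i$ then yields the displayed maximum bound with probability at least $1-\delta$. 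Since the bound holds conditionally on every fixed $S$, it also holds unconditionally.

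\textbf{Step 3 (recovery).} On this event, set $z=\Phi^\top(\Phi\mathbf 1_S+\eta)$ and let $I$ denote the bound above. For $i\in S$, unit norms give $z_i\ge1-I-\norm{\Phi^\top\eta}_\infty$. For $i\notin S$, we have $z_i\le I+\norm{\Phi^\top\eta}_\infty$. Under the hypothesis $I+\norm{\Phi^\top\eta}_\infty<1/2$, active coordinates therefore exceed $1/2$ and inactive ones fall below it, so thresholding at $1/2$ returns exactly $\mathbf 1_S$.

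The only delicate point is the independence claim in Step~1. For $i\in S$ the sum excludes $j=i$, and for $i\notin S$ all $s$ terms are present; in both cases every summand involves a column distinct from $\phi_i$, so conditioning on $\phi_i$ leaves the summands i.i.d. The remaining work is bookkeeping of the constants, which is where we expect the main (minor) obstacle to lie.
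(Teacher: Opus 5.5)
Your proposal is correct and follows essentially the same route as the paper. Both condition on $S$ and on $\phi_i$, treat each $\langle\phi_i,\phi_j\rangle$ as a sub-Gaussian first coordinate of a uniform point on $S^{d-1}$ with proxy $O(1/d)$, and apply Hoeffding to get a tail $2\exp(-c\,dt^2/s)$; both then take a union bound over the $F$ coordinates and finish with the margin argument of Theorem~\ref{thm:coherence}. Your check that a universal $C$ absorbs the factor $2$, using $\log(F/\delta)\ge\log 2$, is right and is a detail the paper leaves implicit.
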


\begin{proof}
Conditionally on $S$ and on $\phi_i$, the summands are independent and each is distributed as
the first coordinate of a uniform point on $S^{d-1}$, hence sub-Gaussian with variance proxy
$O(1/d)$~\cite[Thm.~3.4.6]{vershynin2018hdp}. The sum of at most $s$ of them is sub-Gaussian
with proxy $O(s/d)$, so $\Pr\{|I_i|>t\mid S\}\le2\exp(-c_0dt^2/s)$. A union bound over
$i\in[F]$ and the stated choice of $t$ give the claim; the recovery statement follows as in
Theorem~\ref{thm:coherence}. Full details are in Appendix~\ref{app:proofs}.
\end{proof}

\begin{remark}[Which randomness the probability is over]
\label{rem:annealed}
The guarantee of Theorem~\ref{thm:randomsupport} is \emph{annealed}: the probability is taken
over the joint draw of the code $\Phi$ and the support $S$. It is therefore the right statement
for Algorithm~\ref{alg:ensemble}, which redraws the code at every trial, and it is not directly
the statement for Algorithm~\ref{alg:two}, which holds one code fixed. A quenched version ---
$\Phi$ fixed, probability over $S$ alone --- follows from concentration for sampling without
replacement, but it is not free: it needs hypotheses on the particular $\Phi$ (bounded
coherence and bounded row sums of $|\Phi^{\top}\Phi|-I$), which a given trained code may or may
not satisfy. We do not assume them. What the fixed-code diagnostic reports is therefore an
empirical recovery rate with a Wilson interval, and the theorem is quoted as the ensemble-level
expectation it is, not as a guarantee about the code in hand. The analog side has no such
caveat: Proposition~\ref{prop:uniform-energy} is exact for every fixed $\Phi$.
\end{remark}

We now give the average linear-readout error under the \emph{same} distribution, which is what
allows the two criteria to be compared without a distributional mismatch.

\begin{proposition}[Average linear energy floor, uniform random supports]
\label{prop:uniform-energy}
Let $F>d$, let $M=G\Phi\in\R^{F\times F}$ satisfy $\operatorname{rank}(M)\le d$ and $M_{ii}=1$
for all $i$, and set $A=M-I_F$. Let $S\subseteq[F]$ be a uniformly random support of size $s$
with $1\le s\le F/2$ and let $b=\mathbf 1_S$. Then
\begin{equation}
  \E_S\norm{Ab}_2^2
  \;\ge\;\frac{s(F-s)}{F(F-1)}\norm{A}_F^2
  \;\ge\;\frac{s(F-s)(F-d)}{d(F-1)} .
  \label{eq:energyfloor}
\end{equation}
In particular $\frac1F\E_S\norm{Ab}_2^2\ge\frac{s(F-d)}{2d(F-1)}$, which is $\Omega(s/d)$
whenever $F/d\to\infty$.
\end{proposition}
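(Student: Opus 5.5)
The plan is to compute $\E_S\norm{Ab}_2^2$ exactly as a quadratic form in the indicator $\mathbf 1_S$, and then bound the two resulting terms separately. Expanding,
\[
  \norm{A\mathbf 1_S}_2^2=\sum_{i}\sum_{j,k}A_{ij}A_{ik}\,\mathbf 1\{j\in S\}\mathbf 1\{k\in S\},
\]
so only the first two inclusion probabilities of a uniform $s$-subset enter. These are $\Pr\{j\in S\}=p=s/F$, and $\Pr\{j,k\in S\}=q=s(s-1)/(F(F-1))$ for $j\neq k$. Splitting the sum over $(j,k)$ into the diagonal part $j=k$ and the off-diagonal part gives
\[
  \E_S\norm{A\mathbf 1_S}^2=p\norm{A}_F^2+q\big(\norm{A\mathbf 1}^2-\norm{A}_F^2\big)=(p-q)\norm{A}_F^2+q\norm{A\mathbf 1}_2^2 .
\]
This is the same exact closed form already used in step~3 of Algorithm~\ref{alg:two}, and it holds for every fixed $A$.

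For the first inequality I drop the term $q\norm{A\mathbf 1}^2$, which is nonnegative because $q\ge0$ (it vanishes at $s=1$). A direct computation then gives
\[
  p-q=\frac{s}{F}\Big(1-\frac{s-1}{F-1}\Big)=\frac{s(F-s)}{F(F-1)},
\]
which is exactly the coefficient in~\eqref{eq:energyfloor}.

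For the second inequality I use that $M_{ii}=1$ makes the diagonal of $A$ zero. Hence $\norm{A}_F^2=\sum_{i\neq j}|M_{ij}|^2$, and since $\operatorname{rank}(M)\le d$, inequality~\eqref{eq:floorsum} of Theorem~\ref{thm:floor} applies directly and gives $\norm{A}_F^2\ge F(F-d)/d$. Multiplying by $s(F-s)/(F(F-1))$ cancels one factor of $F$ and yields $s(F-s)(F-d)/(d(F-1))$.

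The per-coordinate statement follows by dividing by $F$ and using the hypothesis $s\le F/2$, which gives $(F-s)/F\ge 1/2$. This is the only place that hypothesis is needed. When $F/d\to\infty$ we have $(F-d)/(F-1)\to1$, so the bound is $\Omega(s/d)$.

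I do not expect a real obstacle here. The only step needing care is the inclusion-probability bookkeeping for sampling without replacement: the pairwise probability is $q$, not $p^2$, and the diagonal $j=k$ terms must be separated so they carry $p$ rather than $q$. After that, the lower bound comes entirely from Theorem~\ref{thm:floor}.
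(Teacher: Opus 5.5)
Your proposal is correct and follows the paper's proof essentially step for step. The paper writes the same computation in matrix form, as $\E_S\norm{Ab}_2^2=\operatorname{tr}(A^{\top}A\,\E[bb^{\top}])$ with $\E[bb^{\top}]=(p-q)I_F+q\,\mathbf 1\mathbf 1^{\top}$, which is your entrywise inclusion-probability expansion; it then drops $q\norm{A\mathbf 1}_2^2$, applies Theorem~\ref{thm:floor} to $\norm{A}_F^2$, and uses $(F-s)/F\ge 1/2$ for the per-coordinate form, exactly as you do.
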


\begin{proof}
For a uniformly random size-$s$ support, $\E[b_i]=s/F$ and $\E[b_ib_j]=\frac{s(s-1)}{F(F-1)}$
for $i\neq j$. Hence $\E[bb^{\top}]=(p-q)I_F+q\,\mathbf 1\mathbf 1^{\top}$ with $p=s/F$ and
$q=\frac{s(s-1)}{F(F-1)}$, and
\[
  p-q=\frac{s(F-1)-s(s-1)}{F(F-1)}=\frac{s(F-s)}{F(F-1)} .
\]
Therefore
$\E_S\norm{Ab}_2^2=\operatorname{tr}(A^{\top}A\,\E[bb^{\top}])
=(p-q)\norm{A}_F^2+q\norm{A\mathbf 1}_2^2\ge(p-q)\norm{A}_F^2$.
Since $A$ has zero diagonal and $\operatorname{rank}(M)\le d$ with unit diagonal,
Theorem~\ref{thm:floor} gives $\norm{A}_F^2\ge F(F-d)/d$, which yields the second inequality.
For $s\le F/2$ we have $F-s\ge F/2$, and dividing by $F$ gives the per-coordinate form.
\end{proof}

\begin{corollary}[Separation at quadratic load, one sparse-state model]
\label{cor:separation}
Let $F=d^2$ and let $\Phi$ have independent uniform unit-vector columns. There is a universal
$c>0$ such that, for $1\le s\le c\,d/\log d$ and a uniformly random support $S$ of size $s$,
thresholding $\Phi^{\top}\Phi\mathbf 1_S$ at $1/2$ recovers $\mathbf 1_S$ exactly with
probability at least $1-d^{-10}$ for all sufficiently large $d$. For the \emph{same} sparse-state
model, every unit-diagonal rank-$d$ linear readout has average per-coordinate squared error at
least $\frac{s(F-d)}{2d(F-1)}=\Omega(s/d)$, and hence RMS error $\Omega(\sqrt{s/d})$.
\end{corollary}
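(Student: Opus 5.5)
The corollary has two halves, which I would prove separately and then join through the shared state model: $S$ uniform of size $s$, independent of $\Phi$.

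\emph{Recovery half.} I would apply Theorem~\ref{thm:randomsupport} with $\eta=0$, $F=d^2$ and $\delta=d^{-10}$. This gives, with probability at least $1-d^{-10}$ over the joint draw of $\Phi$ and $S$,
\[
  \max_i\Big|\sum_{j\in S,\,j\ne i}\langle\phi_i,\phi_j\rangle\Big|\le C\sqrt{\frac{s\log(d^{12})}{d}}=C\sqrt{\frac{12\,s\log d}{d}} .
\]
Thresholding succeeds once this quantity is below $1/2$. Substituting $s\le c\,d/\log d$ bounds it by $C\sqrt{12c}$. So I would choose the universal constant $c<1/(48C^2)$, which makes the bound strictly below $1/2$. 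The recovery conclusion of Theorem~\ref{thm:randomsupport} then applies directly.

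The only care needed is the range of $s$. For large $d$ the constraint $s\le c\,d/\log d$ sits below $F/2=d^2/2$, so no side condition interferes. The probability is annealed, as Remark~\ref{rem:annealed} says, and I would state it that way.

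\emph{Floor half.} This part is deterministic in $\Phi$ and needs no randomness beyond $S$. Take any $G$ with $M=G\Phi$ unit-diagonal; its rank is at most $d$. Since $1\le s\le F/2$ holds for large $d$, Proposition~\ref{prop:uniform-energy} applies and gives
\[
  \frac1F\,\E_S\norm{Ab}_2^2\ge\frac{s(F-d)}{2d(F-1)} .
\]
With $F=d^2$ the factor $(F-d)/(F-1)=d/(d+1)\ge1/2$, so the right-hand side is $\Omega(s/d)$.

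For the RMS statement, I read RMS as the square root of the average per-coordinate squared error. It is then $\Omega(\sqrt{s/d})$ immediately, with no Jensen argument required.

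\emph{Main obstacle.} The only real obstacle is bookkeeping of constants in the first half. I need $\log(F/\delta)=12\log d$ to be absorbed into the choice of $c$ without circularity. This works because $C$ from Theorem~\ref{thm:randomsupport} is universal and does not depend on $\delta$: $\delta$ enters only through the logarithm. I would check that dependence in the proof of Theorem~\ref{thm:randomsupport} (Appendix~\ref{app:proofs}) before fixing $c$.
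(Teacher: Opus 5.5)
Your proposal is correct and follows the paper's proof. You apply Theorem~\ref{thm:randomsupport} with $\delta=d^{-10}$, so that $\log(F/\delta)=12\log d$ and any $c<1/(48C^2)$ works, then Proposition~\ref{prop:uniform-energy} at $F=d^2$; your extra checks (the condition $s\le F/2$, the fact that $C$ does not depend on $\delta$, and reading RMS as the square root of the mean squared error) are bookkeeping the paper leaves implicit.
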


\begin{proof}
Apply Theorem~\ref{thm:randomsupport} with $\delta=d^{-10}$, so $\log(F/\delta)=12\log d$; the
interference bound is $C\sqrt{12s\log d/d}$, which is below $1/2$ as soon as
$s\le c\,d/\log d$ with $c<1/(48C^2)$. The second statement is
Proposition~\ref{prop:uniform-energy} with $F=d^2$.
\end{proof}

\begin{remark}[What the separation compares]
\label{rem:criteria}
The two sides compare different \emph{interface criteria} on the same input distribution:
expected squared linear readout error versus exact-recovery probability of a threshold
decoder. Neither criterion dominates the other in general. The content of
Corollary~\ref{cor:separation} is that at quadratic feature load one of them is attainable
exactly and the other is provably not, in the same regime --- which is precisely the quantity
Algorithm~\ref{alg:two} reports. Corollary~\ref{cor:separation} is stated for uniform supports;
the Bernoulli variant, with the same conclusion, is in Appendix~\ref{app:proofs}.
\end{remark}


\section{The hierarchy is an implication, and only one way}
\label{sec:hierarchy}

We now have three criteria on the same code: analog reconstruction
(Section~\ref{sec:codefloor}), affine support recovery (Section~\ref{sec:frontier}), and whatever
a trained nonlinear decoder achieves. Calling them a hierarchy is only justified if one implies
another. It does, in one direction, and the implication runs through leverage --- the same
statistic that Theorem~\ref{thm:codefloor} showed governs the analog level.

\begin{lemma}[Leverage is redundancy, exactly]
\label{lem:sm}
If $\Phi$ has full row rank and $h_i<1$ then
$\min\{\norm{z}_2^2:\Phi_{-i}z=\phi_i\}=h_i/(1-h_i)$.
\end{lemma}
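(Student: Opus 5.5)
We plan to treat the problem as a minimum-norm solution of an underdetermined linear system and to express its value through the frame operator $\Sigma=\Phi\Phi^{\top}$, so that the leverage $h_i=\phi_i^{\top}\Sigma^{-1}\phi_i$ appears directly.

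\textbf{Step 1: feasibility and the minimum-norm formula.} Write $\Sigma_{-i}=\Phi_{-i}\Phi_{-i}^{\top}=\Sigma-\phi_i\phi_i^{\top}$. We first show that $\Sigma_{-i}$ is invertible when $h_i<1$. Suppose $\Sigma_{-i}x=0$ for some $x\neq0$. Then $\Phi_{-i}^{\top}x=0$. Full row rank of $\Phi$ forces $\phi_i^{\top}x\neq0$, so $\Sigma x=\phi_i(\phi_i^{\top}x)$, and hence $x$ is a multiple of $\Sigma^{-1}\phi_i$. Substituting back gives $\phi_i^{\top}\Sigma^{-1}\phi_i=1$, which contradicts $h_i<1$. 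So $\Phi_{-i}$ has full row rank. The constraint set is therefore nonempty, and the minimiser is the pseudoinverse solution $z^{\star}=\Phi_{-i}^{\top}\Sigma_{-i}^{-1}\phi_i$. Its squared norm is $\phi_i^{\top}\Sigma_{-i}^{-1}\phi_i$; this is the classical Lagrange computation, the same as in Theorem~\ref{thm:codefloor}.

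\textbf{Step 2: rank-one downdate.} We apply Sherman--Morrison to $\Sigma_{-i}=\Sigma-\phi_i\phi_i^{\top}$:
\[
  \Sigma_{-i}^{-1}=\Sigma^{-1}+\frac{\Sigma^{-1}\phi_i\phi_i^{\top}\Sigma^{-1}}{1-h_i}.
\]
This is valid precisely because $1-h_i\neq0$. Taking the quadratic form in $\phi_i$ gives $h_i+h_i^2/(1-h_i)=h_i/(1-h_i)$, which is the claimed value.

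\textbf{Main obstacle.} The only delicate point is Step 1, namely that $h_i<1$ is exactly what guarantees feasibility and the invertibility of $\Sigma_{-i}$. When $h_i=1$, the vector $\phi_i$ lies outside the span of the other columns and the value is $+\infty$, which is consistent with the formula. The argument above handles this step through the kernel of $\Sigma_{-i}$. An alternative is the matrix determinant lemma, $\det\Sigma_{-i}=\det\Sigma\,(1-h_i)$, which settles invertibility in one line. Everything after that is routine algebra.
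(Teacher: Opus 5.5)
Your proposal is correct and follows the paper's proof essentially line for line. Both compute the minimum-norm value $\phi_i^{\top}(\Sigma-\phi_i\phi_i^{\top})^{-1}\phi_i$, apply the Sherman--Morrison downdate, and arrive at $h_i+h_i^2/(1-h_i)=h_i/(1-h_i)$. The one addition is your explicit check that $\Sigma-\phi_i\phi_i^{\top}$ is invertible, and hence that the constraint set is nonempty, when $h_i<1$; the paper leaves this implicit, and the Sherman--Morrison theorem already supplies it once $\Sigma\succ0$ and $1-h_i\neq0$.
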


\begin{proof}
The minimum-norm solution gives
$\min\norm{z}_2^2=\phi_i^{\top}(\Phi_{-i}\Phi_{-i}^{\top})^{-1}\phi_i$, and
$\Phi_{-i}\Phi_{-i}^{\top}=\Sigma-\phi_i\phi_i^{\top}$. Sherman--Morrison gives
$(\Sigma-\phi_i\phi_i^{\top})^{-1}=\Sigma^{-1}+\Sigma^{-1}\phi_i\phi_i^{\top}\Sigma^{-1}/(1-h_i)$,
so the value is $h_i+h_i^2/(1-h_i)=h_i/(1-h_i)$.
\end{proof}

So a feature with low leverage is reproducible by the others with small coefficients, with no
detour through coherence. That is the missing step: it converts a statement about the analog
geometry into a bound on the affine frontier.

\begin{theorem}[Low leverage forces the frontier down]
\label{thm:arrow}
For every $i$ with $h_i\le\tfrac12$,
\begin{equation}
  \kappa_i\;\le\;1+\sqrt{\frac{(F-1)\,h_i}{1-h_i}} .
  \label{eq:arrow}
\end{equation}
\end{theorem}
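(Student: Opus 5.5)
The plan is to exhibit one explicit feasible point of the programme~\eqref{eq:kappa} and bound its objective. Since $\kappa_i$ is a minimum, any feasible $z$ gives an upper bound. The natural candidate is the minimum-$\ell_2$-norm solution $z^\circ$ of $\Phi_{-i}z=\phi_i$. Lemma~\ref{lem:sm} already computes its squared norm as $h_i/(1-h_i)$, under the standing assumption that $\Phi$ has full row rank.

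First I check that $z^\circ$ is admissible. The hypothesis $h_i\le\tfrac12$ gives $h_i<1$, so Lemma~\ref{lem:sm} applies. In particular $\phi_i$ lies in the column span of $\Phi_{-i}$, because $\Phi_{-i}\Phi_{-i}^{\top}=\Sigma-\phi_i\phi_i^{\top}$ is invertible by Sherman--Morrison. Hence the equality constraint is satisfiable. For the box constraint $\norm{z}_\infty\le1$ I use
\[
  \norm{z^\circ}_\infty\le\norm{z^\circ}_2=\sqrt{h_i/(1-h_i)} ,
\]
and the right-hand side is at most $1$ exactly when $h_i\le\tfrac12$. This is precisely where the hypothesis enters, and it is the step I expect to be the real obstacle. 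Without it the minimum-norm point can leave the box. Any other point of the affine solution set could then have larger $\ell_1$ norm, or the programme could be infeasible with $\kappa_i=+\infty$, as in the $d=2$, $F=3$ counterexamples recorded in the changes section.

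Next I bound the objective at $z^\circ$. Both arguments of the $\max$ are controlled by the $\ell_1$ norm:
\[
  \sum_j z^{\circ+}_j\le\norm{z^\circ}_1 \qquad\text{and}\qquad 1+\sum_j z^{\circ-}_j\le1+\norm{z^\circ}_1 .
\]
So the objective is at most $1+\norm{z^\circ}_1$. Cauchy--Schwarz over the $F-1$ coordinates gives $\norm{z^\circ}_1\le\sqrt{F-1}\,\norm{z^\circ}_2$. Combining this with Lemma~\ref{lem:sm} yields
\[
  \kappa_i\le1+\sqrt{(F-1)h_i/(1-h_i)} ,
\]
which is~\eqref{eq:arrow}.

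The bound is deliberately crude in two places: the $\max$ is dominated by the full $\ell_1$ norm, and the $\ell_1$ norm is dominated by $\sqrt{F-1}$ times the $\ell_2$ norm. Neither crudeness matters for correctness, only for tightness. This is consistent with the paper's later remark that the theorem is sufficient rather than necessary for locating the worst feature.
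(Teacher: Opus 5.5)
Your proof is correct and follows essentially the same route as the paper. Both take the minimum-$\ell_2$ representation from Lemma~\ref{lem:sm}, use $h_i\le\tfrac12$ to get $\norm{z}_\infty\le\norm{z}_2\le1$ so that $z$ is admissible, and bound the objective by $1+\norm{z}_1\le1+\sqrt{F-1}\,\norm{z}_2$.
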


\begin{proof}
Take the $z$ of Lemma~\ref{lem:sm}, so $\Phi_{-i}z=\phi_i$ and
$\norm{z}_2=\sqrt{h_i/(1-h_i)}$. Then $\norm{z}_1\le\sqrt{F-1}\,\norm{z}_2$ and
$\max(\sum_jz^{+}_j,1+\sum_jz^{-}_j)\le1+\norm{z}_1$. For $z$ to be \emph{admissible} it must also
satisfy the box, and $h_i\le1/2$ gives $\norm{z}_2\le1$ and hence $\norm{z}_\infty\le1$. So this
$z$ witnesses the right-hand side.
\end{proof}

\begin{remark}[The hypothesis is necessary, not cosmetic]
\label{rem:arrow-hyp}
Above $h_i=1/2$ the minimum-$\ell_2$ representation need not satisfy the box, and the inequality
is then not merely unproved but false. Take unit-norm columns in $\R^2$,
\[
  \phi_1=(1,0),\qquad
  \phi_2=\Big(\tfrac14,\tfrac{\sqrt{15}}4\Big),\qquad
  \phi_3=\Big(\tfrac14,-\tfrac{\sqrt{15}}4\Big).
\]
Here $h_1=8/9$, so the right-hand side of~\eqref{eq:arrow} equals $5$. But the second coordinate
forces $z_2=z_3$ and the first then forces $z_2=z_3=2$, so the only representation of $\phi_1$ by
the other two columns violates $\norm{z}_\infty\le1$: the programme~\eqref{eq:kappa} is infeasible
and $\kappa_1=+\infty$. An earlier version of this work stated~\eqref{eq:arrow} for all $h_i<1$;
that was wrong, and the counterexample is in the test suite.
\end{remark}

\begin{corollary}[A failure threshold with no decoder in it]
\label{cor:failthresh}
If
\[
  h_i\;\le\;\min\left\{\frac12,\;\frac{(s-1)^2}{(F-1)+(s-1)^2}\right\}
\]
then feature $i$ is \emph{not} affinely separable at sparsity $s$.
\end{corollary}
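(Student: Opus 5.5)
The plan is to reduce the corollary to Theorem~\ref{thm:arrow} combined with the separability criterion of Theorem~\ref{thm:frontier}. The hypothesis contains $h_i\le\tfrac12$, so Theorem~\ref{thm:arrow} applies and gives
\[
  \kappa_i\;\le\;1+\sqrt{\frac{(F-1)h_i}{1-h_i}} .
\]
It therefore suffices to show that the second half of the hypothesis forces this right-hand side to be at most $s$. Once $\kappa_i\le s$ is established, Theorem~\ref{thm:frontier} says feature $i$ fails to be separable at some sparsity $s'\le s$.

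The algebra is the routine step. Assume $s\ge2$ (for $s=1$ the bound reads $h_i\le0$, which is impossible under full row rank, so the statement is vacuous). Write $t=(s-1)^2$. The condition $h_i\le t/((F-1)+t)$ is equivalent to $h_i(F-1)\le t(1-h_i)$, that is, $(F-1)h_i/(1-h_i)\le(s-1)^2$. Taking square roots gives $\sqrt{(F-1)h_i/(1-h_i)}\le s-1$, hence $\kappa_i\le s$. Since $h_i<1$ here, the division is legitimate, and Lemma~\ref{lem:sm} is available as used inside Theorem~\ref{thm:arrow}.

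The one point needing care, and the main obstacle, is the passage from $\kappa_i\le s$ to non-separability at exactly $s$, rather than at some $s'\le s$. Theorem~\ref{thm:frontier} gives the "for every $s'\le s$" form. To get the statement at $s$ itself, I would use the nesting of the admissible sets in its proof: $\mathcal A_i(s')\subseteq\mathcal A_i(s)$ and $\mathcal B_i(s')\subseteq\mathcal B_i(s)$ for $s'\le s$, since the budgets are "at most". A collision of hulls at $s'$ is therefore also a collision at $s$. Concretely, the optimiser $z$ of \eqref{eq:kappa} with objective at most $s$ yields $u=z^-$, $v=z^+$ in the respective boxes with $\Phi(e_i+u)=\Phi v$. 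This exhibits a common point of $\operatorname{conv}\mathcal A_i(s)$ and $\operatorname{conv}\mathcal B_i(s)$ via Proposition~\ref{prop:knapsack}, so no affine rule separates them at sparsity $s$.
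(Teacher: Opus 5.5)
Your proof is correct and is the paper's argument: invert $1+\sqrt{(F-1)h_i/(1-h_i)}\le s$, intersect with the $h_i\le\tfrac12$ hypothesis of Theorem~\ref{thm:arrow}, and conclude with Theorem~\ref{thm:frontier}. Your nesting remark spells out the monotonicity in $s$ that the paper leaves implicit. One small aside: full row rank alone does not rule out $h_i=0$, since a column may be zero, but your general argument already covers that case ($\kappa_i\le 1=s$), so the separate $s=1$ carve-out is unnecessary.
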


\begin{proof}
Solve $1+\sqrt{(F-1)h/(1-h)}\le s$ for $h$, intersect with the hypothesis of
Theorem~\ref{thm:arrow}, and apply Theorem~\ref{thm:frontier}.
\end{proof}

The cap only binds for $s-1>\sqrt{F-1}$; below that the second expression is already at most a
half and the statement is the unrestricted one. Dropping it makes the corollary false, and by the
same mechanism as Remark~\ref{rem:arrow-hyp}: with $a=(1,0)$,
$b=(-\tfrac{25}{44},\tfrac{\sqrt{1311}}{44})$ and
$\phi_i=(-\tfrac{17}{40},\tfrac{\sqrt{1311}}{40})$, all unit-norm in $\R^2$, the unique
representation is $z=(\tfrac15,\tfrac{11}{10})$, so $\kappa_i=+\infty$ and the feature is separable
at every sparsity --- yet $h_i=5/9$ lies below the uncapped $s=3$ threshold of $2/3$.

Corollary~\ref{cor:failthresh} is a statement about the code alone: no training, no decoder, no
probe. It predicts affine collapse from a single leverage score. Its cost is looseness --- the
$\sqrt{F-1}$ step is lossy, and Section~\ref{sec:res-e5} reports a case where it predicts
failure at $s=11$ against a measured frontier of $4$. It is a sufficient condition for collapse,
not an estimate of the frontier.

\begin{proposition}[The converse is false]
\label{prop:oneway}
Let $\Phi=[\,I_d\;\;I_d\,]\in\R^{d\times2d}$. Then $h_i=1/2$ for every $i$, so leverage is
perfectly uniform and $W_{\star}(\Phi)=W(F,d)$: the analog geometry is optimal. Yet
$\kappa_i=1$ for every $i$, the smallest value the frontier can take, so no feature is separable at
any positive sparsity.
\end{proposition}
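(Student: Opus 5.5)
The proof has two independent halves: the leverage computation, which gives the analog optimality, and an explicit collision, which gives the frontier value.

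\textbf{Leverage.} We have $\Sigma=\Phi\Phi^{\top}=I_d+I_d=2I_d$, so $\Sigma^{-1}=\tfrac12 I_d$. Every column of $\Phi$ is a standard basis vector $e_k$, hence $h_i=\tfrac12\norm{e_k}^2=\tfrac12$ for every $i$. This equals $d/F$ with $F=2d$, so by the equality case of Theorem~\ref{thm:codefloor} (equivalently Proposition~\ref{prop:excess}) we get $W_{\star}(\Phi)=W(F,d)$.

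\textbf{Frontier.} Fix $i$ and let $i'$ be its duplicate, the index with $\phi_{i'}=\phi_i$. Take $z=e_{i'}$ in the coordinates of $\Phi_{-i}$. Then $\Phi_{-i}z=\phi_i$ and $\norm{z}_\infty=1$, so $z$ is admissible in~\eqref{eq:kappa}. Its objective value is $\max(\sum_j z^{+}_j,\,1+\sum_j z^{-}_j)=\max(1,1)=1$, so $\kappa_i\le1$.

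For the matching lower bound, every admissible $z$ has objective at least $1+\sum_j z^{-}_j\ge1$, since $z^{-}\ge0$. Hence $\kappa_i=1$, and $1$ is the smallest value the programme can return. Theorem~\ref{thm:frontier} then gives the conclusion: separability at all $s'\le s$ requires $\kappa_i>s$, and $1>s$ fails for every $s\ge1$. So no feature is separable at any positive sparsity.

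As a sanity check, the collision is the one Theorem~\ref{thm:frontier} describes. At $s=1$ the active singleton state $\phi_i$ coincides with the inactive singleton state $\phi_{i'}$, so the two hulls meet.

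The only step needing care is the frontier's lower bound, and it is immediate from the objective's form; everything else is a direct substitution into Theorem~\ref{thm:codefloor} and Theorem~\ref{thm:frontier}.
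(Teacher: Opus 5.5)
Your proof is correct and follows the paper's argument step for step. You compute $\Sigma=2I_d$ to get $h_i=1/2=d/F$ and equality in Theorem~\ref{thm:codefloor}, take the duplicate column $z=e_{i'}$ for $\kappa_i\le1$, use $z^{-}\ge0$ for the matching lower bound, and then apply Theorem~\ref{thm:frontier}; your sanity check that the singleton states $\phi_i$ and $\phi_{i'}$ coincide is the same direct check the paper gives.
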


\begin{proof}
$\Sigma=2I_d$, so $h_i=\phi_i^{\top}\phi_i/2=1/2$ and Theorem~\ref{thm:codefloor} gives equality.
Each column $i$ has a duplicate $j$ with $\phi_j=\phi_i$; taking $z=e_j$ gives
$\Phi_{-i}z=\phi_i$ with $\sum_jz^{+}_j=1$, $\sum_jz^{-}_j=0$ and $\norm{z}_\infty=1$, so
$\kappa_i\le\max(1,1)=1$. The reverse holds for every code and every feature, since
$z^{-}\ge0$ forces the objective to be at least $1+\sum_jz^{-}_j\ge1$. Hence $\kappa_i=1$.
Theorem~\ref{thm:frontier} requires $\kappa_i>s$, and $1>1$ is false, so feature $i$ is separable
at \emph{no} positive sparsity: the largest separable sparsity is $\lceil\kappa_i\rceil-1=0$.
Directly: the singleton state $e_i$ and the singleton state $e_j$ have identical representations,
so no rule of any kind can tell them apart.
\end{proof}

Together, Theorem~\ref{thm:arrow} and Proposition~\ref{prop:oneway} say:

\begin{quote}
Bad analog geometry \emph{implies} a bad affine frontier. Good analog geometry does \emph{not}
imply a good one.
\end{quote}

That is a strict one-way hierarchy, with the forward arrow proved and the converse refuted by an
explicit code, and it is what makes the word ``hierarchy'' literally true here rather than a
label attached to three separate measurements.

\subsection{The nonlinear level: a witness, not a frontier}
\label{sec:witness}

The third level has no frontier to compute. ``All nonlinear decoders'' has no capacity without a
restricted class, and restricting it leads to sign-rank, which we do not open. What it does have
is a finite certificate, and it comes free from Proposition~\ref{prop:farkas}: the states
appearing in~\eqref{eq:farkas} are explicit vectors that \emph{no} affine rule can label
correctly, and Radon's theorem says the obstruction compresses to at most $d+2$ of them. A
trained network either resolves them or does not, with margin
\[
  \gamma_{\mathrm{net}}
  =\min\Big(\min_k N_i(x^{+}_k)-\theta,\;\min_l\theta-N_i(x^{-}_l)\Big),
\]
which is a sharper report than a recovery curve: not ``a solver says the hulls intersect
somewhere among the $\binom{F}{s}$ admissible states'' but a named, listed set that any third
party can rebuild from $\Phi$ and check.

The extraction is implemented (\code{lrtr.affine\_frontier.radon\_witness}) and validated by
asking a linear programme for an affine separator of exactly the extracted states and requiring
it to be infeasible; on small codes it is also checked against brute-force enumeration of every
Boolean state. Two honest qualifications. The compression to $d+2$ is \emph{not} implemented, so
the witness we return is valid but not minimal --- on the codes we have examined it comes out at
about $d+2$ states already, which is small against $\binom{F}{s}$ but is not a handful, and the
per-state margin $\gamma_{\mathrm{net}}$ is therefore a minimum over that many states. And no
result in this paper is yet derived from it; the chain
$G_1\to G_2\to G_3$ is complete as a construction, and reporting witnesses for trained models is
future work rather than a claim made here.


\section{Application: two frameworks, two invariants}
\label{sec:application}

We now apply the interface distinction to the two frameworks whose apparent tension motivated
this work. All statements imported from the source papers were checked against
arXiv:2408.05451v1 and arXiv:2409.15318v3; the audit is recorded in
Appendix~\ref{app:imported}.

\begin{lemma}[Computation-layer interface; restated from {\cite[Thm.~11]{hanni2024superposition}}]
\label{lem:comp}
Fix $s=O(1)$ and suppose the targeted superpositional AND primitive of H\"{a}nni et al.\ is
applied in a regime satisfying its stated graph-balancing, sparsity, norm and
incoming-interference hypotheses. Then the computation layer produces outgoing readout error
$\ecomp(d)=\Otilde(\sqrt{s^2/d})=\Otilde(d^{-1/2})$.
\end{lemma}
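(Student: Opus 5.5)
This lemma is a restatement of an imported result, so I would not build the error analysis from scratch. I would reduce the lemma to the source theorem (Thm.~11 of H\"{a}nni et al.) and add a short bookkeeping step that specialises it to $s=O(1)$.

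\textbf{Step 1: match hypotheses.} I would fix the notation dictionary between the source and this paper. Their feature count plays the role of $m'$ (our $F$), their sparsity parameter is our $s$, and their readout vectors form the rows of $G$. I would then list the four hypotheses of the source theorem: graph balancing, sparsity, norm bounds on the primitive's weights, and the bound on incoming interference $\varepsilon_{\mathrm{in}}$. For each, I would check that our standing assumption is exactly that the layer operates in the stated regime. This check is the audit recorded in Appendix~\ref{app:imported} against arXiv:2408.05451v1.

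\textbf{Step 2: read off the outgoing error.} The source theorem bounds the readout error of the AND layer by a sum of two terms. The first is a fresh interference term, of order $\sqrt{s^2/d}$ up to polylogarithmic factors: it comes from the $O(s^2)$ active pairwise ANDs, each read through near-orthogonal random directions in $\R^d$. The second is an incoming-error term, propagated with a constant (or polylog) amplification. Under the incoming-interference hypothesis, the second term is dominated by the first. I would make sure the polylog factors are exactly those hidden by $\Otilde$ under our convention (polynomial in $\log d$). This requires checking that the source's $\log m'$ factors become $\log d$ because $m'$ is polynomial in $d$. I expect this step to be the main obstacle: the source states bounds with explicit dependence on $m'$ and on $\varepsilon_{\mathrm{in}}$, and one must confirm both that the amplification factor does not carry hidden $d$-dependence and that $\log m'=O(\log d)$ holds in the regime of the source theorem.

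\textbf{Step 3: specialise to $s=O(1)$.} Substituting $s=O(1)$ gives $\sqrt{s^2/d}=O(d^{-1/2})$, which yields $\ecomp(d)=\Otilde(d^{-1/2})$. I would close by noting that this is an $\varepsilon$-linear statement in the sense of Section~\ref{sec:background}. It is therefore consistent with Corollary~\ref{cor:impossible}: the achieved error matches the $\Omega(d^{-1/2})$ floor up to polylogarithmic factors. It does not beat that floor, so no contradiction with Theorem~\ref{thm:floor} arises.
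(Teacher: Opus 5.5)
Your plan matches the paper's treatment. The paper does not re-prove the lemma; it imports it from H\"{a}nni et al.\ Thm.~11, whose conclusion (per the audit in Appendix~\ref{app:imported}) is already $\varepsilon_{\mathrm{out}}=\Otilde(\sqrt{s^2/d})$ under its stated hypotheses, so the only real step is substituting $s=O(1)$. Your Step~2 split into a fresh interference term and a dominated incoming-error term is a conjectural reconstruction of the source's internals that the argument does not need, while your $\log m'$ versus $\log d$ check is a fair refinement that the paper leaves implicit.
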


\begin{lemma}[Correction-layer tolerance; restated from {\cite[Thm.~21]{hanni2024superposition}}]
\label{lem:corr}
The correction layer admits incoming error
$\varepsilon_{\mathrm{in}}<K(d)\,d^{1/4}/(F^{1/2}s^{1/4})$ with $K(d)=\mathrm{polylog}(d)$, and
produces $\varepsilon_{\mathrm{out}}=O((\log d)\sqrt{s/d})$ and
$\mu_{\mathrm{out}}=\Otilde(\sqrt{s/d})$.
\end{lemma}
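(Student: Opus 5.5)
This lemma is imported from H\"{a}nni et al., not derived from anything earlier in the paper. The plan is therefore to \emph{verify} it: check that our restatement matches \cite[Thm.~21]{hanni2024superposition} as stated, and then trace why its three quantities take the scaling shown. The first step is bookkeeping. The source writes its result in its own variables, the feature counts $m,m'$ of Table~\ref{tab:notation}, with $m'$ playing the role of $F$. We substitute $m'\mapsto F$, keep $s$ as the support size, and confirm three things. The incoming-error condition in the source is a strict inequality with a polylogarithmic constant $K(d)$. Its dependence on $F$ is $F^{-1/2}$ and on $s$ is $s^{-1/4}$. The output bounds are stated with an explicit $\log d$ factor for $\varepsilon_{\mathrm{out}}$, and only up to $\Otilde$ for the coherence $\mu_{\mathrm{out}}$. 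Any factor the source hides inside $\mathrm{polylog}(d)$ we absorb into $K(d)$, and we record that absorption in Appendix~\ref{app:imported}.

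The second step is a heuristic check of the exponents using tools already in this paper, as a sanity test rather than a new proof. The correction layer reads each feature by a thresholded linear score, so it fits the setting of Theorem~\ref{thm:coherence}. Its output is a fresh random-like code with $s$ active features, so the random-support concentration of Theorem~\ref{thm:randomsupport} gives interference of order $\sqrt{s\log F/d}$. With $F$ polynomial in $d$ this is $O((\log d)\sqrt{s/d})$ up to the logarithmic power, which matches both $\varepsilon_{\mathrm{out}}$ and $\mu_{\mathrm{out}}$. For the tolerance, an incoming per-coordinate error $\varepsilon_{\mathrm{in}}$ on $F$ features projects through the correction layer's input weights. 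In the worst case this contributes noise of size roughly $\varepsilon_{\mathrm{in}}\sqrt{F}\,(s/d)^{1/4}$ at each readout. Requiring that noise to stay below a constant margin, as in the hypothesis $\norm{\Phi^{\top}\eta}_\infty<\tfrac12-\dots$ of Corollary~\ref{cor:randomcode}, gives $\varepsilon_{\mathrm{in}}\lesssim d^{1/4}/(F^{1/2}s^{1/4})$, the stated form.

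The main obstacle is that second heuristic. The $(s/d)^{1/4}$ amplification depends on the particular norm and graph-balancing structure of H\"{a}nni's correction weights, which we do not reconstruct here. If the back-of-envelope exponent disagreed with the source, we would defer to the source's statement and flag the discrepancy in the audit rather than alter the lemma. The lemma is therefore asserted on the authority of the cited theorem. The checks above only confirm that the transcription of exponents is internally consistent with the threshold-recovery mechanism of Appendix~\ref{sec:threshold}.
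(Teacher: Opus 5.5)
Your approach is the paper's: the lemma carries no proof there but is transcribed from \cite[Thm.~21]{hanni2024superposition} and audited against the source in Table~\ref{tab:imported}, which is exactly your first step. Two small corrections: that table records Thm.~21 with $m$ (not $m'$, which is the Adler--Shavit count) in the role of $F$, so the substitution is $m\mapsto F$; and your tolerance check picks the $(s/d)^{1/4}$ amplification precisely so as to reproduce $d^{1/4}/(F^{1/2}s^{1/4})$, which makes it circular, so it confirms nothing about that exponent --- only your $\varepsilon_{\mathrm{out}}$ heuristic is a genuine consistency check, and a loose one, since Theorem~\ref{thm:randomsupport} is annealed over random supports and gives $\sqrt{\log d}$ rather than $\log d$.
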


\begin{proposition}[Template compatibility threshold; derived from Lemmas~\ref{lem:comp}--\ref{lem:corr}]
\label{prop:hanni}
Fix $s=O(1)$. The published H\"{a}nni computation--correction template is certified in the
regime $F\lesssim\Otilde(d^{3/2})$, and its recursive construction emulates circuits of width
$m$ using width $d=\Otilde(m^{2/3}s^2)$, giving a matching certified scale
$\Thetatilde(d^{3/2})$. This is a template-specific certificate, not an impossibility result
for other implementations.
\end{proposition}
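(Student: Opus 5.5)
The proof composes the two imported lemmas, one stage of the template at a time.

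\textbf{Compatibility condition.} The template closes on itself only if the error the computation layer emits is admissible input to the correction layer. So I would require
\[
  \ecomp(d)\;<\;K(d)\,\frac{d^{1/4}}{F^{1/2}s^{1/4}}.
\]
Lemma~\ref{lem:comp} gives $\ecomp(d)=\Otilde(d^{-1/2})$ at $s=O(1)$. Substituting, with $s^{1/4}=O(1)$ and the polylogarithmic factor $K(d)$ absorbed into the tilde, the condition becomes $d^{-1/2}\lesssim\Otilde(d^{1/4}F^{-1/2})$, that is, $F^{1/2}\lesssim\Otilde(d^{3/4})$. Squaring gives the certified regime $F\lesssim\Otilde(d^{3/2})$.

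\textbf{Recursive closure.} Next I would check that the correction layer's output again satisfies the computation layer's incoming-interference hypothesis. Lemma~\ref{lem:corr} gives $\varepsilon_{\mathrm{out}}=O((\log d)\sqrt{s/d})$ and $\mu_{\mathrm{out}}=\Otilde(\sqrt{s/d})$, which are of the same order as the input scales assumed in Lemma~\ref{lem:comp}. Induction over layers then keeps every stage inside the hypotheses, with constants that do not accumulate.

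\textbf{Matching emulation bound.} I would quote the width statement of the recursive construction in~\cite{hanni2024superposition}: circuits of width $m$ are emulated at $d=\Otilde(m^{2/3}s^2)$. Inverting at $s=O(1)$ gives $m=\Otilde(d^{3/2})$, which is the same exponent as the compatibility regime. The two bounds together give the certified scale $\Thetatilde(d^{3/2})$.

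\textbf{Main obstacle.} The hard step is the bookkeeping in the closure, not the exponent arithmetic. Three things need care. First, the polylog factors ($K(d)$, the $\log d$ in $\varepsilon_{\mathrm{out}}$, and the hidden factors in $\Otilde$) must all be absorbable into a single tilde; I would fix explicit polylog exponents and check that none depend on $F$. Second, the graph-balancing and norm hypotheses of the AND primitive must remain satisfied at $F\sim d^{3/2}$; here I would rely on the audited restatements in Appendix~\ref{app:imported} rather than re-derive them.

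Finally, the "template-specific" qualifier requires no proof beyond observing that the argument only establishes sufficiency. Nothing in it lower-bounds what other constructions can do; Adler--Shavit reach near-quadratic capacity, so the proposition is not an impossibility result.
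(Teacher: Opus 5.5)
\textbf{The compatibility step is right; the matching step inverts the width bound in the wrong direction.} Your compatibility computation is the paper's: substitute $\ecomp(d)=\Otilde(d^{-1/2})$ into the correction tolerance, absorb $K(d)$ and $s^{1/4}$, and square to get $F\lesssim\Otilde(d^{3/2})$.

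The width statement $d=\Otilde(m^{2/3}s^2)$ is an upper bound on the width the construction \emph{needs}. At a given $d$ it therefore emulates every $m$ with $m^{2/3}\,\mathrm{polylog}(m)\le d$. Inverting it gives $m=\Omegatilde(d^{3/2})$, not $m=\Otilde(d^{3/2})$ as you wrote.

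This is not cosmetic. The $\Thetatilde(d^{3/2})$ conclusion needs two bounds of opposite direction: the compatibility regime supplies the upper bound and the construction supplies the lower bound. With two upper bounds, ``the same exponent'' yields no matching scale at all. Reversing that one inequality gives exactly the paper's proof, which closes with ``Conversely, inverting \dots{} gives $m=\Omegatilde(d^{3/2})$.''

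\textbf{The recursive-closure step is unnecessary and unsupported as written.} The paper does not use it, because the correctness of the recursion comes packaged with the imported width statement \cite[Thm.~8]{hanni2024superposition}. It also cannot be checked from what you are given. Lemma~\ref{lem:comp} as restated does not specify its incoming-interference tolerance. So the claim that the outputs of Lemma~\ref{lem:corr} satisfy it ``with constants that do not accumulate'' is an assertion about the source paper, not a deduction from the two lemmas. Either cite it there or drop it.

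Your remark that the argument only establishes sufficiency, and hence is template-specific, is fine.
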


\begin{proof}
The correction theorem applies after the computation layer when $\ecomp(d)\lesssim
\ecorr(F,d,s)$, i.e.\ $\Otilde(d^{-1/2})\lesssim K(d)\,d^{1/4}/F^{1/2}$ for $s=O(1)$.
Suppressing polylogarithmic factors, $F^{1/2}\lesssim d^{3/4}$, hence
$F\lesssim\Otilde(d^{3/2})$. Conversely, inverting $d=\Otilde(m^{2/3}s^2)$ at $s=O(1)$ gives
$m=\Omegatilde(d^{3/2})$.
\end{proof}

In the Adler--Shavit framework, exact Boolean state is restored by thresholding after every
stage. Their parameter-description lower bound $\Omega(m'\log m')$, converted to neurons under
their architectural assumption of $\Theta(n^2)$ parameters with $O(1)$ average description
length each, gives $\FrecAS(n)\le O(n^2/\log n)$; their explicit construction uses
$n=O(\sqrt{m'}\log m')$ neurons, giving $\FrecAS(n)\ge\Omega(n^2/\log^2 n)$. The remaining
logarithmic gap is open.

The two regimes do not contradict each other, and the reason is now precise. The H\"{a}nni
template maintains an approximate $\varepsilon$-linear interface, so it must feed its residual
error into a correction theorem whose tolerance is $\Otilde(d^{1/4}/F^{1/2})$; matching that
tolerance to the unavoidable $d^{-1/2}$ cross-talk scale of Theorem~\ref{thm:floor} gives the
$d^{3/2}$ threshold. Adler--Shavit instead threshold, leaving the small-error class entirely,
so Corollary~\ref{cor:impossible} does not apply to them. In the vocabulary of
Section~\ref{sec:method}: the first framework operates where the diagnostic's attainment ratio
matters, the second where its criterion-separation witness does.


\section{Experimental setup}
\label{sec:setup}

Six campaigns validate the diagnostic, from controlled random codes to a trained network.
E1--E3 establish that the implementation measures what the theory describes; E4 tests whether
\emph{learned} codes attain the floor; E5 applies the diagnostic to trained models; E6 tests
how the recovery threshold scales.

\paragraph{Environment.} All experiments run on CPU only, with no CUDA device.
Python~\EnvPython, NumPy~\EnvNumpy, SciPy~\EnvScipy, PyTorch~\EnvTorch{} on a
\EnvCpuCount-core machine; thread counts are pinned and two cores are reserved for the system.
Every campaign writes a run record with the exact command, seeds, configuration, environment,
wall-clock duration and exit status. Total measured compute is \DurTotalMin{} minutes
(Table~\ref{tab:cost}).

\paragraph{E1 --- floor on random codes.} $d\in\{16,32,64,128\}$, $F/d\in\{2,4,8,16\}$,
\EoneTrials{} independent codes per configuration, three readouts (tied, pseudoinverse and an
i.i.d.\ Gaussian readout as an adversarial control), and a harmonic tight frame at each
$(d,F)$ as the equality witness.

\paragraph{E2 --- recovery and separation at $F=d^2$.} $d\in\{64,128,256\}$,
\EtwoTrials{} trials per configuration, thresholds
$\theta\in\{0.4,0.5,0.6\}$ (\EtwoThetaCount{} values) as a sensitivity ablation.

\paragraph{E3 --- average linear energy.} $d\in\{32,64,128\}$ at $F=8d$, sparsities
$s\in\{1,2,4,8,16,32,64,128\}$, energies computed in closed form under both the Bernoulli and
the uniform-support model, with a \EthreeMcTrials-draw Monte Carlo cross-check of the closed
form.

\paragraph{E4 --- optimised codes.} $d\in\{16,32,64\}$, $F/d\in\{2,4,8,16\}$
(\EfourConfigs{} configurations), Adam at learning rate $10^{-2}$ for \EfourSteps{} steps,
\EfourRuns{} runs in total across four variants: \emph{free} ($G$ and $\Phi$ independent),
\emph{tied} ($G=\Phi^{\top}$), \emph{smooth-max}, and the \emph{uncalibrated} ablation in
which the diagonal constraint is removed.

\paragraph{E5 --- trained toy model.} The compressed-computation task
of~\cite{braun2025apd}: $d=\EfiveD$, $F=\EfiveF$, inputs with each coordinate active with
probability $p\in\{0.01,0.02\}$ and value $\sim U[-1,1]$, target $\mathrm{ReLU}(x)$,
$\hat y=W_{\mathrm{out}}\mathrm{ReLU}(W_{\mathrm{in}}x)$ trained by Adam for \EfiveSteps{} steps
under an $L^2$ and an $L^4$ loss, \EfiveSeeds{} seeds each, plus an i.i.d.\ random-code control
at the same $(d,F)$. Diagnosis uses \EfiveEvalTrials{} evaluation trials per sparsity and three
linear readouts, including a least-squares readout fitted on the evaluation distribution ---
deliberately the strongest linear baseline we can give the competing criterion.

\paragraph{E6 --- scaling.} $d\in\{128,256,512,1024\}$ at $F=d^2$, up to
$F=\EsixFmax$ features. The code is never materialised: each trial regenerates it in blocks
from a counted seed sequence and evaluates every sparsity in one pass using nested supports.

\paragraph{Statistics.} Recovery probabilities carry Wilson $95\%$ intervals. Group
comparisons in E5 use a two-sided exact Mann--Whitney $U$ test with Cliff's $\delta$ as effect
size. The scaling law in E6 is a one-parameter least-squares fit of $s_{95}=c\,d/\ln d$, the
shape being predicted by the theory and only the constant free; with \EsixFitPoints{} points
the fit is reported as a consistency check, not as an estimate with inferential force.

\paragraph{Reproducibility.} Every figure and table in this paper is generated from the raw
result files by a script, and every measured number in the text is a macro emitted from those
same files; a checker regenerates the macros, compares them against the file on disk, and fails
the build on any mismatch, on a macro the text uses but does not define, and on a macro defined
but never used. We are explicit about the limit of that guarantee: it certifies that the
\emph{values} match the saved results, not that the sentences around them draw the right
conclusion. Prose claims still need a reader. Commands are in Appendix~\ref{app:repro}.

\paragraph{A note on E6's linear side.} To keep the cost of the largest widths manageable, the
exact linear-energy statistic is computed on every fifth trial rather than every trial, so its
averages rest on fewer samples than the recovery curves (\EsixEnergyTrialsTop{} trials against
\EsixOneZeroTwoFourTrials{} at $d=\EsixDmax$). The recovery probabilities use the full budget.


\section{Earlier campaigns: E1--E6}
\label{app:earlier}

The six campaigns below precede the ones in Section~\ref{sec:campaigns} and are reported here rather
than in the body for two reasons. They establish that the diagnostic measures what the theory says it
measures, which is a precondition for the body's results rather than a result itself; and their
decoder comparisons use a fixed threshold on one side, an asymmetry the body's campaigns were
designed to remove (Appendix~\ref{app:withdrawn}, item 1). Their geometry and frontier numbers are
unaffected by that asymmetry, because neither involves a selected threshold.

One provenance wart applies to E1--E6 and we would rather name it than have it found: they were run
before the run-record policy that stamps every result with the commit and the working-tree status of
the code that produced it, so their records cite the repository's initial commit. Every campaign from
E7 onward carries exact provenance. The saved artefacts of E1--E6 are released unchanged.

\subsection{E1--E3: the diagnostic measures what the theory describes}
\label{sec:res-e1}

Across \EoneConfigs{} configurations, \EoneTrials{} trials each and three readouts ---
\EoneMeasurements{} calibrated interfaces in total --- the floor was violated
\EoneViolations{} times. This is a correctness check on the implementation rather than
evidence for Theorem~\ref{thm:floor}, and we report it as such.

The attainment ratio of the tied readout falls from \EoneTiedRatioMax{} at the lowest feature
load to \EoneTiedRatioMin{} at the highest (Table~\ref{tab:e1}, Fig.~\ref{fig:overview}a): a
random code is close to Welch-optimal in the mean-square sense once $F/d$ is large, since
$\E\langle\phi_i,\phi_j\rangle^2=1/d$ while $W(F,d)\to1/d$ from below. At $d=\EoneTopD$,
$F=\EoneTopF$ the tied ratio is \EoneTopTiedRatio{} and the pseudoinverse ratio is
\EoneTopPinvRatio, the latter being the closer of the two at high load. In all
\EoneTightFrameConfigs{} configurations the harmonic tight frame attains the floor to within
\EoneTightFrameDev{} of exact equality, confirming Proposition~\ref{prop:tight} at machine
precision. The Gaussian control, by contrast, reaches ratios as large as
\EoneGaussianRatioMax: the bound holds for it too, but nothing forces an arbitrary readout to
be anywhere near the floor, and the diagnostic's attainment ratio is therefore informative
across seven orders of magnitude rather than a formality.

Fig.~\ref{fig:overview}b shows threshold recovery at $F=d^2$. The transition moves right with
$d$, from $s_{95}=\EtwoSixFourSNF$ at $d=64$ to $s_{95}=\EtwoTwoFiveSixSNF$ at $d=256$. At
$d=256$ and $s=s_{95}$, thresholding is exact in at least $95\%$ of trials while the calibrated
linear interface retains a per-coordinate RMS error of \EtwoTwoFiveSixLinRms, which is
\EtwoTwoFiveSixLinRmsOverRoot{} times the $d^{-1/2}$ scale and above its own theoretical floor
of \EtwoTwoFiveSixLinRmsFloor. That pair of numbers is the criterion-separation witness of
Problem~\ref{prob:main}, measured.

The threshold is a hyperparameter, and $\theta=1/2$ is not a privileged choice. At
$d=\EtwoThetaDmax$ the recovery threshold moves from $s_{95}=\EtwoThetaZeroPFourSNF$ at
$\theta=0.4$ to $s_{95}=\EtwoThetaZeroPFiveSNF$ at $\theta=0.5$ and
$s_{95}=\EtwoThetaZeroPSixSNF$ at $\theta=0.6$. Raising it helps here because, with unit-norm codes and Boolean
states, active scores concentrate near one while inactive scores concentrate near zero, so a
higher cut trades false positives for false negatives favourably in this regime; the individual
interference terms are of course signed. Two limits on how far this ablation goes: the three thresholds re-decode
the \emph{same} trials rather than independent replications, so they show the sensitivity of the
decision rule and not of the sampling; and at the smallest width the effect is not merely
numerical --- at $d=64$, $\theta=0.4$ no sparsity reaches the $95\%$ level at all, so no
certificate exists there.

E3 confirms the energy scale (Fig.~\ref{fig:overview}c). Over \EthreeConfigs{} configurations
the ratio $E/(s/d)$ lies in $[\EthreeRatioMin,\EthreeRatioMax]$ --- the $s/d$ reference scale is
essentially exact for random codes --- with \EthreeViolations{} violations of either energy
floor, and the closed forms agree with a \EthreeMcTrials-draw Monte Carlo estimate to within a
relative \EthreeMcMaxRelDiff.

Measured against the \emph{proved} uniform-support floor of
Proposition~\ref{prop:uniform-energy}, however, the ratio lies in
$[\EthreeUniformOverFloorMin,\EthreeUniformOverFloorMax]$: the bound is correct but conservative
by a factor between one and two in this regime. That is expected and worth stating plainly,
since it bounds what the diagnostic can certify. Two steps of the proof give the slack: the
non-negative $q\norm{A\mathbf 1}^2$ term is discarded, and $\norm{A}_F^2$ is replaced by its
worst case $F(F-d)/d$ rather than the value realised by the code at hand. The looser the
feature load, the larger the second effect --- here $F=8d$, so $(F-d)/F=7/8$ alone accounts for
part of the gap. Algorithm~\ref{alg:two} therefore reports the \emph{exact} energy computed
from the code's own frame operator, and uses the floor only as the guarantee that the quantity
can never be zero.

\begin{figure}[t]
\centering
\includegraphics[width=\linewidth]{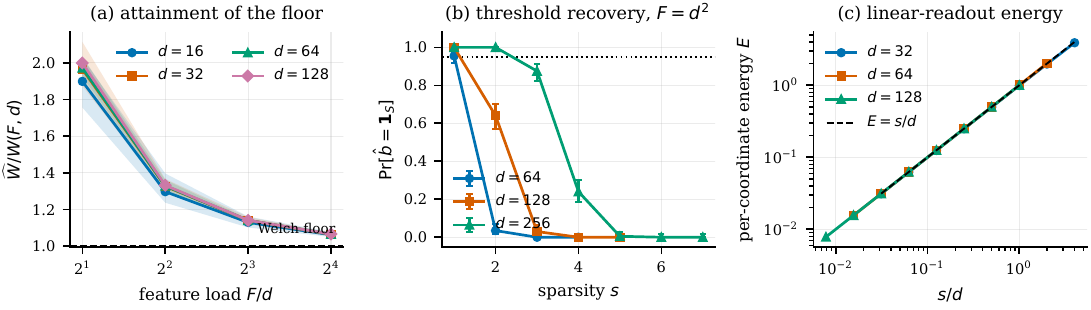}
\caption{\textbf{The Interface Diagnostic on random codes.} (a) Attainment ratio
$\widehat W/W(F,d)$ of the calibrated tied readout versus feature load; shaded bands are the
min--max range over trials, the dashed line is the floor of Theorem~\ref{thm:floor}.
(b) Exact threshold recovery at $F=d^2$ with Wilson $95\%$ intervals; the dotted line marks the
$0.95$ level defining $s_{95}$. (c) Average per-coordinate linear energy under uniformly random
supports against the $s/d$ reference. Generated by \code{scripts/make\_figures.py} from
\code{results/e1}, \code{results/e2} and \code{results/e3}.}
\label{fig:overview}
\end{figure}

\begin{table}[t]
\centering
\caption{E1: attainment ratio $\widehat W/W(F,d)$ of the calibrated tied readout for random
unit-norm codes, averaged over \EoneTrials{} trials. Values above one are required by
Theorem~\ref{thm:floor}; the approach to one with growing feature load is the content of the
measurement.}
\label{tab:e1}
\small
\begin{tabular}{rrrrr}
\toprule
& \multicolumn{4}{c}{$F/d$} \\
\cmidrule(lr){2-5}
$d$ & 2 & 4 & 8 & 16 \\
\midrule
16 & 1.899 & 1.298 & 1.130 & 1.063 \\
32 & 1.966 & 1.327 & 1.141 & 1.065 \\
64 & 1.972 & 1.329 & 1.141 & 1.066 \\
128 & 1.999 & 1.332 & 1.141 & 1.066 \\
\bottomrule
\end{tabular}

\end{table}

\subsection{E4: learned codes attain the floor}
\label{sec:res-e4}

Gradient descent under the unit-diagonal constraint drives the interface to the floor
(Table~\ref{tab:e4}, Fig.~\ref{fig:e4}). Over
\EfourFreeRuns{} runs of the free variant, spanning all \EfourConfigs{} configurations, the
attainment ratio reaches a mean of \EfourFreeRatioMean{} and never exceeds
\EfourFreeRatioMax, with \EfourFreeBelowFloor{} runs below the floor. The tied variant is
sharper still: over \EfourTiedRuns{} runs, mean ratio \EfourTiedRatioMean{} with a maximum of
\EfourTiedRatioMax{} (\EfourTiedBelowFloor{} below the floor), and a relative tight-frame
residual of \EfourTiedFrameResidualMean{} on average (worst case
\EfourTiedFrameResidualMax). In other words, the optimiser recovers the equality case of
Proposition~\ref{prop:tight}: a unit-norm tight frame emerges from gradient descent rather than
being constructed.

The smooth-max variant is the one that does \emph{not} reach its target, and the reason is
structural rather than a tuning failure. Over \EfourSoftmaxRuns{} runs it attains a mean
mean-square ratio of \EfourSoftmaxRatioMean{} (worst case \EfourSoftmaxRatioMax,
\EfourSoftmaxBelowFloor{} below the floor) and drives the maximum off-diagonal entry only to
\EfourSoftmaxRatioMaxMean{} times $\sqrt{W(F,d)}$. Equality in the maximum form requires an
equiangular tight frame, and those exist only for special $(d,F)$ --- none of our
\EfourConfigs{} configurations is guaranteed to admit one. The maximum form of the bound is
therefore approached but not attained here, which is the expected outcome and not evidence
against the bound.

The uncalibrated ablation is the most informative of the four, and its outcome is more extreme
than we expected. Removing the diagonal constraint does not lead the optimiser to a code with
low interference: it leads it to the trivial solution. Across all \EfourUncalRuns{} runs the
\emph{raw} off-diagonal energy collapses to \EfourUncalRawRatioMean{} --- the readout is driven
to zero --- and with it the diagonal gains, whose smallest absolute value reaches
\EfourUncalGainMin{} in single precision. In \EfourUncalDegenerate{} of the
\EfourUncalRuns{} runs the calibration is therefore undefined and Algorithm~\ref{alg:one}
returns \textsc{Degenerate}; in the remaining runs the calibrated interface still respects the
floor, and in no run did it fall below (\EfourUncalBelowFloor{} violations). An unconstrained
optimiser reports spectacular apparent cross-talk while destroying the interface it was
supposed to build. This is Remark~\ref{rem:calib} as an experiment, and it is the reason
step~4 of Algorithm~\ref{alg:one} exists and reports a status rather than a number.

\begin{figure}[t]
\centering
\includegraphics[width=\linewidth]{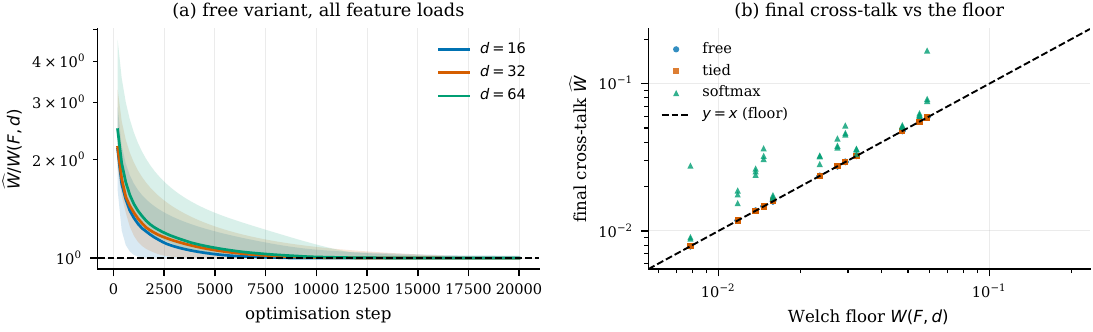}
\caption{\textbf{Optimised codes attain the Welch floor.} (a) Attainment ratio during
optimisation of the free variant; bands are min--max over configurations and seeds.
(b) Final cross-talk against the floor across every configuration and variant; the dashed line
is equality. Generated by \code{scripts/make\_figures.py} from \code{results/e4}.}
\label{fig:e4}
\end{figure}

\begin{table}[t]
\centering
\caption{E4: attainment of the floor by gradient-optimised codes, \EfourRuns{} runs over
\EfourConfigs{} configurations. ``Below floor'' counts runs whose calibrated interface fell
under the theoretical bound; it must be zero. The uncalibrated ablation has no meaningful
calibrated ratio by construction.}
\label{tab:e4}
\small
\begin{tabular}{lrrrrr}
\toprule
variant & runs & mean $\widehat{W}/W$ & max $\widehat{W}/W$ & below floor & extra \\
\midrule
free ($G$, $\Phi$ independent) & 60 & 1.0006 & 1.0176 & 0 & -- \\
tied ($G=\Phi^{\top}$) & 36 & 1.0000 & 1.0000 & 0 & $\rho_{\max}=0.0016$ \\
smooth-max & 36 & 1.5050 & 3.5230 & 0 & $\max$ ratio $=3.12$ \\
uncalibrated (ablation) & 36 & n/a & n/a & 0 & raw ratio $\geq1.5e-16$, $\min|M_{ii}|=0.0e+00$ \\
\bottomrule
\end{tabular}

\end{table}

\subsection{E5: the separation inside a trained network}
\label{sec:res-e5}

Table~\ref{tab:e5} and Fig.~\ref{fig:e5} report the diagnostic applied to the
compressed-computation models. Three findings stand out; one of them we did not anticipate, and
one prediction we had made was not borne out.

\paragraph{The $L^4$-trained network attains the floor.} Its calibrated linear interface has
attainment ratio \EfiveLfourRatioPinvMean{} under the pseudoinverse readout (range
$[\EfiveLfourRatioPinvMin,\EfiveLfourRatioPinvMax]$ over \EfiveSeeds{} seeds) and
\EfiveLfourRatioLsMean{} under the fitted least-squares readout (range
$[\EfiveLfourRatioLsMin,\EfiveLfourRatioLsMax]$), against a floor of $W=\EfiveWelchFloor$. The
$L^2$ baseline sits at \EfiveLtwoRatioLsMean{} and the random-code control at
\EfiveRandRatioLsMean. E4 had shown the floor to be reachable by direct optimisation of the
cross-talk; here it is approached by a network optimising a task loss that never mentions
interference.

Two caveats on the statistics, both cutting against reading too much into them. First, the
comparison with the random code is readout-dependent: $L^4$ is closer to the floor than an
i.i.d.\ code of the same shape under the pseudoinverse (\EfiveLfourRatioPinvMean{} versus
\EfiveRandRatioPinvMean) and under least squares, but its own trained decoder is
\emph{farther} from the floor (\EfiveLfourRatioWoutMean) than either. That last figure has no
counterpart in the control and we do not pair it with one: the random code has no trained
decoder, so its $W_{\mathrm{out}}$ column is by construction the pseudoinverse and would be
compared against a different object. Across the readouts that do admit a matched comparison,
the trained code is comparable to a random one on this axis, not better. Second, every pairwise test returns
$p=\EfiveMwPLfourLtwo$ with $|\delta|=1$, which is exactly the smallest two-sided $p$ an exact
Mann--Whitney can produce at $n=\EfiveSeeds$ per group: within-group variance is negligible, so
the test reports ``no overlap'' and nothing more. The effect sizes are what separate the cases,
and they differ by orders of magnitude --- \EfiveLtwoRatioLsMean{} versus
\EfiveLfourRatioLsMean{} for $L^2$ against $L^4$, a few percent for $L^4$ against random. No
multiplicity correction is applied across the nine tests, and at this $n$ none would survive
one.

\paragraph{The separation is real but much narrower than we predicted, and the $s_{95}$
statistic runs against us.} Our pre-registered expectation was that the network's thresholded
output would remain exact over a strictly wider sparsity range than its best linear readout.
The measurement contradicts that at the sparsities where both work. Ranking the three linear
readouts by exact-recovery rate at each $s$ and taking the best of them, the network is behind
up to $s=4$: at $s=3$ both are exact, and at $s=4$ the network recovers in
\EfiveLfourPmodelFour{} of trials while the calibrated $W_{\mathrm{out}}$ interface recovers in
\EfiveLfourPlinFour. On the $s_{95}$ statistic the linear readout is strictly ahead --
$s_{95}=\EfiveLfourSlinBestMax$ against the network's \EfiveLfourSmodelMax{} on all
\EfiveSeeds{} seeds at $p=\EfivePrimaryP$, and ahead on
\EfiveLfourSlinAheadSecondary{} of \EfiveSeeds{} seeds (tied on the remainder) at
$p=\EfiveSecondaryP$ (Table~\ref{tab:e5}). The preregistered comparison is not supported.

The network only leads further out, where both decoders are already unreliable: at $s=5$,
\EfiveLfourPmodelFive{} versus \EfiveLfourPlinFive, and at $s=6$, \EfiveLfourPmodelSix{} versus
\EfiveLfourPlinSix. E5 therefore shows the network extending the usable sparsity range past the point where the
thresholded linear readouts of its own code have collapsed, not dominating them everywhere.

The floor statement is on firmer ground, but it is worth being exact about what it is. At every
sparsity analog reconstruction carries irreducible error: at $s=3$, where the network is exact in
\EfiveLfourPmodelThree{} of trials, the best linear readout still has per-coordinate RMS error
\EfiveLfourRmsThree{} against its floor of \EfiveLfourFloorThree. That non-zero error is
Theorem~\ref{thm:floor} restated for this code, not an independent measurement --- any
rank-$d$ unit-diagonal interface has it by construction. What E5 measures is its \emph{size} for
a code that gradient descent actually produced, and how far the thresholded decoder gets before
that error stops it.

\paragraph{The $L^2$ model is maintaining a linear interface.} This is the finding we did not
plan for. In the $L^2$ models the ReLU \emph{clips} only \EfiveLtwoNegPre\% of preactivations ---
the macro is the negative fraction, so the unit is very nearly linear --- the learned decoder sits at
relative distance \EfiveLtwoWoutPinvDist{} from the pseudoinverse of the encoder, and the network's
thresholded decision agrees with that of a plain linear readout on \EfiveLtwoAgreeLinear\% of
evaluation states, which the two preceding facts are what make possible. The $L^4$ models are
elsewhere: the ReLU clips \EfiveLfourNegPre\% of preactivations, decoder at relative distance
\EfiveLfourWoutPinvDist,
and agreement with the linear readout down to \EfiveLfourAgreeLinear\%. The diagnostic thus
reproduces, from interface measurements alone and without any mechanistic reverse-engineering,
the distinction that~\cite{bhagat2026cc} and~\cite{ferreiradasilva2026l4} reached by other
means. It also explains why the $L^2$ model is bound by Theorem~\ref{thm:floor} at all: a
network that keeps an effectively linear readout \emph{is} a linear readout for these inputs,
and the best affine readout of that code sits at \EfiveLtwoRatioLsMean{} times the floor. That
number is the fitted least-squares probe's, not the network's own: by the factorisation above the
network's own readout is $\Rgeom\times\Rreadout$, which is a third smaller. Every ratio quoted in
this subsection names its readout, because the two differ by more than the effect under discussion.
Its thresholded output recovers even a single active feature only \EfiveLtwoPmodelOne{} of the
time.

\paragraph{The finding replicates at a second training sparsity.} Everything above is measured
at $p=\EfivePrimaryP$. Repeating the whole campaign at $p=\EfiveSecondaryP$ (ablation A5,
\EfiveSeeds{} fresh seeds per condition) reproduces both halves: the $L^4$ models sit at
\EfiveLfourSecRatioLsMean{} against the floor and agree with a linear readout on only
\EfiveLfourSecAgreeLinear\% of states, while the $L^2$ models sit at
\EfiveLtwoSecRatioLsMean{} and agree \EfiveLtwoSecAgreeLinear\% of the time. The interface
distinction is therefore not an artefact of one input sparsity.

\paragraph{What does not favour our hypothesis.} Three things. The $L^4$ models have a
\emph{worse} mean squared error on the training task than the $L^2$ models (\EfiveLfourMse{}
versus \EfiveLtwoMse), as expected since they optimise a different objective: the interface
advantage is not a task-performance advantage, and we do not claim it is. The $s_{95}$
prediction above failed. And the random-code control is a reminder that a good interface is not
by itself evidence of computation: its \emph{linear} readout recovers exactly with probability
\EfiveRandPlinThree{} at $s=3$, better than the $L^2$ network, simply because an i.i.d.\ code is
already near-Welch-optimal (ratio \EfiveRandRatioLsMean). What the random control lacks is the
nonlinear stage; the ``network'' column for that row is not meaningful and is reported only for
completeness.

\begin{figure}[t]
\centering
\includegraphics[width=\linewidth]{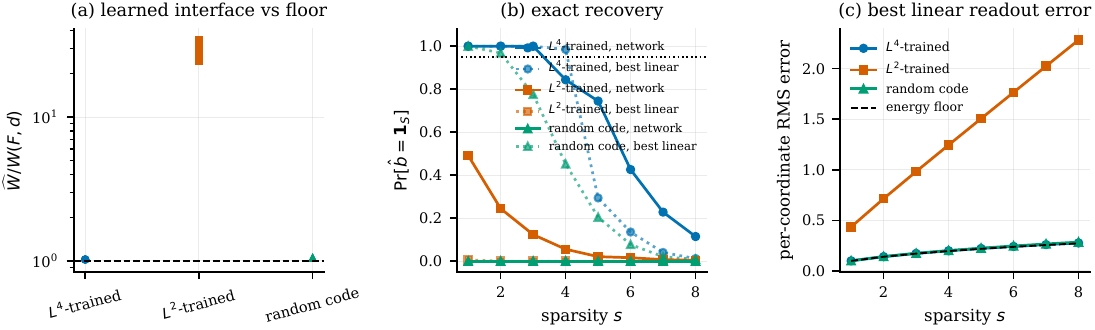}
\caption{\textbf{Interface separation inside a trained toy model of computation in
superposition.} A width-\EfiveD{} network trained to compute \EfiveF{} sparse ReLU features
(setup of~\cite{braun2025apd}; $L^4$ variant of~\cite{ferreiradasilva2026l4}); one point per
seed. (a) Attainment ratio of the learned calibrated interface against the floor of
Theorem~\ref{thm:floor}. (b) Exact recovery by the network's thresholded output (solid) versus
the best linear readout of the same code (dotted). (c) Per-coordinate RMS error of the best
linear readout against the energy floor of Proposition~\ref{prop:uniform-energy}. Generated by
\code{scripts/make\_figures.py} from \code{results/e5}.}
\label{fig:e5}
\end{figure}

\begin{table}[t]
\centering
\caption{E5: the diagnostic applied to trained models, averaged over \EfiveSeeds{} seeds at
training sparsity $p=\EfivePrimaryP$. ``Agrees with linear'' is the fraction of evaluation
states on which the network's thresholded decision coincides with that of a plain linear
readout --- the quantity that identifies which interface the model maintains.}
\label{tab:e5}
\small
\begin{tabular}{lrrrrr}
\toprule
model & $\widehat{W}/W$ & $s_{95}$ (network) & $s_{95}$ (best linear) & ReLU clipped (\%) & agrees with linear (\%) \\
\midrule
$L^4$-trained & 1.022 & 3.0 & 4.0 & 48.6 & 21.8 \\
$L^2$-trained & 28.814 & 0.0 & 0.0 & 4.6 & 99.1 \\
random code & 1.055 & 0.0 & 2.0 & 50.0 & 13.8 \\
\bottomrule
\end{tabular}

\end{table}

\subsubsection*{What the attainment ratio was measuring}
\label{sec:res-g1}

The ratios above are referenced to $W(F,d)$, and Section~\ref{sec:codefloor} showed that such a
ratio conflates two questions. Applying the factorisation~\eqref{eq:decomposition} to the same
models settles which of the two the reported numbers answered, and the answer is uncomfortable:
one of them, entirely.

Under the calibrated pseudoinverse the readout factor is $1$ to six decimals in every cell ---
\GoneLfourRreadoutPinv{} for $L^4$, \GoneLtwoRreadoutPinv{} for $L^2$,
\GoneRandRreadoutPinv{} for the random control --- which is not a measurement but
Theorem~\ref{thm:codefloor} restated, since the calibrated pseudoinverse \emph{is} the
code-specific optimum. The whole ratio is therefore $R_{\mathrm{geom}}$:
\GoneLfourRgeom{} against a reported \GoneLfourRatio{} for $L^4$, \GoneLtwoRgeom{} against
\GoneLtwoRatio{} for $L^2$, \GoneRandRgeom{} against \GoneRandRatio{} for the random code. So
the claim that the $L^4$ model comes ``within a few parts in a thousand of the floor'' is a
statement about the evenness of its leverage profile, and contains no information about its
decoder. We restate it that way.

Asking the decoder question separately reverses the ordering. Measured against the best readout
of its \emph{own} code, the $L^4$ network's trained output layer sits at
$R_{\mathrm{readout}}=\GoneLfourRreadoutWout{}$ while the $L^2$ network's sits at
\GoneLtwoRreadoutWout{} --- so on this criterion the $L^2$ model has the better decoder and the
worse code, the opposite of the reading the single ratio invites. The random control's own
readout gives exactly \GoneRandRreadoutWout{}, which is a consistency check rather than a
finding: that baseline sets $W_{\mathrm{out}}=\Phi^{+}$ by construction, so its ``own decoder''
is the code-specific optimum by definition, and any other value would indicate a bug.

The mechanism is visible in the leverage profiles, and it is the quantity
Corollary~\ref{cor:failthresh} needs. The $L^4$ code spreads leverage almost uniformly, from
\GoneLfourLeverageMin{} to \GoneLfourLeverageMax{} with coefficient of variation
\GoneLfourLeverageCv{}; the random code is comparable at \GoneRandLeverageMin{} to
\GoneRandLeverageMax{} (coefficient of variation \GoneRandLeverageCv{}); the $L^2$ code is not,
running from \GoneLtwoLeverageMin{} to \GoneLtwoLeverageMax{} with coefficient of variation
\GoneLtwoLeverageCv{}. That dispersion is exactly the excess of Proposition~\ref{prop:excess},
and it grows with the training density: at $p=0.02$ the $L^2$ geometry factor rises to
\GoneLtwoRgeomDenseP{}.

Two limits on what this licenses. The factorisation is arithmetic on
Theorem~\ref{thm:codefloor} and needs no new assumption, but it reinterprets an existing
measurement rather than adding evidence --- five seeds at one width remain five seeds at one
width. And \GoneLtwoLeverageMin{} is the number Corollary~\ref{cor:failthresh} consumes: with
$s=2$ the threshold $\min\{\tfrac12,(s-1)^2/((F-1)+(s-1)^2)\}$ is exactly $1/F$, so at
$F=\EfiveF$ it is $0.01$, and the measured minimum leverage of the $L^2$ code falls below it. The
theory therefore predicts that this code loses affine separability at $s=2$ --- from a single
leverage score, with no decoder and no probe in the argument. The corollary is applied at the
minimum leverage, which is where it is strongest and, being below a half, also where its
hypothesis holds. Whether that prediction survives at other widths is the
question Section~\ref{sec:limitations} flags as open.

\subsection{E6: a scalability stress test, not a scaling law}
\label{sec:res-e6}

This campaign was designed to measure how the recovery threshold scales, and it does not have
the resolution to do that. We report it for what it does establish --- that the measurement
itself runs at a width and feature count where the code cannot be materialised --- and withdraw
the scaling-law reading.

Scaling to $d=\EsixDmax$ at $F=\EsixFmax$ features lets the recovery threshold be traced over
a factor of eight in width (Table~\ref{tab:e6}, Fig.~\ref{fig:e6}). On the grid, $s_{95}$ grows
from \EsixOneTwoEightSNF{} at $d=128$ to \EsixOneZeroTwoFourSNF{} at $d=\EsixDmax$.

That grid statistic needs care, because it floors the crossing and floors it unevenly. The
true 95\% crossing lies between grid points, and the gap is proportionally largest where the
transition is sharp relative to the spacing --- at $d=128$ the interpolated crossing is
\EsixInterpOneTwoEight{} against a reported 1. Interpolating instead of flooring changes the
headline: growth over the sweep falls from \EsixGrowthGrid{} to \EsixGrowthInterp, against
\EsixGrowthPredicted{} for an exact $d/\ln d$ law. The floored figure therefore overstates the
growth by roughly a third, purely as an artefact of the grid, and we quote the interpolated
values from here on.

Four widths then cannot identify a law. On the interpolated crossings a one-parameter fit of
$s_{95}=c\,d/\ln d$ gives $c=\EsixInterpC$ with $R^2=\EsixInterpRsqLog$ over
\EsixFitPoints{} points, but a plain linear law $c\,d$ reaches $R^2=\EsixInterpRsqLin$ and
$c\,d/\ln^2 d$ reaches $R^2=\EsixInterpRsqLogSq$ on the same four points. The three are
indistinguishable on this evidence, the base of the logarithm is not identifiable at all, and we
therefore claim no scaling law --- neither for $d/\ln d$ nor against it. A previous version of
this work drew a $d/(16\ln d)$ reference curve here; it has been withdrawn, because its constant
is inherited from the earlier literature rather than derived anywhere in this work, and because
it sat \emph{above} the measured thresholds at every width and so was not the conservative
reference its form suggests. Corollary~\ref{cor:separation} fixes only the shape
$s\le c\,d/\log d$, for an unspecified universal $c$.

A second qualification. $s_{95}$ is a point estimate: at $d=\EsixDmax$ the campaign runs
\EsixOneZeroTwoFourTrials{} trials, so even a flawless \EsixOneZeroTwoFourTrials{}-for-%
\EsixOneZeroTwoFourTrials{} result gives a Wilson lower bound of only
\EsixTopBestWilsonLow. No sparsity at that width can be certified at the 95\% level with 95\%
confidence on this budget; the reported thresholds are best estimates, and a claim at
confidence would need more trials.

The whole campaign took \DurEsixMin{} minutes on CPU, with $d=\EsixDmax$ accounting for
\EsixOneZeroTwoFourDurationMin{} of them over \EsixOneZeroTwoFourTrials{} trials. Materialising
the code at that size would have required $4$~GB; the streaming implementation of
Algorithm~\ref{alg:ensemble} keeps the peak footprint proportional to $Bd$.

\begin{figure}[t]
\centering
\includegraphics[width=\linewidth]{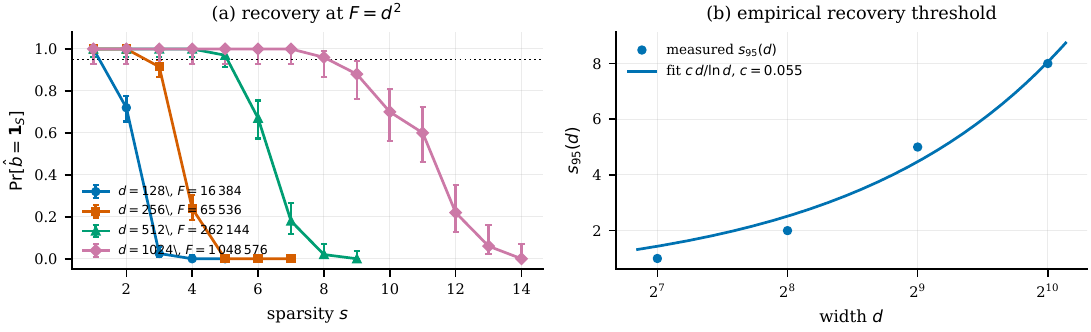}
\caption{\textbf{Threshold recovery at quadratic feature load, up to $d=\EsixDmax$}
($F=\EsixFmax$). (a) Empirical exact-recovery probability with Wilson $95\%$ intervals; a fresh
random code is drawn per trial. (b) The empirical threshold $s_{95}(d)$ with the fitted
$c\,d/\ln d$ curve shown as a fit only: four widths do not distinguish it from $c\,d$ or from
$c\,d/\ln^2d$, and no scaling law is claimed. Generated by
\code{scripts/make\_figures.py} from \code{results/e6}.}
\label{fig:e6}
\end{figure}

\begin{table}[t]
\centering
\caption{E6: recovery threshold and measured cost by width.}
\label{tab:e6}
\small
\begin{tabular}{rrrrrr}
\toprule
$d$ & $F=d^2$ & trials & $s_{95}$ & $d/(16\ln d)$ & wall clock (min) \\
\midrule
128 & 16\,384 & 200 & 1 & 1.65 & 1.5 \\
256 & 65\,536 & 200 & 2 & 2.89 & 2.2 \\
512 & 262\,144 & 100 & 5 & 5.13 & 4.9 \\
1024 & 1\,048\,576 & 50 & 8 & 9.23 & 21.3 \\
\bottomrule
\end{tabular}

\end{table}

\subsection{Cost and traceability}
\label{app:cost}

Every campaign ran on CPU alone; Table~\ref{tab:cost} reports the measured wall clock, which
totals \DurTotalMin{} minutes across the six. Table~\ref{tab:trace} closes the loop between the
claims above and the artefacts behind them: for each experimental claim it names the figure or
table that shows it, the raw result file the numbers come from, and the script and
configuration that produce that file. Together with the automatic check that every measured
number in this manuscript matches the saved results (Appendix~\ref{app:repro}), this is what
makes the reported quantities auditable rather than merely stated.

\begin{table}[t]
\centering
\caption{Measured computational cost, read from the run records. E1--E6 are CPU-only and hide any
accelerator that is present, so they cannot silently become irreproducible on a CPU-only machine.
E7 and E8 train on a GPU; every number this paper draws from them is then recomputed from their
saved weights on a CPU, so no claim here depends on access to an accelerator. \emph{jobs} counts
models for the GPU campaigns and configured jobs for the others. $^{\dagger}$Stage~A was resumed,
and a run record times only the final invocation, so this figure is the last leg rather than the
campaign.}
\label{tab:cost}
\small
\begin{tabular}{llrrlr}
\toprule
campaign & question & jobs & workers & device & wall clock (min) \\
\midrule
E1 & floor on random codes & -- & 1 & CPU & 0.6 \\
E2 & recovery at $F=d^2$ & -- & 1 & CPU & 67.5 \\
E3 & average linear energy & -- & 1 & CPU & 0.2 \\
E4 & optimised codes & 168 & 12 & CPU & 169.6 \\
E5 & trained toy model & 25 & 6 & CPU & 76.1 \\
E6 & scaling to $d=1024$ & -- & 4 & CPU & 29.9 \\
\midrule
E7-A & trained-network grid & 180 & 8 & CUDA & 306.1$^{\dagger}$ \\
E7-B & feature load and sparsity & 100 & 10 & CUDA & 193.5 \\
E7-C & $d=400$ frontier rate & 30 & 10 & CUDA & 103.8 \\
E8 & frozen code vs frozen readout & 60 & 10 & CUDA & 77.0 \\
E8 & the same at $d=200$ & 30 & 8 & CUDA & 123.2 \\
\midrule
\multicolumn{5}{l}{total} & 1147.5 \\
\bottomrule
\end{tabular}

\end{table}

\begin{table}[t]
\centering
\caption{Traceability: each experimental claim, the figure or table that shows it, the raw
result file it comes from, and the script and configuration that produce it. Reproduction
commands are in Appendix~\ref{app:repro}.}
\label{tab:trace}
\small
\begin{tabular}{p{3.5cm}p{2.3cm}p{3.4cm}p{3.6cm}}
\toprule
claim & figure / table & raw result file & script (config) \\
\midrule
The floor is respected by every calibrated interface & Tab.~\ref{tab:e1}, Fig.~\ref{fig:overview}a & \texttt{\small results/e1/raw/e1\_rows.json} & \texttt{\small experiments/e1\_welch\_floor.py} (\texttt{\small configs/e1.json}) \\
\addlinespace[2pt]
Threshold recovery succeeds where linear readout cannot be exact & Fig.~\ref{fig:overview}b & \texttt{\small results/e2/raw/e2\_profiles.json} & \texttt{\small experiments/e2\_threshold\_recovery.py} (\texttt{\small configs/e2.json}) \\
\addlinespace[2pt]
Average linear energy follows the $s/d$ scale & Fig.~\ref{fig:overview}c & \texttt{\small results/e3/raw/e3\_rows.json} & \texttt{\small experiments/e3\_linear\_energy.py} (\texttt{\small configs/e3.json}) \\
\addlinespace[2pt]
Learned codes attain the floor & Tab.~\ref{tab:e4}, Fig.~\ref{fig:e4} & \texttt{\small results/e4/raw/e4\_runs.json} & \texttt{\small experiments/e4\_optimized\_codes.py} (\texttt{\small configs/e4.json}) \\
\addlinespace[2pt]
A trained network attains the floor and separates the two criteria & Tab.~\ref{tab:e5}, Fig.~\ref{fig:e5} & \texttt{\small results/e5/raw/e5\_runs.json} & \texttt{\small experiments/e5\_trained\_toy.py} (\texttt{\small configs/e5.json}) \\
\addlinespace[2pt]
$s_{95}(d)$ grows with the $d/\ln d$ scale & Tab.~\ref{tab:e6}, Fig.~\ref{fig:e6} & \texttt{\small results/e6/raw/e6\_summary.json} & \texttt{\small experiments/e6\_threshold\_scaling.py} (\texttt{\small configs/e6.json}) \\
\addlinespace[2pt]
\bottomrule
\end{tabular}

\end{table}

\FloatBarrier


\section{Readings we withdrew, and what caused each}
\label{app:withdrawn}

This appendix exists because the alternative is worse. Over the course of this work eight readings of
our own measurements turned out not to survive a check, and in every case the reading we had chosen
was the more interesting of the available ones. The last two were found by an external adversarial
review of the submitted manuscript, after the other six had been written up here, and a second round
of that review --- of the fixes for the first two --- found a ninth defect, in the audit tooling
itself. That sequence is the most useful thing this appendix can tell a reader about how much to
trust the rest. Listing them costs us the appearance of a clean
result and buys the reader the only thing that makes the surviving claims worth anything: a record
of how hard they were pushed. The register in \code{docs/known\_defects.md} carries the code-level
detail; what follows is the manuscript-level consequence of each.

\paragraph{1. ``The two estimands disagree'' (withdrawn; cause: a defective threshold search).} The
analysis plan named two co-primary estimands, $s_{95}$ and the recovery AUC. An earlier version of
Section~\ref{sec:res-primary} reported that they disagreed --- $s_{95}$ a tie at every width, the
AUC favouring the network at $d=50$ and the probe at $d\in\{100,200\}$ --- and treated the
disagreement as the finding, on the grounds that a blunt statistic and a fine one seeing different
things is informative.

The cause was in \code{select\_thresholds}. For the \code{global} and \code{per\_feature} policies it
searched a grid of quantiles of the validation scores and returned the best point on that grid,
without ever scoring $\theta_{\mathrm{fixed}}$, the value it was meant to improve on. On these score
distributions the quantile grid could and did miss, so a decoder could be evaluated at a threshold
that loses to the default on the very validation objective the search maximises. It cost the network
up to five sparsity levels at $d=400$. The fix scores $\theta_{\mathrm{fixed}}$ first, adds a uniform
grid to the quantile grid, and asserts the post-condition that the returned threshold is at least as
good as $\theta_{\mathrm{fixed}}$ on validation; \code{tests/test\_threshold\_selection.py}
re-implements the old search and requires it to fail a bimodal case the new one passes. Both sides of
the comparison moved when it was refitted, which is the reason we trust the correction: it was not a
change that could only help one arm.

With the defect fixed the two estimands agree at all four widths and the disagreement was an
artefact. We report the agreement, and this paragraph, rather than quietly substituting one for the
other.

\paragraph{2. ``The ReLU discards affinely decodable information'' (withdrawn; same cause).} A probe
fitted on the preactivation $\Phi b$ once beat the network by a full sparsity level, and we read that
as a measurement of what the nonlinearity throws away --- a claim we liked, because it reframed a
decoder comparison as a statement about representations. It does not survive the same fix. The
pre-ReLU probe's advantage is now \PrimLfourFiftyPreDiff{} at $d=50$
([\PrimLfourFiftyPreCiLow, \PrimLfourFiftyPreCiHigh], \PrimLfourFiftyPreTies{} of \EsevenSeeds{}
seeds tied), \PrimLfourHundredPreDiff{} at $d=100$ and \PrimLfourTwoHundredPreDiff{} at $d=200$, and
at $d=400$ it reverses: the network leads the \emph{pre}-ReLU probe by
\PrimLfourFourHundredPreDiff{} on \PrimLfourFourHundredPreNetWins{} of \ScSeeds{} models. A quarter
of a sparsity level, with most seeds tied and the sign flipping at the largest width, does not
support a claim about what a nonlinearity discards.

What does survive is confined to the untrained arm, where the effect is an order of magnitude larger
and unanimous: the pre-ReLU probe beats the network by \PrimRandFiftyPreDiff{},
\PrimRandHundredPreDiff{}, \PrimRandTwoHundredPreDiff{} and \PrimRandFourHundredPreDiff{} levels at
the four widths, on every one of the \PrimRandFourHundredPreProbeWins{} and \EsevenSeeds{} models per
cell. On an untrained code the thresholded post-ReLU readout is far worse than an affine read of the
preactivation, and training closes almost all of that gap. We state it in that scoped form and note
that it is a comparison across two stages of one pipeline, not a decoder comparison
(Section~\ref{sec:probes}).

\paragraph{3. ``The frozen-code arm is a failed optimisation'' (withdrawn; cause: scoring an arm
under an objective it was not trained on).} In E8 the arm with a frozen random code trains only
$W_{\mathrm{out}}$ and ends at $\Rreadout\approx\EeightFrozenTwoHundredRreadout$, nearly twice the
cross-talk of the best readout of the code it was given. Since $\Phi^{+}$ attains that optimum for
every code, the obvious reading is that gradient descent failed at a convex-looking problem.

We checked it instead of asserting it. Scored under the loss the arm actually trained on, in its own
post-ReLU representation, on fresh draws from a seed no training used, the frozen $W_{\mathrm{out}}$
\emph{beats} $\Phi^{+}$ by a factor of \EeightFrozenTwoHundredTaskGain{} at $d=200$ on
\EeightFrozenTwoHundredTaskWins{} of \EeightSeeds{} models. The cross-talk is bought, not lost, and
the correct reading is about what co-adaptation does to two objectives rather than about optimisation
difficulty. Section~\ref{sec:res-primary} now says that.

\paragraph{4. ``The advantage of training grows with width'' (withdrawn; cause: choosing the summary
after seeing the result).} Both the trained and the untrained frontier grow with width, so at three
widths it was natural to report that training's advantage grows too --- and as a ratio over
$d\in\{50,100,200\}$ it does. The fourth width breaks it. In absolute sparsity levels, the
operationally meaningful unit, the margin is \MarginFiftyGap, \MarginHundredGap,
\MarginTwoHundredGap, \MarginFourHundredGap: it peaks at $d=200$ and falls, by more than the sampling
noise (a two-sample bootstrap over the independently trained arms gives \MarginDrop{} with interval
[\MarginDropCiLow, \MarginDropCiHigh]). As a ratio it falls at every step.

When we first found the fourth width's numbers we reassured ourselves with the ratio, which was the
reading that preserved the claim. That is the same move as picking a summary statistic after seeing
the result, and we record it as such in \code{docs/decision\_log.md} (D20). The manuscript now leads
with the reading that does not survive and states the surviving claim in its narrowed form: training
buys frontier at every width we measured, and the amount it buys stops growing by $d=400$.

\paragraph{5. ``The subset frontier is a justified shortcut'' (withdrawn; cause: a sufficient
condition used as if it were necessary).} Evaluating $\kappa_i$ for every feature costs $F$ linear
programmes, so early campaigns evaluated a subset chosen by Theorem~\ref{thm:arrow} and reported the
minimum over it as an upper bound. The bound is accurate --- it overestimates by
\SubsetFourHundredOver{} on average even at $d=400$ --- but its justification is not: the theorem is
sufficient for a feature to be hard, not necessary, so the subset need not contain the true argmin.
On untrained codes it always does. On trained codes it drifts, and it drifts with width: the median
leverage rank of the true argmin runs \SubsetFiftyRank, \SubsetHundredRank, \SubsetTwoHundredRank,
\SubsetFourHundredRank, and at $d=400$ the subset locates the argmin in \SubsetFourHundredFound{} of
\ScSeeds{} trained models.

The bias is one-sided and asymmetric between the arms, so a margin computed from subset values is
biased in favour of training by an amount growing with the variable under study. Every frontier
number in this paper is therefore the exact minimum over all $F$ features. We would not use the
shortcut past $d=200$ again, and we report the accuracy and the invalidity together because the first
without the second is how a shortcut survives.

\paragraph{6. Three theorem statements, false as written (corrected; cause: a proof establishing less
than its statement claimed).} An external adversarial audit found that
Theorem~\ref{thm:arrow} and Corollary~\ref{cor:failthresh} were stated for all $h_i<1$ while their
shared proof establishes them only for $h_i\le 1/2$, with explicit unit-norm counterexamples at
$d=2$, $F=3$; and that a proposition concluded a feature with $\kappa_i=1$ is separable at $s=1$ when
the frontier statement requires $\kappa_i>s$, so the correct answer is that it is separable at no
positive sparsity. The statements are corrected here and the counterexamples are regression tests
(\code{tests/test\_arrow\_counterexamples.py}).

No measurement was affected: the corollary is only ever applied at $\min_i h_i$, and the largest
$\min_i h_i$ over all \KdThreeModels{} stage A models is \KdThreeMaxLevMin, inside the valid
region. That is why the tests
passed --- they sampled only the region where the statement is true, which is a gap in the tests and
not a false test. An earlier preprint carries the incorrect statements and will be corrected with an
explicit erratum rather than a silent replacement.

\paragraph{7. ``Stage B repeats stage A's protocol exactly, and denser training states buy frontier''
(withdrawn; cause: comparing two cells without checking they had the same shape).} Stage B was
described in the submitted manuscript as an independent replication of two stage A cells plus a
clean one-variable sparsity contrast. Both halves were false. Its $p=0.01$ cells are at
$F=\SbLoadHigh d$, twice stage A's load, so they are not repeats of anything; and the difference in
effect size between the two campaigns, which we attributed to the imprecision of a cell mean over ten
seeds, is a design difference. Worse, the ``cleanest small result in the campaign'' --- that
$\kappa_{\min}$ rises by a third with the training sparsity --- compared an $F=4d$ cell against an
$F=2d$ one. Made at matched load the comparison reverses: \SbMatchPtwoKappa{}
[\SbMatchPtwoKappaLo, \SbMatchPtwoKappaHi] at $p=0.02$ against \SbMatchPoneKappa{}
[\SbMatchPoneKappaLo, \SbMatchPoneKappaHi] at $p=0.01$, per-model ranges disjoint, in the direction
opposite to the claim.

The claim is withdrawn and Section~\ref{sec:stageb} now reports the null. What survives, correctly
described, is a genuine load axis: the loss--geometry separation and the sign structure of the
decoder comparison hold at twice the feature load. The structural cause is worth more than the
correction: neither \code{primary\_comparison.json} nor \code{all\_feature\_frontier.json} recorded
$F$, so the number distinguishing the two designs was absent from the files every table is generated
from, and no check could have caught the error. Both now record it, and Table~\ref{tab:stageb} carries
it as a column.

\paragraph{8. ``The ceiling analysis evaluates the same objective the selection maximised''
(corrected; cause: the seventh instance of the trap in items 1 and 5).} Section~\ref{sec:probes}
scores the network's own map under the campaign's probe-selection objective. The campaign maximises a
normalised trapezoid over the validation recovery curve; the script computed the plain mean over grid
points, and its own documentation asserted that the mean \emph{was} the selection objective. On a
uniform integer grid the two differ by endpoint reweighting of the same order as the smallest gap the
analysis reports.

Recomputed under the correct functional the result survives and strengthens --- the gaps grow, the
count is unchanged --- so nothing in Section~\ref{sec:probes} is withdrawn. Two things move. The
$d=50$ cell is no longer indistinguishable from zero, so it is no longer offered as a consistency
check. And the $L^2$ cells, which under the wrong functional showed a systematic positive gap, sit at
essentially zero, which is the better outcome: the gap is positive exactly where the network leads,
absent where neither decoder recovers anything, and strongly negative where the probe leads. We
record it because a corrected measurement that happens to help is exactly the kind a reader should be
told about.

\paragraph{What the pattern says.} Five of the nine defects behind them are the same trap in five different code paths:
an operating point or an objective chosen by maximising something, with no check that it is the thing
it claims to be. Two are a reading chosen from among several that the data permitted, selected after
the fact for being the stronger claim. One --- item 7 --- is a comparison made without checking that
the two things compared had the same shape, which is the same failure at the level of experimental
design rather than of code. We mention the count because a reader deciding how much weight to put on
this paper's surviving claims should know both the rate at which we found our own errors and the
direction they pointed. Only the first pattern is one that more tests would have caught; the last was
caught by recording the variable that distinguished the designs, which is now done.

\section{Reproducibility statement}
\label{app:reproducibility}

Everything below is anonymised for review. An anonymised archive of the repository accompanies this
submission; the public repository, DOI and citation metadata are supplied at camera-ready.

\paragraph{What is released.} The code that produced every number, the configuration of every
campaign, the raw JSON result of every model, the trained weights of all 400 networks in the
controlled campaigns, and the sources of this manuscript. Nothing in the paper is computed by a
script that is not released, and no measured number in the text is typed by hand: every one is a
macro generated from \code{results/} by \code{scripts/make\_numbers.py}, and
\code{scripts/check\_manuscript\_numbers.py} regenerates every generated table and every macro
definition from \code{results/} and exits non-zero on any mismatch. It is run as part of the
release check and its failure modes are tested: injecting a wrong literal into a generated table, or
an undefined macro of a generated family into the source, both make it fail.

\paragraph{Provenance.} Every campaign from Section~\ref{sec:campaigns} onward writes a run record
carrying the exact command line, the seeds, the resolved configuration, the environment, the
wall-clock duration, the exit status, and the commit hash together with the working-tree status of the
code that produced it --- with the list of modified files when the tree was dirty. The earlier
campaigns of Appendix~\ref{app:earlier} predate that policy and cite the repository's initial commit,
which is stated there rather than hidden. Every third-party dictionary analysed in
Section~\ref{sec:dictionaries} is recorded with the SHA-256 of the checkpoint file actually read.

\paragraph{Cost.} The full pipeline is reproducible on commodity hardware; the released cost table
(Appendix~\ref{app:cost}) gives the measured wall-clock and device of every campaign. The dictionary
measurements of Section~\ref{sec:dictionaries} cost seconds per dictionary and need no accelerator,
which is why a reader can check the paper's main empirical claim without rerunning any training.

\paragraph{What a reader should be able to check independently.} Three things, in increasing cost.
The identity of Proposition~\ref{prop:leverage} against the implementation, from the released
\code{results/sae/derived/sae\_axes.json} alone. The dictionary measurements, by downloading the
listed checkpoints and running one script. And the controlled campaigns, either from the released
weights --- which is how four of the analyses in Section~\ref{sec:campaigns} were computed, months
after the runs --- or by retraining from the recorded seeds. The probe-ceiling analysis of
Section~\ref{sec:probes} is the cheapest of the four and the most useful to re-run adversarially: it
recomputes both sides of the paper's own decoder comparison from the released weights and the
campaign's own seeds, so reproducing Table~\ref{tab:primary} to within \CeilReproMaxDelta{} of a
sparsity level is a check on that comparison as much as on the ceiling.

\paragraph{Known defects.} A register of every defect found during this work, what each invalidated
and what was done about it, is released with the code. The manuscript-level consequences are in
Appendix~\ref{app:withdrawn}. Two of the entries invalidated readings that appear in an earlier
preprint of this work, which will carry an explicit erratum rather than a silent replacement.

\section{Open problem}
\label{sec:open}

Corollary~\ref{cor:impossible} shows that no reset staying inside the small-error
$\varepsilon$-linear class can improve the worst-case cross-talk scale beyond $d^{-1/2}$ when
$F\gg d$, while Theorem~\ref{thm:randomsupport} shows thresholded recovery survives to
$s=O(d/\log d)$ at quadratic load. Any interpolation beyond the H\"{a}nni template must
therefore leave the small-error linear class. This makes one question concrete.

\begin{conjecture}[Generic robust threshold reset]
\label{conj:reset}
Fix $s=O(1)$. There are constants $c,C,K>0$ such that for every $d$ and every
$F\le c\,d^2/\log^C d$ there exist a code $\Phi\in\R^{d\times F}$, an input readout
$R_{\mathrm{in}}$, a width-$\Otilde(d)$ reset module $\mathcal R_\Phi:\R^d\to\R^d$ and a
scoring map $S$ such that, for every $s$-sparse $b$ and every $x$ with
$\norm{R_{\mathrm{in}}x-b}_\infty\le K\,d^{1/2}/(F^{1/2}s^{1/4})$, the output
$z=\mathcal R_\Phi(x)$ is composable with the next stage and satisfies
$b_i=1\Rightarrow(Sz)_i\ge3/4$ and $b_i=0\Rightarrow(Sz)_i\le1/4$.
\end{conjecture}

The incoming-error tolerance is the essential clause: a reset theorem for clean inputs
$x=\Phi b$ alone would not support recursive composition. By
Corollary~\ref{cor:impossible} any proof must use an explicitly nonlinear or thresholded
decoding mechanism, as Adler--Shavit do.

\section{Conclusion}
\label{sec:conclusion}

We proposed the Interface Diagnostic, a computable procedure that decides which decoding
interface an overcomplete code or a trained network maintains, and how far it is from the best
possible one. It rests on a rank--trace floor for unit-diagonal linear readouts, in a
gain-normalised form that bounds the interference-to-gain ratio, and on a separation between the
linear and threshold criteria proved under one and the same sparse-state model. Its mean-square
statistics run in $O(Fd^2)$ time and $O(d^2)$ memory and reach $F\approx10^6$ features on a
laptop CPU.

Two additions make the diagnosis about a code rather than about an ensemble. Computing a code's
own floor in closed form splits any attainment ratio into a geometry term and a decoder term, and
the split is deflationary: under the calibrated pseudoinverse the decoder term is one by
construction, so the ratio a reader is invited to read as a decoder result is nothing of the
kind. The trained $L^4$ network's \EfiveLfourRatioPinvMean{} and the $L^2$ baseline's factor
\EfiveLtwoRatioLsMean{} are statements about leverage profiles. And for support recovery, one
linear programme per feature returns the largest sparsity at which any affine probe can detect
that feature, with a certified margin when it succeeds and a checkable certificate when it fails.
Leverage links the two: low leverage forces the affine frontier down, with no decoder in the
argument, and an explicit code refutes the converse, so the hierarchy holds in one direction
only.

Applied to 180 models, the instrument returns one result that resists summary and two that do not.
The first: given the same representation and the same threshold treatment, the comparison between the
network and a fitted affine probe has no single sign. It is positive under $L^4$ and grows with width
to \ScPrimDiff{} levels in \ScPrimNetWins{} of \ScSeeds{} models at $d=400$; it is exactly zero under
$L^2$, where both decoders fail together; and it is negative by up to
\PrimRandFourHundredDiff{} levels on a code that was never trained. The \PrimTies{} ties out of
\PrimModels{} are therefore not evidence of equivalence --- \PrimLtwoJointFailures{} of them are
joint failures --- and the $L^4$ lead is not evidence of nonlinear expressive power either, since the
probe class contains the network's own decoder (Section~\ref{sec:probes}). What the comparison
identifies is the code, not the decoder. The first positive: what separates
models is the loss, not the architecture --- $L^2$ pushes $R_{\mathrm{geom}}$ to
\SALtwoTwoHundredRgeom{} and collapses its frontier to $1$, while $L^4$ holds
\SALfourTwoHundredRgeom{} and a frontier of \FullLfourTwoHundredKappaMin{}. The second: that
separation is bought by training the code and not the readout, since a frozen random code with a
fully trained readout keeps an untrained code's frontier.

Two things we decline to claim. There is no scaling law here: the threshold-recovery campaign reaches
$d=\EsixDmax$ at about a million features, which establishes that the measurement scales, but four
widths do not distinguish $c\,d/\ln d$ from the alternatives. And proximity to the rank--trace floor
is not evidence of learned structure, because an untrained code already achieves it.

The broader lesson is that capacity questions for overcomplete representations are not settled by
counting features, nor by measuring distance to a bound that random codes already meet. They depend
on which decoding interface a particular code maintains, and on which representation is being read ---
both of which are now measurable, per feature and exactly, for the code in hand.

\section{Reproduction}
\label{app:repro}

From a clean checkout:

\begin{verbatim}
python -m venv .venv && . .venv/bin/activate     # Windows: .venv\Scripts\activate
python -m pip install -r requirements.txt
python -m pytest -q tests/                       # 370 tests
bash scripts/run_all.sh                          # Windows: scripts\run_all.ps1
\end{verbatim}

Individual campaigns, each of which writes \code{results/<id>/run\_record.json} with the exact
command, seeds, environment, duration and exit status:

\begin{verbatim}
python experiments/e1_welch_floor.py
python experiments/e2_threshold_recovery.py
python experiments/e3_linear_energy.py
python experiments/e4_optimized_codes.py
python experiments/e5_trained_toy.py
python experiments/e6_threshold_scaling.py
\end{verbatim}

Adding \code{--smoke} to any of them runs a reduced configuration end to end in seconds.
Figures, tables and the generated numbers are rebuilt with
\code{scripts/make\_figures.py}, \code{scripts/make\_tables.py} and
\code{scripts/make\_numbers.py}; \code{scripts/check\_manuscript\_numbers.py} verifies that
every measured number in this manuscript matches the saved results and exits non-zero
otherwise. E6 writes results per width and per chunk and resumes an interrupted campaign
without recomputing completed trials.

\section{Verification of imported statements}
\label{app:imported}

Every statement imported from the two source frameworks was checked against the published
version. Table~\ref{tab:imported} records the audit.

\begin{table}[h]
\centering
\caption{Audit of imported statements.}
\label{tab:imported}
\small
\begin{tabular}{p{3.1cm}p{4.6cm}p{4.6cm}}
\toprule
source & as printed there & as used here \\
\midrule
\cite[Thm.~11]{hanni2024superposition} &
targeted superpositional AND, precision $\varepsilon_{\mathrm{in}}\to\varepsilon_{\mathrm{out}}$
with $\varepsilon_{\mathrm{out}}=\Otilde(\sqrt{s^2/d})$ &
Lemma~\ref{lem:comp}, $\ecomp(d)=\Otilde(d^{-1/2})$ at $s=O(1)$ \\
\addlinespace[2pt]
\cite[Thm.~14]{hanni2024superposition} &
$\varepsilon^{(1)}=\Otilde(\max\{s\mu,\sqrt{s\varepsilon},\sqrt{s/d}\})$,
$\mu^{(1)}=\Otilde(\max\{\sqrt{1/d},\mu\})$ &
quoted in Sec.~\ref{sec:application} only to note it does not by itself preserve the
$\Otilde(d^{-1/2})$ invariant \\
\addlinespace[2pt]
\cite[Cor.~15]{hanni2024superposition} &
relaxes the near-sphere hypothesis of Thm.~14 at the cost of depth $1\to3$ &
not used for a capacity claim \\
\addlinespace[2pt]
\cite[Thm.~21]{hanni2024superposition} &
tolerance $\varepsilon<K\,d^{1/4}/(m^{1/2}s^{1/4})$, output
$\varepsilon^{(1)}=O(\log(d)\sqrt{s}/\sqrt d)$, $\mu^{(1)}=\Otilde(\sqrt s/\sqrt d)$ &
Lemma~\ref{lem:corr} with $m$ in the role of $F$ \\
\addlinespace[2pt]
\cite[Thm.~8]{hanni2024superposition} &
width $d=\Otilde(m^{2/3}s^2)$, depth $2L$ &
Proposition~\ref{prop:hanni}, inverted at $s=O(1)$ \\
\addlinespace[2pt]
\cite{adler2024superposition} &
$\Omega(\sqrt{m'}\log m')$ neurons and $\Omega(m'\log m')$ parameters; construction with
$O(\sqrt{m'}\log m')$ neurons; at most $O(n^2/\log n)$ features with $n$ neurons &
Sec.~\ref{sec:application}; the parameter-to-neuron conversion is flagged as an architectural
assumption of that model \\
\addlinespace[2pt]
\cite[Thm.~2.3]{strohmer2003grassmannian} &
$\max_{i\neq j}|\langle\phi_i,\phi_j\rangle|\ge\sqrt{(F-d)/(d(F-1))}$ for unit-norm systems &
the Gram case of Theorem~\ref{thm:floor}; the reduction is noted in Sec.~\ref{sec:frames} \\
\bottomrule
\end{tabular}
\end{table}

\section{Auxiliary proofs}
\label{app:proofs}

\subsection{Sub-Gaussian interference}

Let $u,u_1,\dots,u_m$ be independent uniform unit vectors in $\R^d$. Conditionally on $u$, each
$X_j=\langle u,u_j\rangle$ is distributed as the first coordinate of a uniform point on
$S^{d-1}$, hence has mean zero and is sub-Gaussian with variance proxy $C_0/d$ for a universal
$C_0$~\cite[Thm.~3.4.6]{vershynin2018hdp}. The $X_j$ are conditionally independent, so their
sum is sub-Gaussian with proxy $C_0m/d$ and
$\Pr\{|\sum_j X_j|>t\}\le2\exp(-c_0dt^2/m)$ with $c_0=1/(2C_0)$. This is the estimate used in
Theorem~\ref{thm:randomsupport}.

\subsection{The Bernoulli variant of the energy floor}

Under $b_i\sim\mathrm{Bernoulli}(p)$ independently,
$\E[bb^{\top}]=p(1-p)I_F+p^2\mathbf 1\mathbf 1^{\top}$, so
$\E_b\norm{Ab}^2=p(1-p)\norm{A}_F^2+p^2\norm{A\mathbf 1}^2\ge p(1-p)F(F-d)/d$ by
Theorem~\ref{thm:floor}. With $p=s/F$ and $s\le F/2$ this gives
$\frac1F\E_b\norm{Ab}^2\ge\frac{s(F-d)}{2dF}=\Omega(s/d)$, matching
Proposition~\ref{prop:uniform-energy} up to the $(F-1)$ versus $F$ denominator. Both variants
are computed by the released code and both are reported in E3.

\section{The nuclear-norm inequality}
\label{app:nuclear}

For $M\in\R^{F\times F}$ with singular value decomposition
$M=\sum_{k=1}^{r}\sigma_k u_k v_k^{\top}$,
\[
  |\operatorname{tr}(M)|=\Big|\sum_k\sigma_k\,u_k^{\top}v_k\Big|
  \le\sum_k\sigma_k|u_k^{\top}v_k|\le\sum_k\sigma_k=\nucnorm{M},
\]
and by Cauchy--Schwarz on $(\sigma_1,\dots,\sigma_r)$,
$\nucnorm{M}^2\le r\sum_k\sigma_k^2=r\norm{M}_F^2$. With $r\le d$ this gives
$|\operatorname{tr}(M)|^2\le d\norm{M}_F^2$, as used in Theorem~\ref{thm:floor}. Note that the
identity $\operatorname{tr}(M)=\sum_k\sigma_k$ holds only for positive semidefinite matrices and
is \emph{not} used.

\section{Reference scales}
\label{app:scales}

The scales appearing in this literature count different objects and should not be read as a
capacity ordering. Writing $F_{\mathrm{CS}}(d,\alpha)=d\,g(\alpha)$ with
$g(\alpha)=1/((1-\alpha)\ln(1/(1-\alpha)))$ for compressed-sensing-style linearly decodable
storage, $F^{\mathrm{H}}_{\mathrm{cert}}(d)=\Thetatilde(d^{3/2})$ for the H\"{a}nni template
certificate, $L_{\mathrm{AS}}(d)=d^2/\log^2 d$ and $U_{\mathrm{AS}}(d)=d^2/\log d$ for the two
sides of the Adler--Shavit bracket, and $N_{\mathrm{JL}}(d;\varepsilon)=\exp(\Theta(d
\varepsilon^2))$ for Johnson--Lindenstrauss packing~\cite{JL1984}, one has
\[
  F_{\mathrm{CS}}\ll F^{\mathrm{H}}_{\mathrm{cert}}\ll L_{\mathrm{AS}}
  \lesssim\FrecAS\lesssim U_{\mathrm{AS}}\ll N_{\mathrm{JL}}
\]
as $d\to\infty$. This does not mean that recursive computation beats storage:
$N_{\mathrm{JL}}$ counts passively packable states, $F_{\mathrm{CS}}$ counts linearly decodable
features through a compressed bottleneck, and the middle two count recursively computable
Boolean features under two different published frameworks. Michaud et
al.~\cite{michaud2025manifolds} give a complementary reason why an observed dictionary size may
fall below the template scale: if feature manifolds consume capacity, the relevant comparison
is the effective dimension rather than the raw feature count.

\section{Sparse-autoencoder context}
\label{app:sae}

As background only, \citet{borobia2026pruning} train TopK sparse
autoencoders ($k=64$, expansion $8d$) on three language models and observes no dead features at
$F=8d$, placing those dictionaries well below both the $d^{3/2}$ template scale and the
near-quadratic Adler--Shavit scale. Sparse-autoencoder dictionary size is a reconstructive
quantity and is not a measurement of recursive computation capacity; these numbers are not used
in any proof or experiment of the present paper, and are mentioned only to indicate where
current empirical dictionaries sit relative to the scales of Appendix~\ref{app:scales}.
Liu et al.~\cite{liu2025scaling} and Elhage et al.~\cite{elhage2022superposition} provide
further context on why strong superposition regimes are of interest.

\end{document}